\newtheorem{theorem}{Theorem}
\newtheorem{lemma}{Lemma}
\newtheorem{proposition}{Proposition}
\theoremstyle{definition}
\newtheorem{remark}{Remark}
\newtheorem*{assumption*}{\assumptionnumber}
\providecommand{\assumptionnumber}{}
\newcommand{\minimize}{\mathop{\mathrm{minimize}}}
\def\R{\mathbb{R}}
\def\E{\mathbb{E}}
\def\tr{\mathrm{tr}\,}
\def\Cov{\mathrm{Cov}}
\def\Var{\mathrm{Var}}
\def\col{\mathrm{col}}
\def\nul{\mathrm{null}}
\def\diag{\mathrm{diag}}
\def\hbeta{\hat{\beta}}
\def\ie{i.e.}
\def\eg{e.g.}
\def\cf{cf.}
\def\hSigma{\hat\Sigma}
\def\ridge{\mathrm{ridge}}
\def\gf{\mathrm{gf}}
\def\sgf{\mathrm{sgf}}
\def\Bias{\mathrm{Bias}}
\def\Risk{\mathrm{Risk}}
\def\ep{\epsilon}
\newcommand{\bitem}{\begin{itemize}}
\newcommand{\eitem}{\end{itemize}}
\newcommand{\benum}{\begin{enumerate}}
\newcommand{\eenum}{\end{enumerate}}
\newcommand{\beq}{\begin{equation}}
\newcommand{\eeq}{\end{equation}}
\newcommand{\beqs}{\begin{equation*}}
\newcommand{\eeqs}{\end{equation*}}
\newcommand{\bt}{\beta(t)}
\def\E{\mathbb{E}}
\newcommand{\inn}{\textrm{in}}
\icmltitlerunning{The Implicit Regularization of Stochastic Gradient Flow for Least Squares}
\begin{document}

\twocolumn[
\icmltitle{The Implicit Regularization of Stochastic Gradient Flow for Least Squares}




\begin{icmlauthorlist}
\icmlauthor{Alnur Ali}{stanford}
\icmlauthor{Edgar Dobriban}{penn}
\icmlauthor{Ryan J.~Tibshirani}{cmu}
\end{icmlauthorlist}

\icmlaffiliation{stanford}{Stanford University}
\icmlaffiliation{penn}{University of Pennsylvania}
\icmlaffiliation{cmu}{Carnegie Mellon University}

\icmlcorrespondingauthor{Alnur Ali}{alnurali@stanford.edu}
\icmlcorrespondingauthor{Edgar Dobriban}{dobriban@wharton.upenn.edu}

\icmlkeywords{Machine Learning, ICML}

\vskip 0.3in
]



\printAffiliationsAndNotice{}  

\begin{abstract}
We study the implicit regularization of mini-batch stochastic gradient descent, when applied to the fundamental problem of least squares regression.  We leverage a continuous-time stochastic differential equation having the same moments as stochastic gradient descent, which we call \textit{stochastic gradient flow}.  We give a bound on the excess risk of stochastic gradient flow at time $t$, over ridge regression with tuning parameter $\lambda = 1/t$.  The bound may be computed from explicit constants (\eg, the mini-batch size, step size, number of iterations), revealing precisely how these quantities drive the excess risk.  Numerical examples show the bound can be small, indicating a tight relationship between the two estimators.  We give a similar result relating the coefficients of stochastic gradient flow and ridge.  These results hold under no conditions on the data matrix $X$, and across the entire optimization path (not just at convergence).
\end{abstract}

\section{Introduction}
Stochastic gradient descent (SGD) is one of the most widely used optimization algorithms---given the sizes of modern data sets, its scalability and ease-of-implementation means that it is usually preferred to other methods, including gradient descent \citep{bottou1998online,bottou2003stochastic,zhang2004solving,bousquet2008tradeoffs, bottou2010large, bottou2016optimization}.

A recent line of work \citep{nacson2018stochastic,gunasekar2018characterizing,soudry2018implicit,suggala2018connecting,ali2018continuous,PBL19,ji2019implicit} has shown that the iterates generated by gradient descent, when applied to a loss without any explicit regularizer, possess a kind of implicit $\ell_2$ regularity.  Implicit regularization is useful because it suggests a computational shortcut: the iterates generated by sequential optimization algorithms may serve as cheap approximations to the more expensive solution paths associated with explictly regularized problems.  While a lot of the interest in implicit regularization is new, its origins can be traced back at least a couple of decades, with several authors noting the apparent connection between early-stopped gradient descent and $\ell_2$ regularization \citep{strand1974theory,morgan1989generalization,friedman2004gradient,ramsay2005parameter,yao2007early}.

Thinking of SGD as a computationally cheap but noisy version of gradient descent, it is natural to ask: do the iterates generated by SGD also possess a kind of $\ell_2$ regularity?  Of course, the connection here may not be as clear as with gradient descent, since there should be a price to pay for the computational savings.

In this paper, we study the implicit regularization performed by mini-batch stochastic gradient descent with a constant step size, when applied to the fundamental problem of least squares regression.  We defer a proper review of related work until later on, but for now mention that constant step sizes are frequently analyzed \citep{bach2013non,defossez2014constant,dieuleveut2017bridging,jain2017markov,babichev2018constant}, and popular in practice, because of their simplicity.  We adopt a continuous-time point-of-view, following \citet{ali2018continuous}, and study a stochastic differential equation that we call \textit{stochastic gradient flow}.  A strength of the continuous-time perspective is that it facilitates a direct and precise comparison to $\ell_2$ regularization, across the entire optimization path---not just at convergence, as is done in much of the current work on implicit regularization.

\paragraph{Summary of Contributions.}
A summary of our contributions in this paper is as follows.
\begin{itemize}
\item  We give a bound on the excess risk of stochastic gradient flow at time $t$, over ridge regression with tuning parameter $\lambda = 1/t$, for all $t \geq 0$.  The bound decomposes into three terms.  The first term is the (scaled) variance of ridge.  The second and third terms both stem from the variance due to mini-batching, and may be made smaller by, \eg, increasing the mini-batch size and/or decreasing the step size.  The second term may be interpreted as the ``price of stochasticity'': it is nonnegative, but vanishes as time grows.  The third term is tied to the limiting optimization error of stochastic gradient flow: it is zero in the overparametrized (interpolating) regime \citep{bassily2018exponential}, but is positive otherwise, reflecting the fact that stochastic gradient flow with a constant step size fluctuates around the least squares solution as time grows.  The bound holds with no conditions on the data matrix $X$.  Numerically, the bound can be small, indicating a tight relationship between the two estimators.


\item Using the bound, we show through numerical examples that stochastic gradient flow, when stopped at a time that (optimally) balances its bias and variance, yields a solution attaining risk that is 1.0032 times that of the (optimally-stopped) ridge solution, in less time---indicating that stochastic gradient flow strikes a favorable computational-statistical trade-off.

\item We give a similar bound on the distance between the coefficients of stochastic gradient flow at time $t$, and those of ridge regression with tuning parameter $\lambda = 1/t$, which is also seen to be tight.
\end{itemize}

\paragraph{Outline.}  Next, we review related work.  Section \ref{sec:prelims} covers notation, and further motivates the continuous-time approach.  In Section \ref{sec:risk}, we present our bound on the excess risk of stochastic gradient flow over ridge regression.  In Section \ref{sec:coeffs}, we present a bound relating the coefficients of the two estimators.  Section \ref{sec:exps} gives numerical examples supporting our theory.  In Section \ref{sec:disc}, we conclude.

\paragraph{Related Work.}  \textit{Stochastic Gradient Descent.}  The statistical and computational properties of SGD have been studied intensely over the years, with work tracing back to \citet{robbins1951stochastic,fabian1968asymptotic,ruppert1988efficient,kushner2003stochastic,polyak1992acceleration,nemirovski2009robust}.  On the statistical side, a lot of the work has focused on delivering optimal error rates for SGD and its many variants, \eg, with averaging, either asymptotically \citep{robbins1951stochastic,fabian1968asymptotic,ruppert1988efficient,kushner2003stochastic,polyak1992acceleration,moulines2011non,toulis2017asymptotic,nemirovski2009robust}, or in finite samples \citep{cesa1996worst,zhang2004solving,ying2008online,cesa2006prediction,pillaud2018statistical,jain2018parallelizing}.

Notably, \citet{bach2013non,defossez2014constant,dieuleveut2017bridging,jain2017markov,babichev2018constant} studied SGD with a constant step size for least squares regression with averaging (obtaining optimal rates, which is not our focus).  Good references on inference and computation include \citet{fabian1968asymptotic,ruppert1988efficient,polyak1992acceleration,moulines2011non,chen2016statistical,toulis2017asymptotic} and \citet{recht2011hogwild,duchi2015asynchronous}, respectively.  \citet{mandt2015continuous,duvenaud2016early} interpreted SGD with a constant step size as doing Bayesian inference.  Many works have empirically investigated the generalization properties of SGD, mainly in the context of non-convex optimization \citep{jastrzkebski2017three,kleinberg2018alternative,zhang2018theory,jin2019stochastic,nakkiran2019sgd,saxe2019mathematical}.

\textit{Implicit Regularization.}  Nearly all of the work in implicit regularization thus far has examined the convergence points of gradient descent, and not the whole path, for specific convex \citep{nacson2018stochastic,gunasekar2018characterizing,soudry2018implicit,vaskevicius2019implicit} and non-convex \citep{li2017algorithmic,wilson2017marginal,gunasekar2017implicit,gunasekar2018implicit} problems.  Notable exceptions include \citet{rosasco2015learning,lin2016generalization,lin2017optimal,neu2018iterate}, who studied averaged SGD with a constant step size for least squares regression, arguing that the various algorithmic parameters (\ie, the step size, mini-batch size, number of iterations, etc.) perform a kind of implicit regularization, by inspecting the corresponding error rates.  A few works have investigated implicit regularization outside of optimization \citep{mahoney2011implementing,mahoney2012approximate,gleich2014anti,martin2018implicit}.

\textit{Stochastic Differential Equations.}  Several papers have studied the same stochastic differential equation that we do \citep{hu2017diffusion,feng2017semi,li2019stochastic,feng2019uniform}, but without the focus on implicit regularization and statistical learning.  Along these lines, somewhat related work can be found in the literature on Langevin dynamics \citep{geman1986diffusions,seung1992statistical,neal2011mcmc,welling2011bayesian,sato2014approximation,teh2016consistency,raginsky2017non,cheng2019quantitative}.

\section{Preliminaries}
\label{sec:prelims}

\subsection{Least Squares, Stochastic Gradient Descent, and Stochastic Gradient Flow}
Consider the usual least squares regression problem,
\begin{equation}
\minimize_{\beta \in \R^p} \; \frac{1}{2n} \| y - X \beta \|_2^2, \label{eq:ls}
\end{equation}
where $y \in \R^n$ is the response and $X \in \R^{n \times p}$ is the data matrix.  Mini-batch SGD applied to \eqref{eq:ls} is the iteration
\begin{align}
\beta^{(k)} & = \beta^{(k-1)} + \frac{\epsilon}{m} \cdot \sum_{i \in \mathcal I_k} (y_i - x_i^T \beta^{(k-1)}) x_i \notag \\
& = \beta^{(k-1)} + \frac{\epsilon}{m} \cdot X_{\mathcal I_k}^T (y_{\mathcal I_k} - X_{\mathcal I_k} \beta^{(k-1)}), \label{eq:sgd}
\end{align}
for $k=1,2,3,\ldots$, where $\epsilon > 0$ is a fixed step size, $m$ is the mini-batch size, and $\mathcal I_k \subseteq \{1,\ldots,n\}$ denotes the mini-batch on iteration $k$ with $| \mathcal I_k | = m$, for all $k$.  For simplicity, we assume the mini-batches are sampled with replacement; our results hold with minor modifications under sampling without replacement.  We assume the initialization $\beta^{(0)} = 0$.

Now, adding and subtracting the negative gradient of the loss in \eqref{eq:sgd} yields
\begin{align}
& \beta^{(k)} = \beta^{(k-1)} + \frac{\epsilon}{n} \cdot X^T (y - X \beta^{(k-1)}) \label{eq:sgd:2} \\
& + \epsilon \cdot \Bigg( \frac{1}{m} X_{\mathcal I_k}^T (y_{\mathcal I_k} - X_{\mathcal I_k} \beta^{(k-1)}) - \frac{1}{n} X^T (y - X \beta^{(k-1)}) \Bigg). \notag
\end{align}
This may be recognized as gradient descent, plus the deviation between the sample average of $m$ i.i.d.~random variables and their mean, which motivates the continuous-time dynamics (stochastic differential equation)
\begin{equation}
d \beta(t) = \frac{1}{n} X^T (y - X \beta(t)) \, dt + Q_\epsilon(\beta(t))^{1/2} \, dW(t), \label{eq:sgf:nonconstcov}
\end{equation}
with $\beta(0) = 0$.  Here, $W(t)$ is standard $p$-dimensional Brownian motion.  We denote the diffusion coefficient
\begin{equation}
Q_\epsilon(\beta) = \epsilon \cdot \Cov_{\mathcal I} \Bigg( \frac{1}{m} X_{\mathcal I}^T (y_{\mathcal I} - X_{\mathcal I} \beta) \Bigg), \label{eq:Q}
\end{equation}
where the randomness is due to $\mathcal I \subseteq \{1,\ldots,n\}$.  We call the diffusion process \eqref{eq:sgf:nonconstcov} \textit{stochastic gradient flow}.

At this point, it helps to recall the related work of \citet{ali2018continuous}, who studied \textit{gradient flow},
\begin{equation}
\dot \beta(t) = \frac{1}{n} X^T(y - X \beta(t)) dt, \quad \beta(0) = 0, \label{eq:gf}
\end{equation}
which is gradient descent for \eqref{eq:ls} with infinitesimal step sizes.  In what follows, we frequently use the solution to \eqref{eq:gf},
\begin{equation}
\hat \beta^\gf(t) = (X^T X)^+ \big( I - \exp(-t X^T X / n) \big) X^T y, \label{eq:gf:soln}
\end{equation}
where $\exp(A)$ and $A^+$ denote the matrix exponential and the Moore-Penrose pseudo-inverse of $A$, respectively.

Unlike gradient flow, the continuous-time flow \eqref{eq:sgf:nonconstcov} does not arise by taking limits of the discrete-time dynamics \eqref{eq:sgd}, and should instead be interpreted as an approximation to \eqref{eq:sgd}.  To see this, consider the Euler discretization of \eqref{eq:sgf:nonconstcov},
\begin{align}
& \tilde \beta^{(k)} = \tilde \beta^{(k-1)} + \frac{\epsilon}{n} \cdot X^T (y - X \tilde \beta^{(k-1)}) \label{eq:euler} \\
& \hspace{0.4in} + \epsilon \cdot \Cov_{\mathcal I}^{1/2} \Bigg( \frac{1}{m} X_{\mathcal I}^T (y_{\mathcal I} - X_{\mathcal I} \tilde \beta^{(k-1)}) \Bigg) z_k, \notag
\end{align}
where $z_k \sim N(0,I)$ and $\tilde \beta^{(0)} = 0$, \ie, \eqref{eq:euler} approximates \eqref{eq:sgd:2} with a Gaussian process.  Note that the noise in \eqref{eq:euler} is on the right scale, which also explains the presence of $\epsilon$ in \eqref{eq:Q}.

Figure \ref{fig:paths} presents a small numerical example, where we see a striking resemblance between the paths for SGD, the Euler discretization of stochastic gradient flow, and ridge regression with tuning parameter $\lambda = 1/t$.

\begin{figure*}[ht]
\begin{center}
\centerline{
\includegraphics[width=0.33\textwidth]{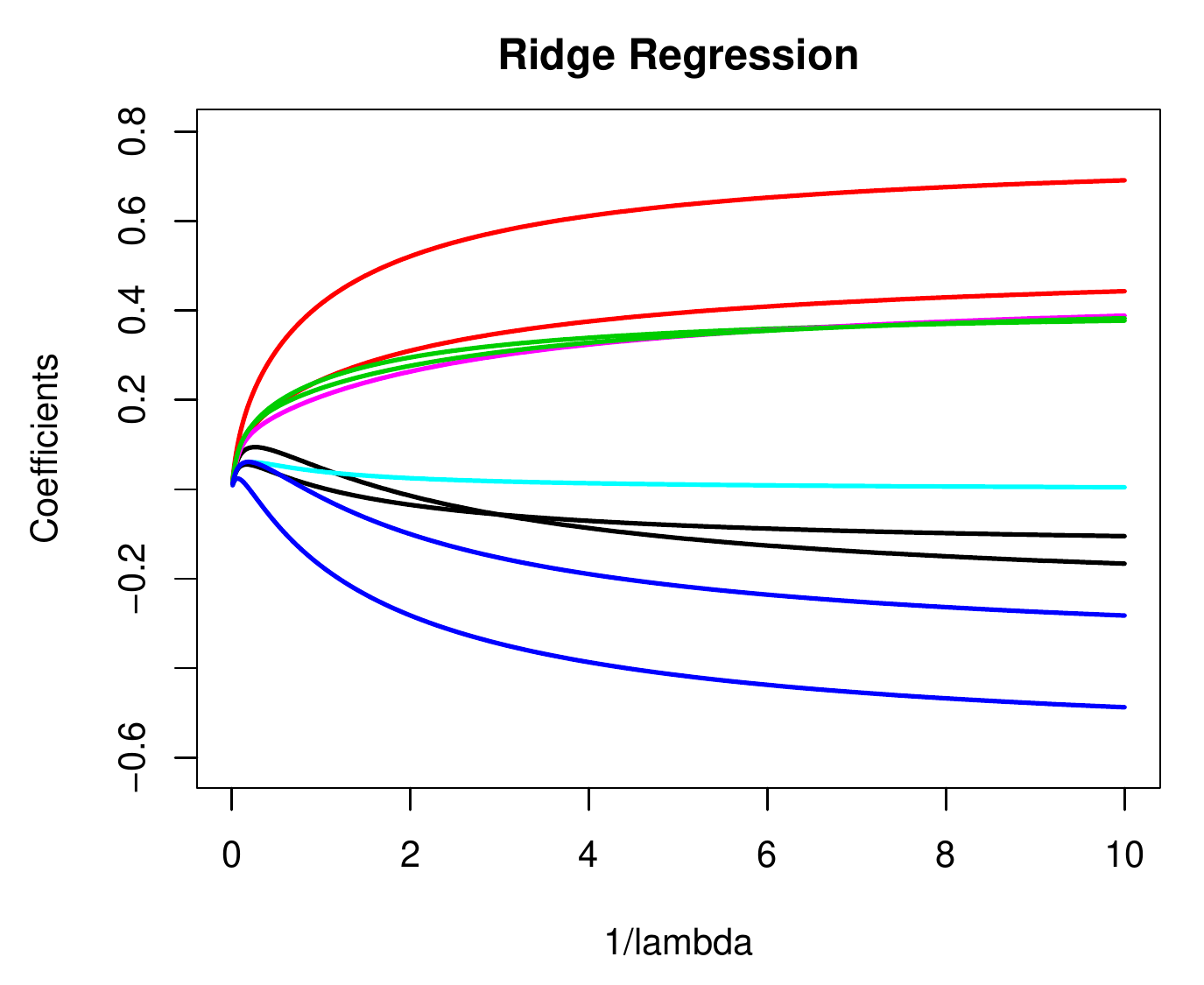} \hfill
\includegraphics[width=0.33\textwidth]{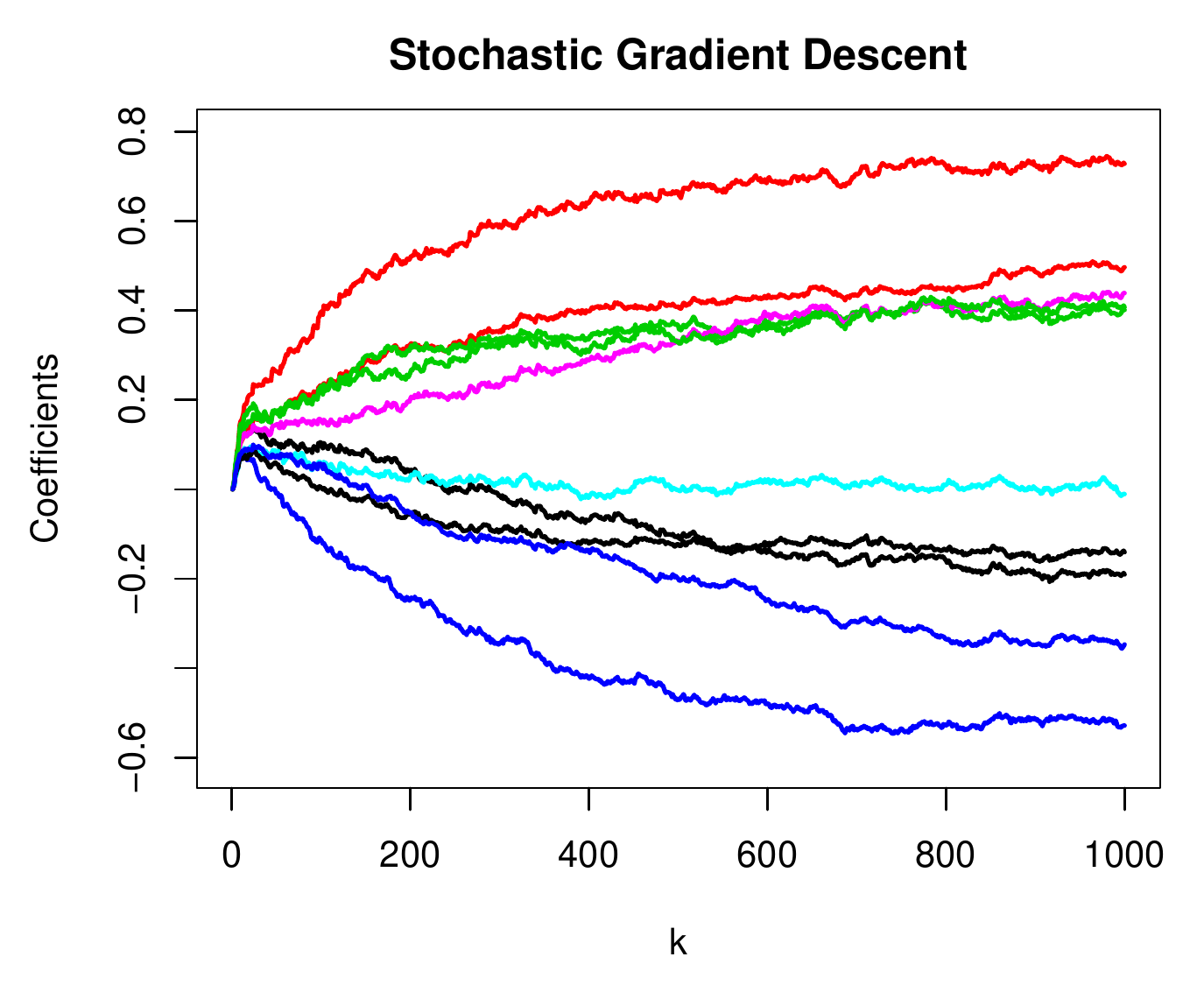} \hfill
\includegraphics[width=0.33\textwidth]{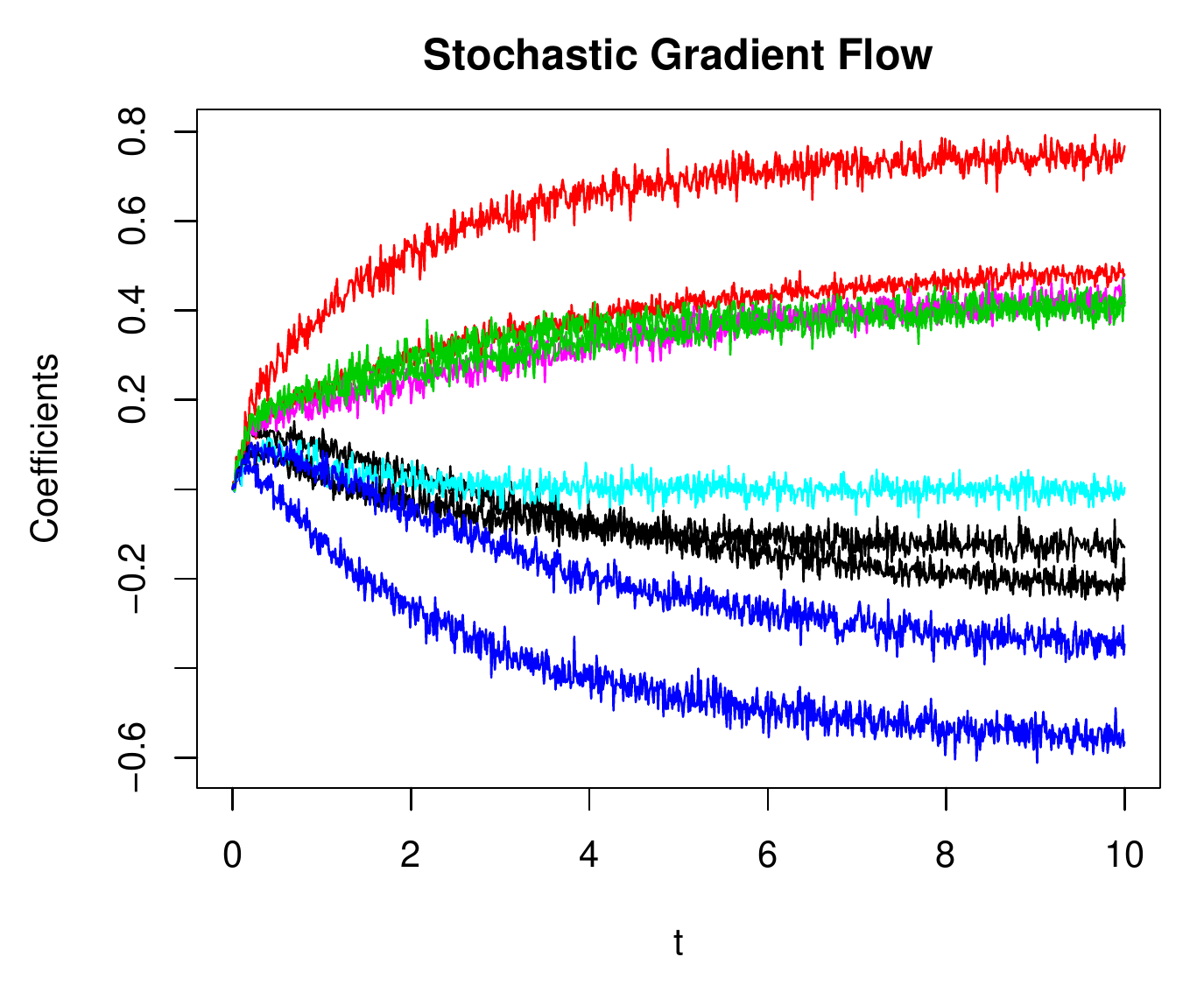}
}
\vskip -0.2in
\caption{\textit{Solution and optimization paths for ridge regression (left panel), SGD (middle panel), and the Euler discretization of stochastic gradient flow (right panel) on a small example, where $n=50$, $p=10$, $m=10$, and $\epsilon = 0.01$.}}
\label{fig:paths}
\end{center}
\vskip -0.2in
\end{figure*}

\subsection{Basic Properties of Stochastic Gradient Flow}
We begin with an important lemma further motivating the differential equation \eqref{eq:sgf:nonconstcov}; its proof, as with many of the results in this paper, may be found in the supplement.  The result shows that both the first and second moments of the Euler discretization of \eqref{eq:sgf:nonconstcov} match those of the underlying discrete-time SGD iteration.  This means that any deviation between the first two moments of the continuous-time flow \eqref{eq:sgf:nonconstcov} and discrete-time SGD must be due to discretization.

\begin{lemma} \label{mom-match}
Fix $y$, $X$, $\epsilon > 0$, and $k \geq 1$.  Write $\tilde \beta^{(k)}$ for the Euler discretization \eqref{eq:euler} of stochastic gradient flow, and $\beta^{(k)}$ for SGD (both using $\epsilon$).  Then, the first and second moments of $\tilde \beta^{(k)}$ match those of $\beta^{(k)}$, \ie, we have that both
\begin{itemize}
\item $\E_{\tilde Z} \tilde \beta^{(k)} = \E_{\mathcal I_1,\ldots,\mathcal I_k} \beta^{(k)}$, and
\item $\Cov_{\tilde Z} \tilde \beta^{(k)} = \Cov_{\mathcal I_1,\ldots,\mathcal I_k} \beta^{(k)}$.
\end{itemize}
Here, we let $\tilde Z$ denote the randomness inherent to $\tilde \beta^\sgf(t)$.
\end{lemma}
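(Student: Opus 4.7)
The plan is to induct on $k$. The base case $k=0$ is immediate since $\tilde\beta^{(0)} = \beta^{(0)} = 0$ deterministically. For the inductive step, I would first show that, conditional on the previous iterate, the two updates share the same first and second moments as \emph{functions} of that iterate, and then lift this to an unconditional statement via the tower property and the law of total covariance.

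Concretely, set $A = I - (\epsilon/n) X^T X$ and $b = (\epsilon/n) X^T y$. The sampling-with-replacement identity $\E_{\mathcal{I}}[(1/m) X_{\mathcal{I}}^T(y_{\mathcal{I}} - X_{\mathcal{I}}\beta)] = (1/n) X^T(y - X\beta)$, together with the observation that subtracting a deterministic quantity does not change a covariance, yields
\[
\E[\beta^{(k)} \mid \beta^{(k-1)}] = A\,\beta^{(k-1)} + b, \qquad \Cov[\beta^{(k)} \mid \beta^{(k-1)}] = \epsilon \cdot Q_\epsilon(\beta^{(k-1)}),
\]
directly from the SGD update \eqref{eq:sgd} and the definition \eqref{eq:Q} of $Q_\epsilon$. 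The analogous identities for $\tilde\beta^{(k)}$ fall out of \eqref{eq:euler} and the elementary fact $\Cov(\Sigma^{1/2} z_k) = \Sigma$ for $z_k \sim N(0,I)$; indeed, the Euler noise was defined precisely so that its conditional covariance matches $\epsilon \cdot Q_\epsilon(\tilde\beta^{(k-1)})$.

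Given these conditional matches, the tower rule gives $\E\beta^{(k)} = A\,\E\beta^{(k-1)} + b$ and similarly for $\tilde\beta^{(k)}$, so equality of means propagates by the inductive hypothesis. The law of total covariance gives
\[
\Cov\beta^{(k)} = \E[\epsilon \cdot Q_\epsilon(\beta^{(k-1)})] + A\,\Cov(\beta^{(k-1)})\,A^T,
\]
and the same for $\tilde\beta^{(k)}$. The one subtlety I would flag here is that $Q_\epsilon(\beta)$ is a \emph{quadratic} function of $\beta$ (it is the covariance of a quantity linear in $\beta$), so $\E Q_\epsilon(\beta^{(k-1)})$ depends on both $\E\beta^{(k-1)}$ and the second-moment matrix $\E[\beta^{(k-1)}(\beta^{(k-1)})^T] = \Cov\beta^{(k-1)} + (\E\beta^{(k-1)})(\E\beta^{(k-1)})^T$. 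Fortunately that second moment is exactly what the inductive hypothesis controls, so $\E Q_\epsilon(\beta^{(k-1)}) = \E Q_\epsilon(\tilde\beta^{(k-1)})$ and the covariance step closes.

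The main obstacle is essentially bookkeeping: one must carefully separate the fresh mini-batch (or Gaussian) randomness at step $k$ from the randomness carried in $\beta^{(k-1)}$, and track that $Q_\epsilon$ depends on the previous iterate only through its first two moments. Neither is deep; the whole argument distills to the observation that the Euler noise in \eqref{eq:euler} was engineered to reproduce, conditional on the previous iterate, exactly the first two moments of a single SGD step.
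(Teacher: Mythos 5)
Your proposal is correct and follows essentially the same route as the paper's proof: an induction in which the mean recursion closes because the gradient is linear in $\beta$ (so the conditional mean of one step is an affine function of the previous iterate), and the covariance recursion closes because $Q_\epsilon$ is quadratic in $\beta$, so $\E Q_\epsilon(\beta^{(k-1)})$ is determined by the first two moments that the inductive hypothesis already controls. Your write-up is if anything slightly more explicit than the paper's, which states the law-of-total-covariance step only implicitly ("the covariance at each iteration only depends on second order statistics of the previous iteration").
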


\begin{remark}
The result also implies that both the estimation and out-of-sample risks of $\tilde \beta^{(k)}$ match those of $\beta^{(k)}$; we defer a more thorough treatment of this point to Section \ref{sec:risk}.
\end{remark}

\begin{remark}
Discretization, \ie, showing that \eqref{eq:euler} and \eqref{eq:sgd} are close in a precise sense, turns out to be non-trivial, and is left to future work.
\end{remark}

Next, with the above motivation in mind, we present a lemma establishing that the solution to \eqref{eq:sgf:nonconstcov} exists and is unique.  The result also gives a more explicit expression for the solution to \eqref{eq:sgf:nonconstcov}, which plays a key role in many of the results to come.

\begin{lemma} \label{lem:soln}
Fix $y$, $X$, and $\epsilon > 0$.  Let $t \in [0,T]$.  Then
\begin{align}
& \hat \beta^\sgf(t) = \hat \beta^\gf(t) \label{eq:soln} \\
& \hspace{0.2in} + \exp ( - t  \hSigma ) \cdot \int_0^t \exp ( \tau  \hSigma ) Q_\epsilon(\hat \beta^\sgf(\tau))^{1/2} d W(\tau) \notag
\end{align}
is the unique solution to the differential equation \eqref{eq:sgf:nonconstcov}.  
\end{lemma}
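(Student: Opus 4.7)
The plan is to convert the SDE \eqref{eq:sgf:nonconstcov} into the integral form \eqref{eq:soln} by the integrating-factor trick, and to handle existence and uniqueness via the standard SDE theory, with the one delicate point (regularity of the matrix square root) resolved by an explicit affine factorization of $Q_\epsilon$. Throughout, write $\hSigma = X^T X/n$.

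\textbf{Derivation of the closed form.} I would apply Ito's formula to the auxiliary process $\eta(t) = \exp(t\hSigma)\beta(t)$. Since $\exp(t\hSigma)$ is deterministic and smooth in $t$, the Ito correction vanishes and $d\eta(t) = \hSigma \exp(t\hSigma)\beta(t)\,dt + \exp(t\hSigma)\,d\beta(t)$. Substituting the drift $\tfrac{1}{n}X^T y - \hSigma\beta(t)$ from \eqref{eq:sgf:nonconstcov} cancels the $\hSigma\beta(t)$ terms, so
\[
d\eta(t) \;=\; \exp(t\hSigma)\,\tfrac{1}{n}X^T y\,dt \;+\; \exp(t\hSigma)\,Q_\epsilon(\beta(t))^{1/2}\,dW(t).
\]
Integrating from $0$ to $t$ and multiplying back by $\exp(-t\hSigma)$ produces \eqref{eq:soln}, after identifying the deterministic piece as $\hat\beta^{\gf}(t) = \exp(-t\hSigma)\int_0^t \exp(s\hSigma)\,(X^T y/n)\,ds$, which agrees with \eqref{eq:gf:soln} on the row space of $X$ and vanishes on its orthogonal complement. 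The same calculation, read in reverse, shows any process satisfying \eqref{eq:soln} solves \eqref{eq:sgf:nonconstcov}, so the two formulations are equivalent.

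\textbf{Existence and uniqueness.} The drift of \eqref{eq:sgf:nonconstcov} is affine in $\beta$, hence globally Lipschitz with linear growth. For the diffusion, the key observation is
\[
Q_\epsilon(\beta) \;=\; \frac{\epsilon}{mn}\sum_{i=1}^n u_i(\beta)\,u_i(\beta)^T, \qquad u_i(\beta) \;=\; (y_i - x_i^T\beta)\,x_i \;-\; \tfrac{1}{n}X^T(y - X\beta),
\]
which one can read off from the formula for the covariance of the mini-batch sample mean under sampling with replacement. Thus $Q_\epsilon(\beta) = B(\beta)B(\beta)^T$ where $B(\beta) \in \R^{p \times n}$ is affine in $\beta$, so the SDE may equivalently be driven by an $n$-dimensional Brownian motion with coefficient $B(\beta)$, now manifestly Lipschitz and of linear growth. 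The classical SDE theorem (e.g., Oksendal Thm.~5.2.1) then supplies strong existence and pathwise uniqueness, and equality of generators transfers uniqueness in law back to \eqref{eq:sgf:nonconstcov}; the Ito integral in \eqref{eq:soln} is well defined since its integrand is locally bounded along any solution.

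\textbf{Main obstacle.} The only delicate point is that the symmetric matrix square root $Q_\epsilon(\beta)^{1/2}$ need not itself be globally Lipschitz in $\beta$, even though $Q_\epsilon(\beta)$ has Lipschitz (in fact quadratic) entries: the map $M \mapsto M^{1/2}$ degenerates where eigenvalues collide or vanish, which will certainly happen for some $\beta$ once $n$ or $p$ exceeds one. The affine factorization $Q_\epsilon(\beta) = B(\beta)B(\beta)^T$ bypasses this entirely by producing an explicit, non-symmetric ``square root'' that inherits Lipschitz continuity directly from the affinity of $B$. Everything else---the Ito verification, the integrating-factor step, and the identification with $\hat\beta^{\gf}(t)$---is routine linear algebra and calculus once this is in place.
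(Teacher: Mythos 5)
Your derivation of the closed form is exactly the paper's: the same integrating factor $\exp(t\hSigma)$, the same application of Ito's lemma to $\exp(t\hSigma)\beta(t)$, and the same identification of the deterministic integral with $\hat\beta^{\gf}(t)$. Where you genuinely diverge is on existence and uniqueness. The paper disposes of this in one sentence by asserting that $Q_\epsilon(\beta)^{1/2}$ is Lipschitz continuous and invoking standard SDE theory; you correctly flag that this is not automatic---the symmetric square root $M\mapsto M^{1/2}$ degenerates where eigenvalues vanish---and you repair it with the affine factorization $Q_\epsilon(\beta)=B(\beta)B(\beta)^T$ with $B(\beta)\in\R^{p\times n}$ affine in $\beta$, which is a clean and fully elementary fix. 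Two remarks on the comparison. First, the paper's assertion can in fact be salvaged without changing the driving noise: $Q_\epsilon(\beta)$ is a quadratic, hence $C^2$ with bounded second derivative, positive semidefinite matrix function of $\beta$, and the classical Freidlin/Stroock--Varadhan lemma then guarantees that its symmetric square root is globally Lipschitz; so the paper's route is correct but unjustified as written, whereas yours makes the regularity transparent. Second, your switch to an $n$-dimensional Brownian motion gives strong existence and pathwise uniqueness for the modified equation and therefore only uniqueness in law for \eqref{eq:sgf:nonconstcov} itself; that is all the paper ever uses (everything downstream is a moment computation), but a reader who wants pathwise uniqueness for the equation as literally written still needs the Lipschitz property of the symmetric root. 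The representation \eqref{eq:soln} itself is, as you note, a pathwise consequence of the integrating-factor computation for any solution, independent of which uniqueness statement one proves.
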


\begin{remark}
The result actually holds for any Lipschitz continuous diffusion coefficient $Q_\epsilon(\beta(t))$, \eg, $Q_\epsilon(\beta(t)) = I$, as well as the time-homogeneous covariance $Q_\epsilon(\beta(t)) = (\epsilon/m) \cdot \hat \Sigma$ \citep{mandt2017stochastic,wang2017asymptotic,dieuleveut2017harder,fan2018statistical}. In the former case, \eqref{eq:sgf:nonconstcov} reduces to (rescaled) Langevin dynamics.
\end{remark}

\subsection{Constant vs.~Non-Constant Covariances}
The differential equation \eqref{eq:sgf:nonconstcov} has been considered previously \citep{hu2017diffusion,feng2017semi,li2019stochastic,feng2019uniform}, but several works \citep{mandt2017stochastic,wang2017asymptotic,dieuleveut2017harder,fan2018statistical} have found it convenient to work with the simplification
\begin{equation}
d \beta(t) = \frac{1}{n} X^T (y - X \beta(t)) \, dt + \Big( \frac{\epsilon}{m} \cdot \hSigma \Big)^{1/2} \, dW(t), \label{eq:sgf:constcov}
\end{equation}
where $\beta({0}) = 0$.  Here, $Q_\epsilon(\beta(t)) = (\epsilon/m) \cdot \hat \Sigma$.  However, we present a simple but telling example revealing that these two processes, \ie, the non-constant covariance process in \eqref{eq:sgf:nonconstcov}, and the constant covariance process in \eqref{eq:sgf:constcov}, need not be close in general.

Consider the univariate responseless least squares problem,
\[
\minimize_{\beta \in \R} \; \frac{1}{2n} \sum_{i=1}^n (x_i \beta)^2.
\]
Let $G_k = (1/m) \sum_{i \in \mathcal I_k} x_i^2$, for $k=1,2,3,\ldots$.  Then SGD for the above problem may be expressed as
\[
\beta^{(k)} = \beta^{(k-1)} - \epsilon \cdot G_k \beta^{(k-1)} = \prod_{j=1}^{k-1} \Bigg( 1 - \epsilon \cdot G_j \Bigg) \beta^{(0)}.
\]
Assume the initial point is a nonzero constant, the $x_i$ follow a continuous distribution, and $\epsilon$ is sufficiently small.  Letting $t > 0$ be arbitrary, the basic estimate $1-x \leq \exp(-x)$ combined with Markov's inequality shows that
\[
\Pr( \beta^{(k)} > t) \leq \E \Bigg[ \exp \Bigg( - \epsilon \cdot \sum_{j=1}^{k-1} G_j \Bigg) \Bigg] \beta^{(0)} / t.
\]
Summing the right-hand side over $k=1,\ldots,\infty$, we conclude that $\beta^{(k)}$ converges to zero with probability one, by the first Borel-Cantelli lemma.

Now let $G = (1/n) \sum_{i=1}^n x_i^2$.  We may calculate for the non-constant process that $Q_\epsilon(\beta(t))^{1/2} = \theta \beta(t)$, where $\theta = (\epsilon/m \cdot G)^{1/2}$, meaning the non-constant process follows the dynamics (the sign of $Q_\epsilon^{1/2}$ may be chosen arbitrarily) 
\[
d \beta(t) = - G \beta(t) + \theta \beta(t) dW(t),
\]
which may be recognized as a geometric Brownian motion.  It can be checked that both the mean and variance of the geometric Brownian motion tend to zero as time grows, provided that $\theta^2 < 2 G$, which certainly holds when $\epsilon < 1$.

On the other hand, the constant process is an Ornstein-Uhlenbeck process,
\[
d \beta(t) = - G \beta(t) + \theta dW(t).
\]
Again, it may be checked (\eg, Chapter 5 in \citet{oksendal2003stochastic}) that the process mean goes down to zero, whereas the variance tends to the constant $\epsilon/(2m)$.  In other words, the limiting dynamics of the constant process exhibit constant-order fluctuations, whereas those of the non-constant process do not. Therefore, for this problem, the latter dynamics more accurately reflect those of discrete-time SGD.  See Figure \ref{fig:contour} for an example. 

\begin{figure}[ht]
\vskip -0.2in
\begin{center}
\centerline{\includegraphics[width=1.2\columnwidth]{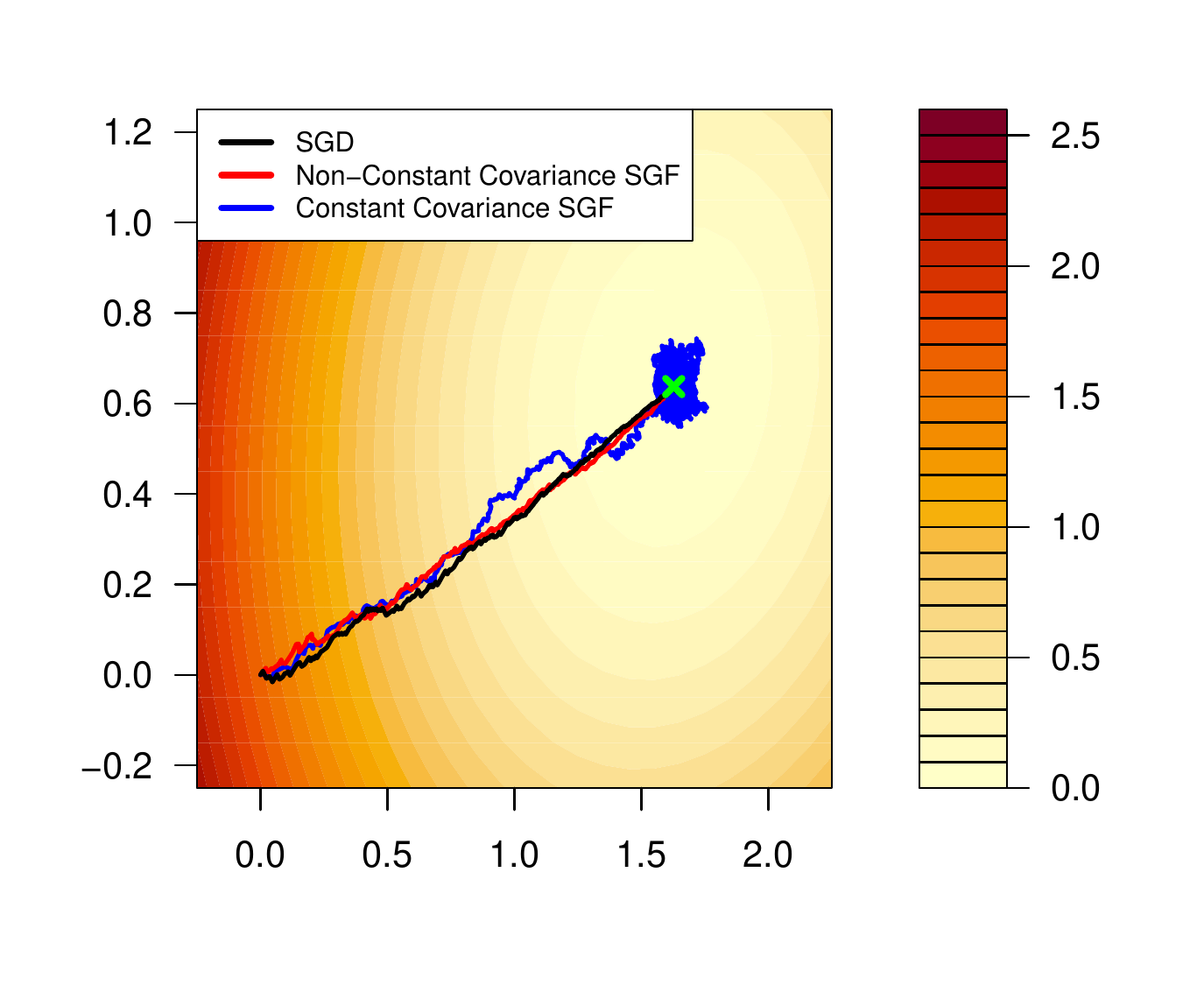}}
\vskip -0.4in
\caption{\textit{Trajectories for the non-constant and constant covariance processes, as well as discrete-time SGD, on a simple least squares problem, where $n=3$, $p=2$, $m=2$, and $\epsilon = 0.01$.  Warmer colors denote larger values of the least squares loss function, and the green X denotes the least squares solution.}}
\label{fig:contour}
\end{center}
\vskip -0.2in
\end{figure}

We close this section with a simple result bounding the deviation between solutions to the non-constant and constant processes, in expectation.  The result indicates that the two processes can be close when the non-constant process dynamics are close to the underlying coefficients.  A thorough comparison of the two processes is left to future work.

\begin{lemma} \label{lem:nonconst_vs_const}
Fix $y$, $X$, and $\epsilon > 0$.  Let $t \geq 0$.  Write $\tilde \beta^\sgf(t)$ for the solution to the constant process.  Then
\begin{align*}
& \E_{Z, \tilde Z} \| \hat \beta^\sgf(t) - \tilde \beta^\sgf(t) \|_2^2 \leq 4 L p^3 \epsilon / m \\
& \times \int_0^t \E_Z \Big[ \sum_{i=1}^n \big| (y_i - x_i^T \hat \beta^\sgf(\tau))^2 - 1 \big| \Big] d \tau.
\end{align*}
Here, we let $Z$, $\tilde Z$ denote the randomness inherent to $\hat \beta^\sgf(t)$, $\tilde \beta^\sgf(t)$, respectively, and write $L = \lambda_{\max}(\hat \Sigma)$.
\end{lemma}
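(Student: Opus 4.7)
The plan is to use an It\^o coupling: drive the non-constant process $\hat\beta^\sgf(t)$ and the constant-covariance process $\tilde\beta^\sgf(t)$ by the same Brownian motion $W(t)$. Because both processes share the identical deterministic drift $\tfrac{1}{n}X^T(y-X\beta)$, the difference $\Delta(t) = \hat\beta^\sgf(t) - \tilde\beta^\sgf(t)$ satisfies the linear SDE
\[
d\Delta(t) = -\hSigma\,\Delta(t)\,dt + \Bigl[\,Q_\epsilon(\hat\beta^\sgf(t))^{1/2} - \bigl(\tfrac{\epsilon}{m}\hSigma\bigr)^{1/2}\,\Bigr]\,dW(t),
\]
with $\Delta(0)=0$. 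This reduces the bilateral comparison to a one-sided analysis of $\Delta$.

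Next, I would apply It\^o's formula to $\|\Delta(t)\|_2^2$ and take expectations. The stochastic-integral term vanishes and the drift contribution $-2\,\E[\Delta(t)^T\hSigma\,\Delta(t)]$ is non-positive since $\hSigma\succeq 0$, leaving
\[
\frac{d}{dt}\,\E\|\Delta(t)\|_2^2 \;\le\; \E\bigl\|\,Q_\epsilon(\hat\beta^\sgf(t))^{1/2} - (\epsilon/m\cdot\hSigma)^{1/2}\,\bigr\|_F^2.
\]
Integrating from $0$ to $t$ reduces the lemma to bounding this squared Frobenius difference of matrix square roots, pointwise in $\tau$. An equivalent route applies the It\^o isometry directly to the explicit solution from Lemma~\ref{lem:soln}, after subtracting the analogous formula for $\tilde\beta^\sgf(t)$; the factor $\exp((\tau-t)\hSigma)$ that appears has operator norm at most $1$, yielding the same integrated bound.

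For the integrand, I would expand $Q_\epsilon$ using with-replacement mini-batch sampling to obtain
\[
Q_\epsilon(\beta) = \frac{\epsilon}{mn}\sum_{i=1}^n (y_i - x_i^T\beta)^2\, x_i x_i^T \;-\; \frac{\epsilon}{mn^2}\, X^T(y - X\beta)(y - X\beta)^T X,
\]
so that $Q_\epsilon(\beta) - (\epsilon/m)\hSigma$ is, up to a rank-one correction, the weighted sum $\tfrac{\epsilon}{mn}\sum_i[(y_i-x_i^T\beta)^2 - 1]\,x_i x_i^T$. The square-root difference is then controlled by a Powers--St\o rmer--type inequality, e.g., $\|A^{1/2} - B^{1/2}\|_F^2 \le \|A - B\|_*$ for PSD $A,B$, followed by the triangle inequality and the identities $\|x_i x_i^T\|_* = \|x_i\|_2^2$ and $\sum_i \|x_i\|_2^2 = n\,\tr(\hSigma) \le npL$. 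Standard norm conversions on rank-$p$ matrices, together with the bound $\|X^T(y-X\beta)\|_2^2 \le nL\,\|y-X\beta\|_2^2$ absorbing the rank-one correction via $r_i^2 \le 1 + |r_i^2 - 1|$, account for the stated constant $4Lp^3\epsilon/m$.

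The main obstacle is this last step. Square-root differences of PSD matrices are notoriously delicate: one cannot linearize them, and converting a residual-weighted PSD perturbation into a bound depending only on the pointwise quantities $|(y_i - x_i^T\beta)^2 - 1|$ requires careful norm-conversion bookkeeping and absorption of lower-order cross terms. By contrast, the coupling and It\^o steps are fairly mechanical once one notes that the shared drift cancels exactly, producing the contractive linear term $-\hSigma\Delta$ that lets us discard the drift contribution outright; consequently, no assumptions on $X$ beyond $L = \lambda_{\max}(\hSigma)$ enter the argument.
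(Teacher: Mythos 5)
Your architecture is essentially the paper's: couple the two diffusions through the same Brownian motion so that the drifts cancel and $\Delta(t)=\hat\beta^\sgf(t)-\tilde\beta^\sgf(t)$ solves $d\Delta(t)=-\hSigma\Delta(t)\,dt+\bigl[Q_\epsilon(\hat\beta^\sgf(t))^{1/2}-(\epsilon\hSigma/m)^{1/2}\bigr]dW(t)$, apply It\^o's isometry, discard the contractive exponential factor, and reduce to a pointwise bound on the Frobenius norm of the square-root difference. (Both you and the paper implicitly read the lemma as a statement about the coupled processes; with independent drivers the isometry identity would not hold.) Where you differ is the perturbation step: the paper bounds $\|A^{1/2}-B^{1/2}\|_F^2$ by passing to the operator norm, invoking an inequality of the form $\|A^{1/2}-B^{1/2}\|_2^2\lesssim \|A-B\|_2$, and paying dimension factors in the conversions plus another $p$ from bounding $\tr\exp(2(\tau-t)\hSigma)$ by $p$ --- this is exactly where the constant $4Lp^3$ comes from. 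Your Powers--St{\o}rmer route $\|A^{1/2}-B^{1/2}\|_F^2\le\|A-B\|_*$ is cleaner and sharper, and your bookkeeping lands comfortably inside the stated constant, so that part of the plan is fine.

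The genuine gap is the rank-one centering term. The paper proves the lemma with the \emph{uncentered} second-moment matrix $Q_\epsilon(\beta)=(\epsilon/(nm))X^TF(\beta)X$ substituted for the true covariance, explicitly remarking that the centered version ``is a little difficult to interpret.'' You keep the centered covariance and claim the correction $-(\epsilon/(mn^2))X^Th h^TX$ is absorbed via $r_i^2\le 1+|r_i^2-1|$. But its trace norm is $(\epsilon/(mn^2))\,h^TXX^Th\le(\epsilon L/(mn))\sum_i r_i^2\le \epsilon L/m+(\epsilon L/(mn))\sum_i|r_i^2-1|$, and the leading $\epsilon L/m$ is a constant per unit time that is \emph{not} dominated by $\sum_i|(y_i-x_i^T\beta)^2-1|$: integrated over $[0,t]$ it leaves an extra $\epsilon L t/m$ that the stated right-hand side cannot absorb (if all residuals equal $1$ the right-hand side vanishes while $Q_\epsilon=(\epsilon/(mn))X^T(I-n^{-1}\ones\ones^T)X\neq(\epsilon/m)\hSigma$, so the processes genuinely differ). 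To close the argument you must either drop the rank-one term at the outset by working with the uncentered matrix, as the paper does, or weaken the conclusion by an additional additive term.
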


\section{Statistical Risk Bounds}
\label{sec:risk}

\subsection{Measures of Risk and Notation}
Here and throughout, we let the predictor matrix $X$ be arbitrary and fixed, and assume the response $y$ follows a standard regression model,
\[
y = X \beta_0 + \eta,
\]
for some fixed underlying coefficients $\beta_0 \in \R^p$, and noise $\eta \sim (0, \sigma^2 I)$.  We consider the statistical (estimation) risk of an estimator $\hat \beta \in \R^p$,
\[
\Risk(\hat \beta; \beta_0) = \E_{\eta,  Z} \| \hat \beta - \beta_0 \|_2^2.
\]
Here $Z$ denotes any potential randomness inherent to $\hat \beta$ (\eg, due to mini-batching).  We also consider in-sample risk,
\[
\Risk^\inn(\hat \beta; \beta_0) = \frac{1}{n} \E_{\eta,  Z} \| X \hat \beta - X \beta_0 \|_2^2.
\]

We let $\hat \Sigma = X^T X / n$ denote the sample covariance matrix with eigenvalues $s_i$ and eigenvectors $v_i$, for $i=1,\ldots,p$, and let $\mu = \min_i s_i$ and $L = \max_i s_i$ denote the smallest nonzero and largest eigenvalues of $\hat \Sigma$, respectively.

\subsection{Risk Bounds}
Recall the bias-variance decomposition for risk,
\begin{align*}
& \Risk(\hat \beta^\sgf(t); \beta_0) \notag \\
& = \| \E_{\eta,  Z} (\hat \beta^\sgf(t)) - \beta_0 \|_2^2 + \tr \Cov_{\eta, Z} (\hat \beta^\sgf(t)) \notag \\
& = \Bias^2(\hat \beta^\sgf(t); \beta_0) + \Var_{\eta, Z}(\hat \beta^\sgf(t)). \notag
\end{align*}
A straightforward calculation using the law of total variance shows (see the proof of Theorem \ref{thm:risk} for details)
\begin{align*}
& \Bias^2(\hat \beta^\sgf(t); \beta_0) = \Bias^2(\hat \beta^\gf; \beta_0) \\
& \Var_{\eta, Z}(\hat \beta^\sgf(t)) = \tr \E_\eta \big[ \Cov_Z(\hat \beta^\sgf(t)) \,|\, \eta)  \big] + \Var_\eta( \hat \beta^\gf(t)).
\end{align*}
Therefore, for stochastic gradient flow, the randomness due to mini-batching contributes to the estimation variance.  Hence, a tight bound on the variance due to mini-batching, $\tr \E_\eta \big[ \Cov_Z(\hat \beta^\sgf(t)) \,|\, \eta) \big]$, leads to a tight bound on the risk.  The following result, which we see as one of the main technical contributions of this paper, delivers such a bound.

\begin{lemma} \label{thm:coeffs:second_term}
Fix $y$, $X$, and $\epsilon > 0$.  Let $t > 0$.  Then
\begin{align*}
& \tr \Cov_Z(\hat \beta^\sgf(t)) \\
& \quad \leq \frac{2 n \epsilon}{m} \cdot \int_0^t f(\hat \beta^\sgf(\tau)) \tr [ \hat \Sigma \exp(2 (\tau - t) \hat \Sigma) ] d \tau,
\end{align*}
where $f(\hat \beta^\sgf(\tau)) = \E_Z \Big[ {(2n)}^{-1} \| y - X \hat \beta^\sgf(\tau) \|_2^2 \Big]$.
\end{lemma}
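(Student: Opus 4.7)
The plan is to start from the explicit representation of $\hat\beta^\sgf(t)$ given in Lemma \ref{lem:soln}, namely
\[
\hat\beta^\sgf(t) = \hat\beta^\gf(t) + \int_0^t \exp\bigl((\tau-t)\hSigma\bigr)\, Q_\epsilon(\hat\beta^\sgf(\tau))^{1/2}\, dW(\tau),
\]
and note that the gradient flow piece $\hat\beta^\gf(t)$ is deterministic in $Z$, so it does not contribute to $\Cov_Z$. First I would apply the It\^o isometry to the stochastic integral, which (using that the integrand is adapted) yields
\[
\Cov_Z\bigl(\hat\beta^\sgf(t)\bigr) = \int_0^t \exp\bigl((\tau-t)\hSigma\bigr)\, \E_Z\bigl[Q_\epsilon(\hat\beta^\sgf(\tau))\bigr]\, \exp\bigl((\tau-t)\hSigma\bigr)\, d\tau.
\]
Taking trace and using cyclicity together with the fact that $\hSigma$ and $\exp((\tau-t)\hSigma)$ commute, the two flanking exponentials combine into a single $\exp(2(\tau-t)\hSigma)$, so
\[
\tr \Cov_Z\bigl(\hat\beta^\sgf(t)\bigr) = \int_0^t \tr\Bigl[\exp\bigl(2(\tau-t)\hSigma\bigr)\, \E_Z Q_\epsilon(\hat\beta^\sgf(\tau))\Bigr] d\tau.
\]

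Next I would unpack $Q_\epsilon(\beta)$ using the with-replacement sampling assumption. Writing $r_i(\beta) = (y_i - x_i^T\beta)\, x_i$, the mini-batch average $m^{-1} X_{\mathcal I}^T(y_{\mathcal I} - X_{\mathcal I}\beta)$ is a sum of $m$ i.i.d.\ copies, so
\[
Q_\epsilon(\beta) = \frac{\epsilon}{m}\Bigl[ \tfrac{1}{n}\sum_{i=1}^n (y_i - x_i^T\beta)^2 x_i x_i^T - \bar r(\beta)\bar r(\beta)^T \Bigr] \preceq \frac{\epsilon}{mn} \sum_{i=1}^n (y_i - x_i^T\beta)^2 x_i x_i^T,
\]
where we drop the PSD mean-outer-product term to obtain an upper bound. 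Since $A_\tau := \exp(2(\tau-t)\hSigma)$ is PSD, combining this with the trace representation gives
\[
\tr[A_\tau\, Q_\epsilon(\beta)] \leq \frac{\epsilon}{mn} \sum_{i=1}^n (y_i - x_i^T\beta)^2\, x_i^T A_\tau x_i.
\]

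The key algebraic step is to recognize that the right-hand side is an inner product $\langle r, w\rangle$ of two nonnegative vectors, $r_i = (y_i - x_i^T\beta)^2$ and $w_i = x_i^T A_\tau x_i$. Using the elementary bound $\langle r, w\rangle \leq \|r\|_1 \|w\|_\infty \leq \|r\|_1 \|w\|_1$ (equivalently, $\max_i w_i \leq \sum_i w_i$ for nonnegative $w_i$), we obtain
\[
\sum_{i=1}^n (y_i - x_i^T\beta)^2\, x_i^T A_\tau x_i \;\leq\; \|y - X\beta\|_2^2 \cdot \sum_{i=1}^n x_i^T A_\tau x_i \;=\; \|y-X\beta\|_2^2 \cdot n \tr[A_\tau \hSigma].
\]
Substituting back, taking expectation over $Z$, and rearranging the constants to produce the factor $(2n\epsilon/m) f(\hat\beta^\sgf(\tau))$ with $f(\beta) = (2n)^{-1}\E_Z\|y-X\beta\|_2^2$ completes the bound.

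I expect the main obstacle to be the algebraic step converting the ``point-evaluated'' expression $\sum_i (y_i - x_i^T\beta)^2 x_i^T A_\tau x_i$ into the factored form $\|y-X\beta\|_2^2 \cdot \tr[A_\tau \hSigma]$; the slack here is exactly the coordinate-wise inequality $\max_i w_i \leq \sum_i w_i$, and it is important to verify that no better factorization is needed for the downstream risk bound. The It\^o isometry step is routine once one is confident that the diffusion coefficient, while state-dependent through $\hat\beta^\sgf(\tau)$, is adapted to the Brownian filtration; existence and uniqueness from Lemma \ref{lem:soln} justify this.
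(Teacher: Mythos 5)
Your proposal is correct and follows essentially the same route as the paper's proof: the explicit solution from Lemma \ref{lem:soln}, It\^o's isometry, dropping the rank-one mean-outer-product term from $Q_\epsilon$, and then the trace inequality $\tr(FB)\leq \tr(F)\tr(B)$ for nonnegative diagonal $F$ and PSD $B$ --- your $\langle r,w\rangle \leq \|r\|_1\|w\|_1$ bound is exactly this helper lemma in coordinates. The only cosmetic difference is that the paper applies Fubini at the end rather than pushing $\E_Z$ inside the time integral at the isometry step.
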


\begin{remark}
The proof of the result depends critically on the special covariance structure of the diffusion coefficient, $Q_\epsilon(\beta(\tau))$, arising in the context of least squares regression.  To be more specific, for a fixed $\beta$, let $h(\beta) = ( y_1 - x_1^T \beta, \ldots, y_n - x_n^T \beta )$ denote the residuals at $\beta$, $F(\beta) = \diag( h(\beta) )^2$, and $\tilde F(\beta) = n^{-1} h(\beta) h(\beta)^T$.  Then, another calculation shows (\cf~\citet{hoffer2017train,zhang2017determinantal,hu2017diffusion})
\begin{align*}
Q_\epsilon(\beta) & = \Cov_{\mathcal I} \Bigg( \frac{1}{m} X_{\mathcal I}^T(y_{\mathcal I} - X_{\mathcal I} \beta) \Bigg) \\
& = \frac{1}{nm} X^T (F(\beta) - \tilde F(\beta)) X \\
& \preceq \frac{1}{nm} X^T F(\beta) X,
\end{align*}
which may be manipulated to obtain the result given in the lemma (see the supplement for details).
\end{remark}

\begin{remark}
As we discuss later in Section \ref{sec:coeffs}, the bound on the variance due to mini-batching given in Lemma \ref{thm:coeffs:second_term}, turns out to be central: it may also be used to give a tight bound on the coefficient error, $\E_{\eta,Z} \| \hat \beta^\sgf(t) - \hat \beta^\ridge(1/t) \|_2^2$.
\end{remark}

Inspecting the bound in Lemma \ref{thm:coeffs:second_term}, we see that the variance due to mini-batching, $\tr \E_\eta \big[ \Cov_Z(\hat \beta^\sgf(t)) \,|\, \eta)  \big]$, depends on the expected loss of stochastic gradient flow, $f(\hat \beta^\sgf(t))$.  It is reasonable to expect that stochastic gradient flow converges linearly, by analogy to the results that are available for SGD \citep{karimi2016linear,vaswani2018fast,bassily2018exponential}.  The following lemma gives the details.

\begin{lemma} \label{dyna:exp:loss}
Fix $y$ and $X$.  Let $t \geq 0$.
\begin{itemize}
\item If $n > p$, define 
\begin{align*}
u & = \mu/n - 2 (n \epsilon)^2 / m \cdot \Big( \max_{i=1,\ldots,p} \big[ \diag(\hat \Sigma^2) \big]_i \Big) \\
v & = \Big[ 2 (n \epsilon)^2 q / m \cdot 
\max_{i=1,\ldots,p} \big[ \diag(\hat \Sigma^2) \big]_i \Big] / u \\
w & = \log \Big( \| y \|_2^2 / (2n) - q / (2n) \Big).
\end{align*}
Here, $q = \| P_{\nul(X^T)} y \|_2^2$ denotes the squared norm of the projection of $y$ onto the orthocomplement of the column space of $X$.

\item If $p \geq n$, define
\begin{align*}
u & = \mu/n - (n \epsilon)^2 / m \cdot \Big( \max_{i=1,\ldots,p} \big[ \diag(\hat \Sigma^2) \big]_i \Big) \\
v & = 0 \\
w & = \log \Big( \| y \|_2^2 / (2n) \Big).
\end{align*}
\end{itemize}
In either case, set $\epsilon$ small enough so that $u > 0$.  Then,
\[
f(\hat \beta^\sgf(t)) \leq \exp(-u t + w ) + v.
\]
\end{lemma}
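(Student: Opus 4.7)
The plan is to apply It\^o's formula to $\phi(\beta) = (2n)^{-1}\|y - X\beta\|_2^2$ along $\hat\beta^\sgf(t)$, then reduce the resulting identity to a scalar linear differential inequality in $f(t) = \E\phi(\hat\beta^\sgf(t))$ that can be integrated in closed form via Gr\"onwall. This mirrors the standard proof of exponential convergence of SGD under a Polyak--\L ojasiewicz (PL) condition, with the extra technical work being a careful upper bound on the trace of the diffusion coefficient matrix.

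The drift of \eqref{eq:sgf:nonconstcov} equals $-\nabla\phi(\beta(t))$ and $\nabla^2\phi \equiv \hat\Sigma$, so It\^o's formula, after taking expectations (the stochastic integral is a martingale), yields
\[
f'(t) = -\E\|\nabla\phi(\hat\beta^\sgf(t))\|_2^2 + \tfrac{1}{2}\,\E\tr\bigl(\hat\Sigma\, Q_\epsilon(\hat\beta^\sgf(t))\bigr).
\]
The first (drift) term is lower bounded via PL: since $\nabla\phi(\beta) = \hat\Sigma(\beta - \hat\beta^{\ls})$ for any least squares solution $\hat\beta^{\ls}$ lies in $\col(\hat\Sigma)$, on which $\hat\Sigma$ has smallest eigenvalue $\mu$, one has $\|\nabla\phi(\beta)\|_2^2 \ge 2\mu\,(\phi(\beta) - \phi^*)$, where $\phi^* = q/(2n)$ when $n>p$ and $\phi^* = 0$ when $p \ge n$ (so that $X$ has full row rank generically and any LS solution interpolates).

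The second (diffusion) term is upper-bounded using the representation $Q_\epsilon(\beta) = \tfrac{\epsilon}{nm}\bigl(X^T F(\beta) X - \tfrac{1}{n} X^T h(\beta) h(\beta)^T X\bigr)$ recorded in the remark following Lemma~\ref{thm:coeffs:second_term}. Dropping the subtracted (negative semidefinite) term and using $F(\beta) \preceq \|y - X\beta\|_2^2\, I$, together with $\tr(\hat\Sigma^2) \le p\,\max_i [\diag(\hat\Sigma^2)]_i$, yields a pointwise estimate $\tr(\hat\Sigma Q_\epsilon(\beta)) \le C\,\phi(\beta)$ with $C$ matching the constant that defines $u$ in the lemma; in the overparametrized case $p \ge n$ the subtracted $X^T h(\beta) h(\beta)^T X$ can be retained, which tightens $C$ by a factor of two and explains the factor-of-two difference in the two definitions of $u$. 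Feeding these two bounds back gives the scalar inequality $f'(t) \le -u\,f(t) + 2\mu\phi^*$, and Gr\"onwall's inequality yields $f(t) \le (f(0) - v)\exp(-ut) + v$ with $v = 2\mu\phi^*/u$. Since $f(0) = \|y\|_2^2/(2n)$, the gap $f(0) - v$ equals $\exp(w)$ in each of the two cases, producing the stated bound $f(\hat\beta^\sgf(t)) \le \exp(-ut + w) + v$.

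The main obstacle is the diffusion estimate: recovering the specific constant $\max_i [\diag(\hat\Sigma^2)]_i$, rather than a cruder quantity like $L^2$ or $\tr(\hat\Sigma^2)$, requires careful manipulation of $\tr(\hat\Sigma X^T F(\beta) X) = \tr(F(\beta) X X^T X X^T)$ and the elementwise identity $(XX^TXX^T)_{ii} = n\, x_i^T\hat\Sigma x_i$. The subtle difference between the $n>p$ and $p \ge n$ cases then propagates through both $u$ and $v$ (and through whether $\phi^*$ is a strictly positive fixed offset or vanishes in the interpolating regime), and must be tracked separately in each case.
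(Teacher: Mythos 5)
Your overall strategy is the paper's: the supplement first derives (via It\^o's formula) the SDE satisfied by the loss itself, takes expectations to kill the martingale term, lower-bounds the drift by a multiple of the excess loss using the smallest nonzero singular value (your PL inequality), upper-bounds the It\^o correction $\tfrac12\tr(\nabla^2 g\cdot\Sigma(\beta))$ linearly in the loss via the diagonal structure of $F(\beta)$, and integrates the resulting linear differential inequality, treating $n>p$ and $p\ge n$ separately. So the skeleton is right. The gaps are in the constant-tracking for the underparametrized case, which is where the lemma's actual content lies.

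First, the factor of two separating the two definitions of $u$ does not come from retaining versus dropping the rank-one correction $-\tfrac1n X^T h(\beta)h(\beta)^T X$ in $Q_\epsilon$; the paper drops that term in \emph{both} cases (it only remarks that keeping it can only help), and there is no reason that term equals half of $X^T F(\beta)X$ in the relevant trace. The factor of two enters only when $n>p$, from decomposing the residual as $r(t)=P_{\nul(X^T)}y + P_{\col(X)}r(t)$ and using $\|r(t)\|_2^2\le 2q+2\|P_{\col(X)}r(t)\|_2^2$ so that the diffusion bound becomes (constant) $+$ (multiple of the excess loss); when $p\ge n$ the irreducible component vanishes and no splitting is needed. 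Second, and consequently, your offset $v=2\mu\phi^*/u$ has the wrong origin and the wrong form: the lemma's $v$ is proportional to $\epsilon^2 q\max_i[\diag(\hat\Sigma^2)]_i/(mu)$ and vanishes as $\epsilon\to 0$; it is exactly the ``$2q$'' piece of the diffusion bound divided by the decay rate, not the drift offset $2\mu\phi^*$ from PL. To land on the stated constants you must apply PL to the \emph{excess} loss $f-q/(2n)$ (so no drift constant appears at all) and let the additive constant come entirely from the diffusion term evaluated at $P_{\nul(X^T)}y$. Your closing identity ``$f(0)-v=\exp(w)$'' also fails: $\exp(w)=\|y\|_2^2/(2n)-q/(2n)$ is the initial \emph{excess} loss, which forces $v=q/(2n)$, inconsistent with both your $v$ and the lemma's; the resolution is that the lemma is implicitly stated for the column-space part of the loss (see the discussion around the orthogonal decomposition of the loss in the supplement), with $w$ the log of the initial excess and $v$ the asymptotic excess. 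Finally, your claim that the diffusion estimate ``matches the constant that defines $u$'' should be checked: a derivation consistent with the main text's $Q_\epsilon$ gives a correction that is first order in $\epsilon$, whereas the lemma's correction scales as $(n\epsilon)^2/m$ (the supplement reaches this by parametrizing the noise as $\eta\,\sigma(\beta)$ with $\eta=\epsilon/\sqrt m$), so the constants do not fall out automatically the way your plan asserts.
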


\begin{remark}
Lemma \ref{dyna:exp:loss} can be seen as the continuous-time analog of, \eg, Theorem 4 in \citet{karimi2016linear}, and may be of independent interest.
\end{remark}

Now define $\tilde w = \E_\eta [ \exp(w) ]$, $\tilde v = \E_\eta (v)$, as well as the effective variance due to mini-batching terms,
\begin{align}
& \nu_i(t) = \frac{\exp(w) s_i}{s_i - u/2} \big( \exp(-u t) - \exp(-2 t s_i) \big) \label{eq:eff} \\
& \hspace{1in} + v \big( 1 - \exp(-2 t s_i) \big), \; i=1,\ldots,p. \notag
\end{align}

We recall a result from \citet{ali2018continuous}, paraphrased below.

\begin{theorem}[Theorem 1 in \citet{ali2018continuous}] \label{thm:gf}
Fix $X$.  Let $t \geq 0$.  Write $\hat \beta^\ridge(\lambda) = (X^T X + n \lambda I)^{-1} X^T y$, for the ridge regression estimate with tuning parameter $\lambda \geq 0$.  Then, $\Bias^2(\hat \beta^\gf(t); \beta_0) \leq \Bias^2(\hat \beta^\ridge(1/t); \beta_0)$, and $\Var(\hat \beta^\gf(t)) \leq 1.6862 \cdot \Var(\hat \beta^\ridge(1/t))$, so that $\Risk(\hat \beta^\gf(t); \beta_0) \leq 1.6862 \cdot \Risk(\hat \beta^\ridge(1/t); \beta_0)$.
\end{theorem}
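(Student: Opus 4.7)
The plan is to reduce the multivariate risk comparison to a pair of scalar inequalities via a spectral decomposition, and then to close the variance inequality with one-dimensional calculus. First I would take the SVD $X = U \Sigma V^T$, so that $\hat\Sigma = X^T X / n$ has eigenvalues $s_1,\ldots,s_p$ with eigenvectors $v_i$. Using \eqref{eq:gf:soln}, the coordinate of $\hat\beta^\gf(t)$ along $v_i$ is $(1 - e^{-t s_i})/(n s_i)\cdot v_i^T X^T y$ on the row span and zero on $\nul(X)$; direct inversion of $X^T X + n\lambda I$ gives exactly the same representation for $\hat\beta^\ridge(1/t)$, with the factor $(1 - e^{-t s_i})$ replaced by $t s_i /(1 + t s_i)$. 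Under $y = X\beta_0 + \eta$ with zero-mean noise, both the squared bias and the variance then split as sums over the indices $i$ with $s_i > 0$; the null-space contributions are identical for the two estimators (both are zero-initialized) and cancel.

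For the bias, the $i$-th $\gf$ term equals $e^{-2 t s_i} (v_i^T \beta_0)^2$ and the $i$-th ridge term equals $(1 + t s_i)^{-2} (v_i^T \beta_0)^2$. The elementary inequality $e^x \ge 1 + x$ on $x \ge 0$ gives $e^{-2 t s_i} \le (1 + t s_i)^{-2}$ termwise, which immediately yields $\Bias^2(\hat\beta^\gf(t); \beta_0) \le \Bias^2(\hat\beta^\ridge(1/t); \beta_0)$ after summing. This step is essentially free.

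For the variance, the $i$-th $\gf$ term equals $\sigma^2 (1 - e^{-t s_i})^2 / (n s_i)$ and the $i$-th ridge term equals $\sigma^2 t^2 s_i/(n(1 + t s_i)^2)$. Their ratio is $g(t s_i)^2$, where
\[
g(x) = \frac{(1 - e^{-x})(1 + x)}{x}, \qquad x > 0,
\]
extended by $g(0^+) = 1$, so it suffices to show $\sup_{x \ge 0} g(x)^2 \le 1.6862$. Since $g(x) \to 1$ as $x \to 0^+$ and as $x \to \infty$, the supremum is attained at an interior stationary point; differentiating and simplifying reduces the stationarity condition to the transcendental equation $x(1 + x) = e^x - 1$, which has a unique positive root $x^\star \approx 1.79$, and evaluating $g$ there gives $g(x^\star)^2 \le 1.6862$. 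Summing over $i$ delivers the claimed variance bound, and the risk bound follows from bias plus variance.

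The main obstacle is the sharp constant in the variance step: $g$ strictly exceeds $1$ on an intermediate range, so any pointwise bound by $1$ is false and the whole content of the theorem lies in quantifying the overshoot. Once reduced to $g$, the argument is one-dimensional, but some care is needed to argue that the interior critical point is the global maximum, for instance by checking that $h(x) = e^x - 1 - x(1+x)$ has $h''(x) = e^x - 2 > 0$ for $x \ge \log 2$ together with $h(0) = h'(0) = 0$, so $h$ admits a single sign change on $(0,\infty)$. Everything else (the spectral decomposition, the termwise bias inequality, and the summation) is routine, and no condition on $X$ enters because the scalar bound on $g$ is uniform in $s_i$.
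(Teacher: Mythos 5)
Your proposal is correct and follows essentially the same route as the source: the paper itself only cites this result from \citet{ali2018continuous} without reproving it, but the key ingredient you identify---the scalar function $(1-e^{-x})(1+x)/x$ with unique maximum $1.2985$ at $x^\star = 1.7933$ (so $1.2985^2 \leq 1.6862$)---is exactly the one invoked in the supplement's proof of Lemma \ref{thm:coeffs:first_term3} via Lemma 7 of that reference, and your spectral reduction, termwise bias inequality $e^{-2ts_i} \leq (1+ts_i)^{-2}$, and unimodality argument via $h(x) = e^x - 1 - x(1+x)$ are all sound.
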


Putting Lemmas \ref{thm:coeffs:second_term} and \ref{dyna:exp:loss} together with Theorem \ref{thm:gf} yields the following result, relating the risk of stochastic gradient flow to that of gradient flow and ridge regression.  

\begin{theorem} \label{thm:risk}
Fix $X$.  Set $\epsilon$ according to Lemma \ref{dyna:exp:loss}.  Let $t > 0$.
\begin{itemize}[topsep=0pt,partopsep=0pt,itemsep=0pt, parsep=0pt]
\item Then, relative to gradient flow,
\begin{align}\label{eq:10}
& \Risk(\hat \beta^\sgf(t); \beta_0) 
 \leq \Bias^2(\hat \beta^\gf(t); \beta_0) \\
 &+ \Var_\eta( \hat \beta^\gf(t))
 + \epsilon \cdot \frac{n}{m} \sum_{i=1}^p \E_\eta \nu_i(t). \notag
\end{align}

\item Relative to ridge regression,
\begin{align}\label{eq:10b}
& \Risk(\hat \beta^\sgf(t); \beta_0) 
 \leq \Bias^2(\hat \beta^\ridge(1/t); \beta_0) \\
 &+ 1.6862 \cdot \Var_\eta( \hat \beta^\ridge(1/t))
 + \epsilon \cdot \frac{n}{m} \sum_{i=1}^p \E_\eta \nu_i(t). \notag
\end{align}
\end{itemize}
\end{theorem}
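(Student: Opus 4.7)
The plan is to combine the standard bias--variance decomposition with Lemmas \ref{thm:coeffs:second_term} and \ref{dyna:exp:loss}, and then invoke Theorem \ref{thm:gf} to pass from gradient flow to ridge. I would first write $\Risk(\hat\beta^\sgf(t);\beta_0) = \Bias^2(\hat\beta^\sgf(t);\beta_0) + \Var_{\eta,Z}(\hat\beta^\sgf(t))$. From the explicit representation in Lemma \ref{lem:soln}, the stochastic It\^{o} integral has mean zero, so $\E_Z \hat\beta^\sgf(t) = \hat\beta^\gf(t)$ and therefore $\Bias^2(\hat\beta^\sgf(t);\beta_0) = \Bias^2(\hat\beta^\gf(t);\beta_0)$. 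Then, conditioning on $\eta$ (which is independent of the mini-batch noise $Z$), the law of total variance gives
\[
\Var_{\eta,Z}(\hat\beta^\sgf(t)) = \E_\eta \bigl[\tr \Cov_Z(\hat\beta^\sgf(t)\mid\eta)\bigr] + \Var_\eta(\hat\beta^\gf(t)).
\]

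The heart of the proof is bounding the mini-batching variance $\E_\eta\tr\Cov_Z(\hat\beta^\sgf(t)\mid\eta)$. At fixed $\eta$, Lemma \ref{thm:coeffs:second_term} already reduces this to an integral against $f(\hat\beta^\sgf(\tau))$, and Lemma \ref{dyna:exp:loss} supplies the uniform bound $f(\hat\beta^\sgf(\tau)) \leq \exp(-u\tau+w) + v$. Diagonalizing $\hat\Sigma = \sum_i s_i v_i v_i^T$ turns the trace inside the integral into $\tr[\hat\Sigma\exp(2(\tau-t)\hat\Sigma)] = \sum_i s_i \exp(2(\tau-t)s_i)$, after which each term of the sum integrates in closed form:
\begin{align*}
&2 \int_0^t [\exp(-u\tau+w)+v]\, s_i \exp(2(\tau-t)s_i)\, d\tau \\
&\quad = \frac{2\exp(w)\, s_i}{2s_i - u}\bigl(\exp(-ut)-\exp(-2ts_i)\bigr) + v\bigl(1-\exp(-2ts_i)\bigr),
\end{align*}
which is exactly $\nu_i(t)$ from \eqref{eq:eff} (using $2s_i/(2s_i-u) = s_i/(s_i-u/2)$). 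Multiplying by $n\epsilon/m$ and taking $\E_\eta$ gives $\E_\eta\tr\Cov_Z(\hat\beta^\sgf(t)\mid\eta) \leq (n\epsilon/m)\sum_i \E_\eta \nu_i(t)$, which combined with the decomposition above yields \eqref{eq:10}. For \eqref{eq:10b} I then directly apply Theorem \ref{thm:gf} to bound $\Bias^2(\hat\beta^\gf(t);\beta_0)$ by $\Bias^2(\hat\beta^\ridge(1/t);\beta_0)$ and $\Var_\eta(\hat\beta^\gf(t))$ by $1.6862 \cdot \Var_\eta(\hat\beta^\ridge(1/t))$.

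The main obstacle is the bookkeeping in the integral step: one must verify that the algebra matches the precise form of $\nu_i(t)$, and handle the edge cases $s_i = 0$ (where the term in $\nu_i(t)$ should collapse to zero, as it does once one interprets $s_i/(s_i-u/2)\cdot(\exp(-ut)-1)$ in the limit) and $2s_i = u$ (where the integrand has a removable singularity yielding an extra factor of $t$). A secondary subtlety is keeping the order of operations correct: Lemma \ref{dyna:exp:loss}'s bound on $f$ is deterministic at fixed $y$, so it can be substituted inside $\Cov_Z(\cdot \mid \eta)$ before taking $\E_\eta$, which is what allows the $\E_\eta$ to pass cleanly onto $\nu_i(t)$ in the final expression.
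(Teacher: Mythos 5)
Your proposal is correct and follows essentially the same route as the paper's proof: the same bias identity via the zero-mean It\^{o} integral, the same law-of-total-variance decomposition, the same combination of Lemmas \ref{thm:coeffs:second_term} and \ref{dyna:exp:loss} with the eigendecomposition of $\hat\Sigma$ to integrate out to $\nu_i(t)$, and the same final appeal to Theorem \ref{thm:gf} for the ridge bound. Your explicit closed-form integration and the remarks on the edge cases $s_i=0$ and $2s_i=u$ are consistent with (and slightly more careful than) the paper's argument.
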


The analogous results for in-sample risk are similar, and deferred to the supplement for space reasons.

\begin{proof}
From Lemma \ref{lem:soln}, we have
\begin{align*}
\hat \beta^\sgf(t) = \hat \beta^\gf(t) + \int_0^t \exp \big( (\tau-t) \hat \Sigma) \big) Q_\epsilon(\beta(\tau))^{1/2} dW(\tau).
\end{align*}
The law of total expectation coupled with standard properties of Brownian motion (\eg, Theorem 3.2.1 in \citet{oksendal2003stochastic}) implies $\E_{\eta, Z} ( \hat \beta^\sgf(t) ) = \E_\eta \big[ \E_Z ( \hat \beta^\sgf(t) \,|\, \eta ) \big] = \E_\eta ( \hat \beta^\gf(t) )$.  Therefore,
\begin{equation}
\Bias^2(\hat \beta^\sgf(t); \beta_0) = \Bias^2(\hat \beta^\gf(t); \beta_0). \label{eq:3}
\end{equation}

Turning to the variance, the law of total variance and the above calculation implies
\begin{align}
& \tr \Cov_{\eta, Z}(\hat \beta^\sgf(t)) \notag \\
& = \tr \Big( \E_\eta \big[ \Cov_Z( \hat \beta^\sgf(t) \,|\, \eta ) \big] + \Cov_\eta \big( \E_Z ( \hat \beta^\sgf(t) \,|\, \eta ) \big) \Big) \notag \\
& = \tr \Big( \E_\eta \big[ \Cov_Z( \hat \beta^\sgf(t) \,|\, \eta ) \big] + \Cov_\eta(  \hat \beta^\gf(t) ) \Big) \notag \\
& = \tr \E_\eta \big[ \Cov_Z( \hat \beta^\sgf(t) \,|\, \eta ) \big] 
+ \Var_\eta( \hat \beta^\gf(t)). \label{eq:4a}
\end{align}

As for the trace appearing in \eqref{eq:4a}, we have
\begin{align*}
& \tr \E_\eta \big[ \Cov_Z ( \hat \beta^\sgf(t) \,|\, \eta ) \big] \notag 
 = \E_\eta \big[ \tr \Cov_Z ( \hat \beta^\sgf(t) \,|\, \eta ) \big] \notag \\
& \leq \E_\eta \Bigg[ \frac{2 n \epsilon}{m} \cdot \int_0^t f(\hat \beta^\sgf(\tau)) \tr [ \hat \Sigma \exp(2 (\tau - t) \hat \Sigma) ] d \tau \Bigg] \notag
\end{align*}
\begin{align}
& = \frac{2 n \epsilon}{m} \cdot \int_0^t \E_\eta [ f(\hat \beta^\sgf(\tau)) ] \tr [ \hat \Sigma \exp(2 (\tau - t) \hat \Sigma) ] d \tau \label{eq:4c} \\
& \leq \epsilon \cdot \frac{n}{m} \sum_{i=1}^p \Big( \tilde v \big( 1 - \exp(-2 t s_i) \big) \notag \\ 
& \quad + \frac{\tilde w s_i}{s_i - u/2} \big( \exp(-u t) - \exp(-2 t s_i) \big) \Big). \label{aa}
\end{align}
Here, the second line followed from Lemma \ref{thm:coeffs:second_term}.  The third followed from Fubini's theorem.  The fourth followed by integrating, using the eigendecomposition $\hat \Sigma = V S V^T$ and Lemma \ref{dyna:exp:loss}, along with one final application of Fubini's theorem.  This shows the claim for gradient flow.  The claim for ridge follows by applying Theorem \ref{thm:gf}.
\end{proof}

The following result gives a more interpretable version of Theorem \ref{thm:risk}, at the expense of some sharpness.

\begin{lemma} \label{thm:relative_risk}
Fix $X$.  Set $\epsilon$ as in Lemma \ref{dyna:exp:loss}.  Let $t > 0$.  Define
\begin{align*}
\alpha & = p \tilde w \epsilon \cdot \frac{n \mu}{m (\mu - u/2)}, \\
\gamma(t) & = 1 + 2.164 \epsilon \cdot \frac{ \tilde v n^2 \max (1/t, L) }{m \sigma^2},
\end{align*}
$\kappa=L/\mu$, and $\delta=\alpha / {\|\beta_0\|_2}^{1/\kappa}$.
\begin{itemize}[topsep=0pt,partopsep=0pt,itemsep=0pt, parsep=0pt]
\item Then, for gradient flow,
\begin{align*}
& \Risk(\hat \beta^\sgf(t); \beta_0) \leq \Bias^2(\hat \beta^\gf(t); \beta_0) \\
& + \delta \cdot | \Bias(\hat \beta^\gf(t); \beta_0) |^{1/\kappa} + \gamma(t) \cdot \Var(\hat \beta^\gf(t)).
\end{align*}

\item For ridge regression,
\begin{align*}
& \Risk(\hat \beta^\sgf(t); \beta_0) \leq \Bias^2(\hat \beta^\ridge(1/t); \beta_0) \\
& \hspace{0.2in} + \delta \cdot | \Bias(\hat \beta^\ridge(1/t); \beta_0) |^{1/\kappa} \\
& \hspace{0.2in} + 1.6862 \gamma(t) \cdot \Var(\hat \beta^\ridge(1/t)).
\end{align*}
\end{itemize}
\end{lemma}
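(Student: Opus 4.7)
The plan is to start from Theorem \ref{thm:risk} and bound the ``extra'' term $\epsilon(n/m)\sum_{i=1}^p \E_\eta \nu_i(t)$ by $\delta \cdot |\Bias|^{1/\kappa} + (\gamma(t)-1) \cdot \Var(\hat\beta^\gf(t))$. To do this I split $\nu_i(t)$ from \eqref{eq:eff} into a transient piece $\nu_i^{(1)}(t) = (\exp(w) s_i/(s_i - u/2))(\exp(-ut)-\exp(-2ts_i))$ and a limiting piece $\nu_i^{(2)}(t) = v(1-\exp(-2ts_i))$, handle each separately, and then appeal to Theorem \ref{thm:gf} to transfer the final bound from gradient flow to ridge (the $|\Bias|^{1/\kappa}$ term transfers because $x \mapsto x^{1/\kappa}$ is monotone on $\mathbb{R}_{\geq 0}$, while the $\Var$ term picks up the $1.6862$ factor).

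For the transient piece, I would use that $s \mapsto s/(s-u/2)$ is decreasing on $(u/2,\infty)$, so under the step-size condition ensuring $\mu > u/2$, each ratio $s_i/(s_i-u/2)$ is at most $\mu/(\mu-u/2)$. Dropping the nonpositive $-\exp(-2ts_i)$ contribution, summing the $p$ terms, multiplying by $\epsilon n/m$, and taking $\E_\eta$ (so that $\exp(w) \to \tilde w$) leaves exactly $\alpha \exp(-ut)$. The key step is then to convert $\exp(-ut)$ into $|\Bias|^{1/\kappa}/\|\beta_0\|_2^{1/\kappa}$: starting from $\Bias(\hat\beta^\gf(t); \beta_0) = -\exp(-t\hat\Sigma)\beta_0$ (assuming $\beta_0 \in \col(X^T)$ for simplicity), the elementary spectral bound $|\Bias| \geq \exp(-tL)\|\beta_0\|_2$ gives $\exp(-ut) \leq (|\Bias|/\|\beta_0\|_2)^{u/L}$ by raising to the $u/L$ power; under the small-$\epsilon$ regime of Lemma \ref{dyna:exp:loss} in which $u$ tracks $\mu$, this exponent coincides with $\mu/L = 1/\kappa$, yielding $\alpha \exp(-ut) \leq \delta |\Bias|^{1/\kappa}$.

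For the limiting piece, the core inequality is $s_i(1+\exp(-ts_i))/(1-\exp(-ts_i)) \leq 2.164 \max(1/t, L)$, which combined with $1-\exp(-2ts_i) = (1+\exp(-ts_i))(1-\exp(-ts_i))$ and the spectral expansion $\Var(\hat\beta^\gf(t)) = (\sigma^2/n)\sum_i (1-\exp(-ts_i))^2/s_i$ (standard, from $\hat\beta^\gf(t) = (X^TX)^+(I-\exp(-tX^TX/n))X^Ty$) produces $\sum_i(1-\exp(-2ts_i)) \leq (2.164\, n \max(1/t,L)/\sigma^2) \cdot \Var(\hat\beta^\gf(t))$. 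To verify the core inequality I would set $y = ts_i$ and study $g(y) = y(1+e^{-y})/(1-e^{-y})$; its derivative is $(1-e^{-2y}-2ye^{-y})/(1-e^{-y})^2$, whose numerator $\phi(y) = 1-e^{-2y}-2ye^{-y}$ satisfies $\phi(0)=0$ and $\phi'(y) = 2e^{-y}(e^{-y}-1+y) \geq 0$, so $g$ is increasing on $[0,\infty)$ with $g(1) = (1+e^{-1})/(1-e^{-1}) \approx 2.164$. A short case split on $y \leq 1$ versus $y \geq 1$ then yields $g(y)/t = s_i \cdot (1+e^{-ts_i})/(1-e^{-ts_i}) \leq 2.164 \max(1/t, L)$. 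Multiplying through by $\epsilon n \tilde v/m$ recovers $(\gamma(t)-1) \Var(\hat\beta^\gf(t))$ exactly, with the constants matching by design.

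I expect the main obstacle to be the exponent-matching step in the transient bound: the natural spectral argument produces $\exp(-ut) \leq (|\Bias|/\|\beta_0\|_2)^{u/L}$ with exponent $u/L$, whereas the lemma states the exponent $1/\kappa = \mu/L$, so one must carefully exploit the fact that $u$ approaches $\mu$ in the small-$\epsilon$ regime (or, if $u < \mu$, absorb the slack into the constants $\alpha$ and $\delta$). By contrast, the limiting-piece argument is largely mechanical once the sharp constant $2.164$ is extracted from the Bernoulli-type function $g$, and the ridge-regression version is an immediate consequence of the gradient-flow version via Theorem \ref{thm:gf}.
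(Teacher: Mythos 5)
Your proposal follows essentially the same route as the paper: it starts from Theorem \ref{thm:risk}, splits the effective-variance term into a transient piece (bounded by $\alpha\exp(-ut)$ using $s_i/(s_i-u/2)\le \mu/(\mu-u/2)$ and then converted into a power of the bias via the spectral representation $\Bias^2(\hat\beta^\gf(t);\beta_0)=\sum_i (v_i^T\beta_0)^2\exp(-2ts_i)$) and a limiting piece (handled with the nondecreasing function $x(1+e^{-x})/(1-e^{-x})$, the constant $2.164$, and the case split at $tL=1$), and finally transfers to ridge via Theorem \ref{thm:gf}. The exponent mismatch you flag ($u/L$ from the spectral argument versus the stated $1/\kappa=\mu/L$) is genuine, but the paper's own proof silently substitutes $\mu/L$ for $u/L$ at exactly that step, so your argument is no weaker than the original.
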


\begin{remark}
Interestingly, the result shows that the risk of stochastic gradient flow may be seen as the ridge bias raised to a power strictly less than 1, plus a time-dependent scaling of the ridge variance---which is quite different from the situation with gradient flow (\cf~Theorem \ref{thm:gf}).
\end{remark}

Finally, subtracting the ridge risk from both sides of \eqref{eq:10b} immediately gives our main result, a bound on the excess risk of stochastic gradient flow over ridge.

\begin{theorem} \label{thm:excess}
Fix $X$.  Set $\epsilon$ as in Lemma \ref{dyna:exp:loss}.  Let $t > 0$.  Then,
\begin{align}
& \Risk(\hat \beta^\sgf(t); \beta_0) - \Risk(\hat \beta^\ridge(1/t); \beta_0) \label{eq:excess} \\
& \leq 0.6862 \cdot \Var_\eta( \hat \beta^\ridge(1/t))
 + \epsilon \cdot \frac{n}{m} \sum_{i=1}^p \E_\eta \nu_i(t). \notag
\end{align}
\end{theorem}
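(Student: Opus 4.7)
The plan is essentially to read off the conclusion from Theorem \ref{thm:risk}, since the statement is nothing more than the ridge form of that theorem with the ridge risk subtracted from both sides. Concretely, I would start from \eqref{eq:10b},
\[
\Risk(\hat \beta^\sgf(t); \beta_0) \leq \Bias^2(\hat \beta^\ridge(1/t); \beta_0) + 1.6862 \cdot \Var_\eta( \hat \beta^\ridge(1/t)) + \epsilon \cdot \frac{n}{m} \sum_{i=1}^p \E_\eta \nu_i(t),
\]
and then invoke the standard bias-variance decomposition
\[
\Risk(\hat \beta^\ridge(1/t); \beta_0) = \Bias^2(\hat \beta^\ridge(1/t); \beta_0) + \Var_\eta(\hat \beta^\ridge(1/t)).
\]
Subtracting this identity from the above bound cancels the ridge bias term and collapses the ridge variance coefficient from $1.6862$ to $1.6862 - 1 = 0.6862$, while leaving the mini-batching term $\epsilon \cdot (n/m) \sum_{i=1}^p \E_\eta \nu_i(t)$ untouched. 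That is exactly the inequality \eqref{eq:excess}.

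There is no real obstacle — all the substantive work has already been performed upstream: Lemma \ref{thm:coeffs:second_term} provides the per-$t$ bound on the mini-batching variance, Lemma \ref{dyna:exp:loss} supplies the exponential-decay-plus-floor bound on $f(\hat \beta^\sgf(\tau))$ that makes the integrand in \eqref{eq:4c} integrable in closed form and yields the effective-variance expressions $\nu_i(t)$ in \eqref{eq:eff}, and Theorem \ref{thm:gf} converts the gradient-flow bound in \eqref{eq:10} into the ridge bound \eqref{eq:10b} by contributing both the $\Bias^2(\hat \beta^\ridge(1/t); \beta_0)$ domination and the $1.6862$ factor on the variance. The assumption on $\epsilon$ inherited from Lemma \ref{dyna:exp:loss} (so that $u>0$ and the exponential-decay bound on $f$ is valid) is what guarantees finiteness of every term; no additional conditions need to be imposed to make the subtraction legitimate.

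If I wanted to emphasize the structure of the excess-risk bound rather than just deriving it, I would point out that the cancellation of the bias term is not an accident: by \eqref{eq:3} the bias of stochastic gradient flow equals that of gradient flow, and by Theorem \ref{thm:gf} the latter is dominated by the ridge bias, so once we subtract the ridge risk the only bias contribution drops out. What remains on the right-hand side decomposes into two genuinely distinct pieces — the $0.6862 \cdot \Var_\eta(\hat \beta^\ridge(1/t))$ term that is inherited from the gradient-flow-to-ridge comparison and is independent of the stochastic optimization, and the $\epsilon \cdot (n/m) \sum_{i=1}^p \E_\eta \nu_i(t)$ term that is the true price of stochasticity and can be driven down by choosing a larger mini-batch size $m$ or smaller step size $\epsilon$, in line with the high-level claim in the introduction.
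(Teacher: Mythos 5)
Your proposal is correct and matches the paper's own argument exactly: the paper obtains Theorem \ref{thm:excess} by subtracting the ridge risk (via its bias--variance decomposition) from both sides of \eqref{eq:10b}, which cancels the bias term and reduces the variance coefficient from $1.6862$ to $0.6862$. Nothing further is needed.
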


We can understand the influence of the effective variance terms on the risks \eqref{eq:10}, \eqref{eq:10b}, \eqref{eq:excess} as follows.  As stochastic gradient flow moves away from initialization, the stochastic gradients become smaller, and so their variance decreases, which is captured by the first term in \eqref{eq:eff}, as it goes down with time.  As stochastic gradient flow approaches the least squares solution, there are two possibilities, depending on whether the solution is interpolating or not.  If the solution is interpolating, then stochastic gradient flow can fit the data perfectly, and hence $v = 0$ in \eqref{eq:eff}.  Otherwise, stochastic gradient flow fluctuates around the solution, which is captured by the second term in \eqref{eq:eff}, as it grows with time.

It is also interesting to note that the bounds \eqref{eq:10}, \eqref{eq:10b}, \eqref{eq:excess} depend linearly on $\epsilon/m$, corroborating recent empirical work \citep{krizhevsky2014one,goyal2017accurate,smith2017don,you2017scaling,shallue2019measuring}.

\begin{remark} \label{rem:const}
For space reasons, we compare the excess risk bound \eqref{eq:excess} to the analogous bound for the time-homogeneous process \eqref{eq:sgf:constcov} in the supplement.
\end{remark}

\section{Coefficient Bounds}
\label{sec:coeffs}

The coefficients of stochastic gradient flow and ridge regression may be close, even though the risks are not.  Therefore, here, we pursue bounds on the coefficient error, $\E_{\eta,Z} \| \hat \beta^\sgf(t) - \hat \beta^\ridge(1/t) \|_2^2$.  We start by giving a tight bound on the distance between the coefficents of gradient flow and ridge regression.

\begin{lemma} \label{thm:coeffs:first_term3}
Fix $X$.  Let $t \geq 0$.  Define
\[
g(t) = 
\begin{cases}
\frac{(1-\exp(-L t)) (1 + L t)}{L t}, & t \leq 1.7933/L \\
\frac{(1-\exp(-\mu t)) (1 + \mu t)}{ \mu t}, & t \geq 1.7933/\mu \\
1.2985, & 1.7933/L < t < 1.7933/\mu \\
\end{cases}.
\]
Then,
\[
\E_\eta \| \hat \beta^\gf(t) - \hat \beta^\ridge(1/t) \|_2^2 \leq (g(t)-1)^2 \cdot \E_\eta \| \hat \beta^\ridge(1/t) \|_2^2.
\]
\end{lemma}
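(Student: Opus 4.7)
The plan is to diagonalize both estimators in the common eigenbasis of $\hSigma$, reduce the vector inequality to a scalar optimization over the eigenvalues $s_i \in [\mu, L]$, and then locate the worst case by calculus.

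Using $\hSigma = V \diag(s_i) V^T$ together with the explicit form \eqref{eq:gf:soln}, a direct calculation gives, for each $i$ with $s_i > 0$,
\[
v_i^T \hat\beta^\gf(t) = \frac{1 - \exp(-t s_i)}{n s_i} \cdot v_i^T X^T y, \qquad v_i^T \hat\beta^\ridge(1/t) = \frac{t}{n(1 + t s_i)} \cdot v_i^T X^T y,
\]
while in the null directions ($s_i = 0$) both components vanish because $X^T y$ lies in the column space of $X^T$. Forming the ratio yields $v_i^T \hat\beta^\gf(t) = h(t s_i)\cdot v_i^T \hat\beta^\ridge(1/t)$, where $h(x) = (1-e^{-x})(1+x)/x$. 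Orthonormality of $V$ then gives the pointwise bound
\[
\| \hat\beta^\gf(t) - \hat\beta^\ridge(1/t) \|_2^2 = \sum_{i:\, s_i > 0} (h(t s_i) - 1)^2 (v_i^T \hat\beta^\ridge(1/t))^2 \leq \max_{i:\, s_i > 0} (h(t s_i) - 1)^2 \cdot \| \hat\beta^\ridge(1/t) \|_2^2,
\]
so, after taking expectations in $\eta$, the claim reduces to bounding $\max_{s \in [\mu, L]} (h(ts) - 1)^2$.

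The remaining work is a one-variable analysis of $h$. Note $h(x) \to 1$ as $x \to 0^+$ and as $x \to \infty$, so any interior extremum on $(0, \infty)$ is attained at a critical point. A short computation gives $h'(x) = x^{-2}\phi(x)$ with $\phi(x) = e^{-x}(1 + x + x^2) - 1$, and $\phi'(x) = x e^{-x}(1 - x)$, so $\phi$ rises on $(0,1)$ and falls on $(1,\infty)$; since $\phi(0) = 0$ and $\phi(x) \to -1$ at infinity, $\phi$ has a unique positive root $x^\star$, which numerically solves $1 + x + x^2 = e^x$ and equals $x^\star \approx 1.7933$, at which $h(x^\star) \approx 1.2985$. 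In particular, $h$ is unimodal on $(0,\infty)$, and $h(x) \geq 1$ throughout (equivalent to $e^x \geq 1 + x$), so $\max_s (h(ts)-1)^2$ is achieved at the $s$ maximizing $h(ts)$. Splitting into three regimes according to whether $x^\star$ lies to the right of, inside, or to the left of $[t\mu, tL]$ recovers the piecewise definition of $g(t)$: when $tL \leq x^\star$ (i.e.\ $t \leq 1.7933/L$) the max is $h(tL)$, when $t\mu \geq x^\star$ (i.e.\ $t \geq 1.7933/\mu$) it is $h(t\mu)$, and otherwise it is $h(x^\star) \approx 1.2985$. The only nontrivial step is the unimodality argument via $\phi$; the rest is a substitution.
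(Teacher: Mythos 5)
Your proof is correct and follows essentially the same route as the paper's: diagonalize in the eigenbasis of $\hSigma$, reduce to the scalar function $(1-e^{-x})(1+x)/x$, and use its unimodality with maximum $1.2985$ at $x^\star = 1.7933$ to split into the three regimes defining $g(t)$. The only difference is that you verify the unimodality and the location of the maximum directly via the factorization $h'(x) = x^{-2}\big(e^{-x}(1+x+x^2)-1\big)$, whereas the paper cites Lemma 7 of \citet{ali2018continuous} for these facts.
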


Figure \ref{fig:g} plots the function $g(t)$, defined in the lemma.  We see that $g(t)$ has a maximum of 1.2985, and tends to 1 as either $t \to 0$ or $t \to \infty$.  The behavior makes sense, as both $\hat \beta^\gf(t)$ and $\hat \beta^\ridge(1/t)$ tend to the null model as $t \to 0$, and the min-norm solution as $t \to \infty$.

\begin{figure}[ht]
\vskip -0.2in
\begin{center}
\centerline{\includegraphics[width=0.7\columnwidth]{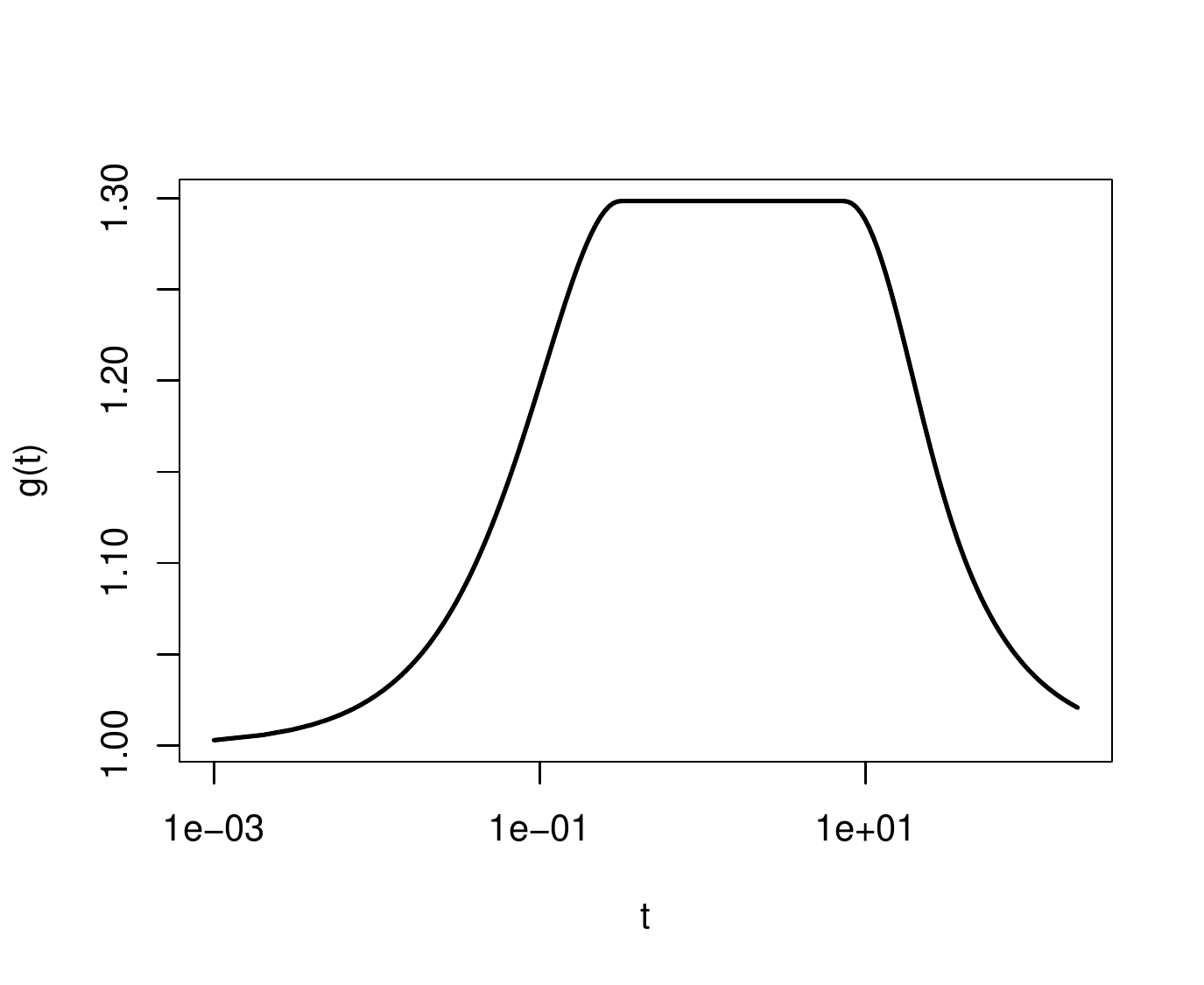}}
\vskip -0.2in
\caption{\textit{The function $g(t)$, defined in Lemma \ref{thm:coeffs:first_term3}.}}
\label{fig:g}
\end{center}
\vskip -0.2in
\end{figure}

Our main result now follows easily, by putting Lemma \ref{thm:coeffs:first_term3} together with Lemma \ref{thm:coeffs:second_term} from Section \ref{sec:risk}.

\begin{theorem}\label{thm:coeffs}
Fix $X$.  Set $\epsilon$ as in Lemma \ref{dyna:exp:loss}.  Let $t > 0$.  Then,
\begin{align*}
& \E_{\eta,Z} \| \hat \beta^\sgf(t) - \hat \beta^\ridge(1/t) \|_2^2 \\
& \leq (g(t)-1)^2 \cdot \E_\eta \| \hat \beta^\ridge(1/t) \|_2^2 + \epsilon \cdot \frac{n}{m} \sum_{i=1}^p \nu_i(t).
\end{align*}
\end{theorem}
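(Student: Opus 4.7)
The plan is to prove Theorem \ref{thm:coeffs} via an additive bias-variance-style decomposition around gradient flow, $\hat \beta^\gf(t)$, which by Lemma \ref{lem:soln} is precisely the conditional mean $\E_Z[\hat \beta^\sgf(t)\,|\,\eta]$ of stochastic gradient flow given the data. Writing
\begin{equation*}
\hat \beta^\sgf(t) - \hat \beta^\ridge(1/t) = \big(\hat \beta^\sgf(t) - \hat \beta^\gf(t)\big) + \big(\hat \beta^\gf(t) - \hat \beta^\ridge(1/t)\big),
\end{equation*}
squaring in $\ell_2$, and taking $\E_{\eta,Z}$, the cross term should vanish, leaving two pieces that can be controlled by the earlier lemmas.

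To make this precise, I would first argue that the stochastic integral in \eqref{eq:soln} has zero conditional mean: conditional on $\eta$, the integrand $\exp(\tau \hat\Sigma) Q_\epsilon(\hat \beta^\sgf(\tau))^{1/2}$ is adapted to the Brownian filtration, so the martingale property of the It\^o integral gives $\E_Z[\hat \beta^\sgf(t) - \hat \beta^\gf(t)\,|\,\eta] = 0$. Since $\hat \beta^\gf(t) - \hat \beta^\ridge(1/t)$ is deterministic given $\eta$, the cross term vanishes by the tower rule, yielding
\begin{equation*}
\E_{\eta, Z}\|\hat \beta^\sgf(t) - \hat \beta^\ridge(1/t)\|_2^2 = \E_\eta \tr\Cov_Z(\hat \beta^\sgf(t)\,|\,\eta) + \E_\eta\|\hat \beta^\gf(t) - \hat \beta^\ridge(1/t)\|_2^2,
\end{equation*}
where I used $\E_Z\|\hat \beta^\sgf(t) - \hat \beta^\gf(t)\|_2^2 = \tr\Cov_Z(\hat \beta^\sgf(t)\,|\,\eta)$ since $\hat \beta^\gf(t)$ is the conditional mean.

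The second term is bounded directly by Lemma \ref{thm:coeffs:first_term3}, producing $(g(t)-1)^2 \cdot \E_\eta\|\hat \beta^\ridge(1/t)\|_2^2$. For the first term, I would apply Lemma \ref{thm:coeffs:second_term} to bound $\tr\Cov_Z(\hat \beta^\sgf(t)\,|\,\eta)$ by a $\tau$-integral of $f(\hat \beta^\sgf(\tau))\tr[\hat\Sigma\exp(2(\tau-t)\hat\Sigma)]$, interchange the integral with $\E_\eta$ via Fubini, control $\E_\eta f(\hat \beta^\sgf(\tau))$ by Lemma \ref{dyna:exp:loss}, and integrate in $\tau$ using the eigendecomposition $\hat\Sigma = V S V^T$. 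This is precisely the sequence of manipulations from \eqref{eq:4c} to \eqref{aa} in the proof of Theorem \ref{thm:risk}, and it produces the effective-variance term $\epsilon \cdot (n/m) \sum_{i=1}^p \nu_i(t)$, completing the bound.

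The main obstacle I anticipate is the cross-term vanishing: because $Q_\epsilon(\hat \beta^\sgf(\tau))^{1/2}$ depends on the sample path, one must invoke the It\^o integral's martingale property for path-dependent adapted integrands (rather than only linearity of expectation) and verify the associated integrability conditions on $[0,t]$. Once that is in hand, the rest is essentially assembly of earlier lemmas, and the only other care needed is to ensure the $\E_\eta$ is pushed through the $\tau$-integral and eigendecomposition in the same order used in the proof of Theorem \ref{thm:risk}, so that the resulting bound matches the claimed form with $\nu_i(t)$.
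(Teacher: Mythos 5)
Your proposal is correct and follows essentially the same route as the paper: the paper likewise decomposes the squared error around the conditional mean $\E_Z[\hat\beta^\sgf(t)\,|\,\eta]=\hat\beta^\gf(t)$ (justifying the vanishing cross term by the zero-mean property of the It\^o integral, citing Theorem 3.2.1 of \citet{oksendal2003stochastic}), then bounds the resulting two terms with Lemma \ref{thm:coeffs:first_term3} and with Lemma \ref{thm:coeffs:second_term} combined with the integration argument from the proof of Theorem \ref{thm:risk}. Your extra care about the martingale property for the path-dependent integrand is a sound elaboration of what the paper treats as standard, not a different method.
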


\begin{proof}
Expanding $\E_{\eta,Z} \| \hat \beta^\sgf(t) - \hat \beta^\ridge(1/t) \|_2^2$, adding and subtracting $\| \E_{\eta,Z} ( \hat \beta^\sgf(t) ) \|_2^2$, and rearranging yields
\begin{align*}
& \E_{\eta,Z} \| \hat \beta^\sgf(t) - \hat \beta^\ridge(1/t) \|_2^2 \\
& = \E_\eta \| \E_{\eta,Z} ( \hat \beta^\sgf(t) ) - \hat \beta^\ridge(1/t) \|_2^2 + \tr \E_\eta \big[ \Cov_Z (\hat \beta^\sgf(t)) \big].
\end{align*}
As $Q_\epsilon(\beta(t))^{1/2}$ is continuous, it follows from standard properties of Brownian motion (\eg, Theorem 3.2.1 in \citet{oksendal2003stochastic}) that $\E_Z ( \hat \beta^\sgf(t) )  = \hat \beta^\gf(t)$.  Therefore, we have
\begin{align*}
& \E_{\eta,Z} \| \hat \beta^\sgf(t) - \hat \beta^\ridge(\lambda) \|_2^2 \\
& = \E_\eta \| \hat \beta^\gf(t) - \hat \beta^\ridge(1/t) \|_2^2 + \tr \E_\eta \big[ \Cov_Z (\hat \beta^\sgf(t)) \big].
\end{align*}
Lemma \ref{thm:coeffs:first_term3} gives a bound on the first term in the preceding display.  Lemma \ref{thm:coeffs:second_term} and the same arguments used in the proof of Theorem \ref{thm:risk} give a bound on the second term.  Putting the pieces together yields the result.
\end{proof}


\begin{remark}
A bound on the coefficient error, $\E_{\eta,Z} \| \hat \beta^\sgf(t) - \hat \beta^\ridge(1/t) \|_2^2$, is in some sense fundamental, since the risks are close when the coefficients are.  Nonetheless, obtaining risk bounds directly (as was done in Section \ref{sec:risk}) is still interesting, as these can be sharper.
\end{remark}

\begin{remark}
It is possible to give a similar, albeit less sharp, result for any convex loss satisfying a restricted secant inequality \citep{zhang2013gradient}, and noise process satisfying a suitable boundedness condition \citep{vaswani2018fast}.
\end{remark}

\section{Numerical Examples}
\label{sec:exps}

We give numerical examples supporting our theoretical findings.  We generated the data matrix according to $X = \Sigma^{1/2} W$, where the entries of $W$ were i.i.d.~following a normal distribution.  We allow for correlations between the features, setting the diagonal entries of the predictor covariance $\Sigma$ to 1, and the off-diagonals to 0.5.  Below, we present results for $n=100$, $p=500$, and $m=20$.  The supplement gives additional examples with different problem sizes and data models (Student-t and Bernoulli data); the results are similar.  We set $\epsilon =$ 2.2548e-4, following Lemma \ref{dyna:exp:loss}.

Figure \ref{fig:risk} plots the risk of ridge regression, discrete-time SGD \eqref{eq:sgd}, and Theorem \ref{thm:risk}.  For ridge, we used a range of 200 tuning parameters $\lambda$, equally spaced on a log scale from $2^{-15}$ to $2^{15}$.  The expression for the risk of ridge is well-known.  For Theorem \ref{thm:risk}, we set $t = 1/\lambda$.  For SGD, we computed its effective time, using $t = k \epsilon$ and $t = 1/\lambda$.  As for its risk, following the decomposition given in Section \ref{sec:risk}, we first computed the bias and variance of discrete-time gradient descent, using Lemma 3 in \citet{ali2018continuous}, and then added in the variance given by Lemma \ref{mom-match}.  As a comparison, Figure \ref{fig:risk} also plots the risks of gradient flow \eqref{eq:gf:soln}, coming from Lemma 5 in \citet{ali2018continuous}, and discrete-time gradient descent (as was just discussed).

Though the risks look similar, there are subtle differences (the supplement gives examples with larger step sizes and smaller mini-batch sizes, where the differences are more pronounced).  We also see that Theorem \ref{thm:risk} tracks the risk of SGD closely.  In fact, the maximum ratio, across the entire path, of the risk of stochastic gradient flow to that of ridge is 2.5614, whereas the same ratio for SGD to ridge is 1.7214.  Figure \ref{fig:risk} also shows the (optimal) time where each method balances its bias and variance.  Choosing a tuning parameter by balancing bias and variance is common in nonparametric regression, and doing so here implies that stochastic gradient flow stops earlier than gradient flow, because the effective variance terms \eqref{eq:eff} are nonnegative.  We find the optimal stopping times chosen by balancing bias and variance vs.~directly minimizing risk are generally similar.  Moreover, the ratio of the (optimal) risks at these times is 1.0032, indicating that stochastic gradient flow strikes a favorable computational-statistical trade-off.

Turning to Theorem \ref{thm:coeffs}, we consider the same experimental setup as before, now plotting the bound of Theorem \ref{thm:coeffs}, and the actual coefficient error $\E_{\eta,Z} \| \hat \beta^\sgf(t) - \hat \beta^\ridge(1/t) \|_2^2$, averaged over 30 draws of $y$ (the underlying coefficients were drawn from a normal distribution, and scaled so the signal-to-noise ratio was roughly 1).  We see the bound tracks the underlying error closely, and is quite small---indicating a tight relationship between stochastic gradient flow and ridge.  For larger $t$, some looseness in the bound is evident, arising from the constants appearing in Lemma \ref{dyna:exp:loss}; giving sharper constants is an important problem for future work.

\begin{figure}[ht]
\begin{center}
\centerline{
\includegraphics[width=\columnwidth]{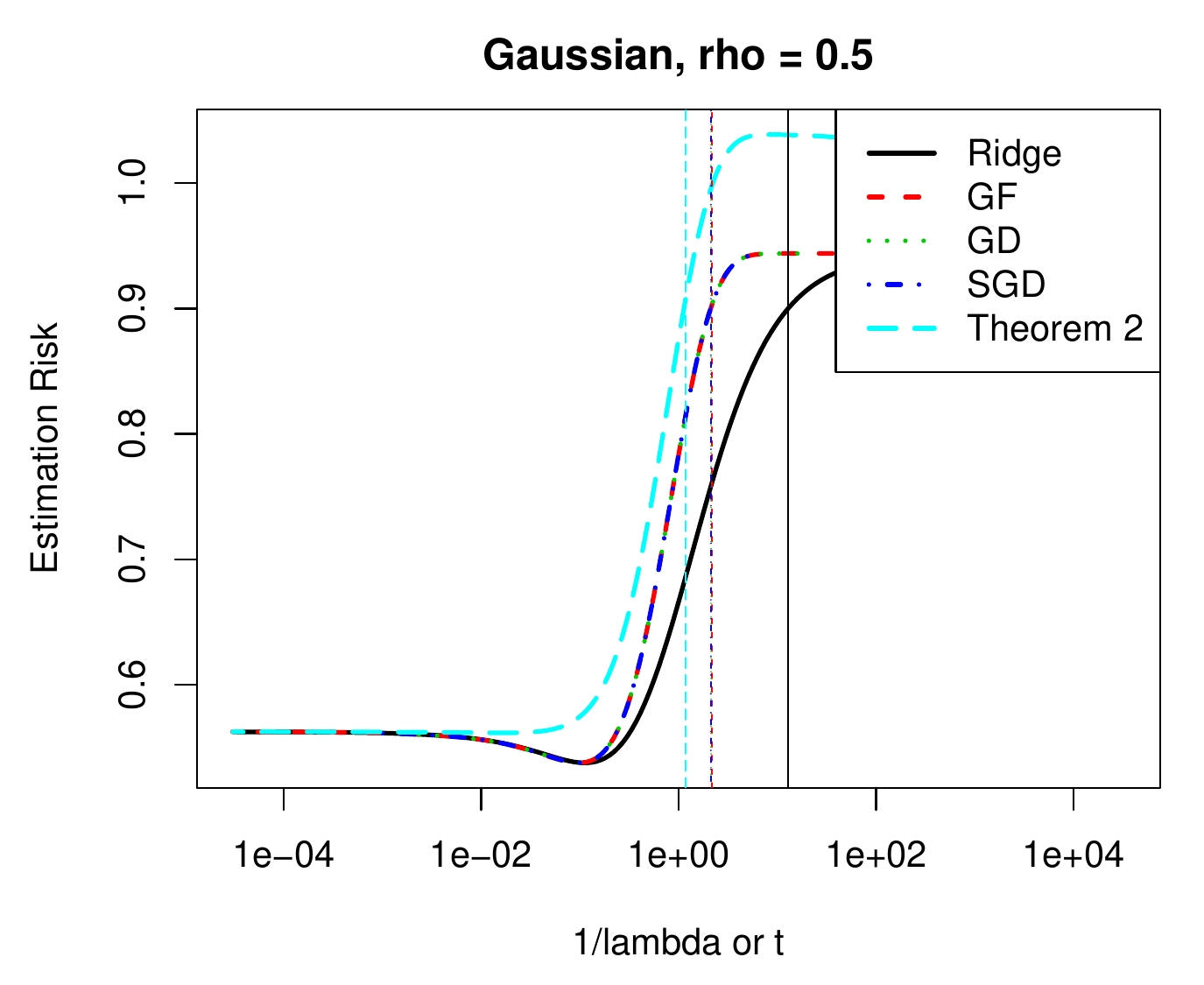}
}
\vskip -0.2in
\caption{\textit{Risks for ridge, SGD, stochastic gradient flow, and gradient descent/flow.  The excess risk of stochastic gradient flow over ridge is the distance between the cyan and black curves.  The vertical lines show the stopping times that balance bias and variance.}}
\label{fig:risk}
\end{center}
\vskip -0.2in
\end{figure}

\begin{figure}[ht]
\vskip -0.3in
\begin{center}
\centerline{
\includegraphics[width=\columnwidth]{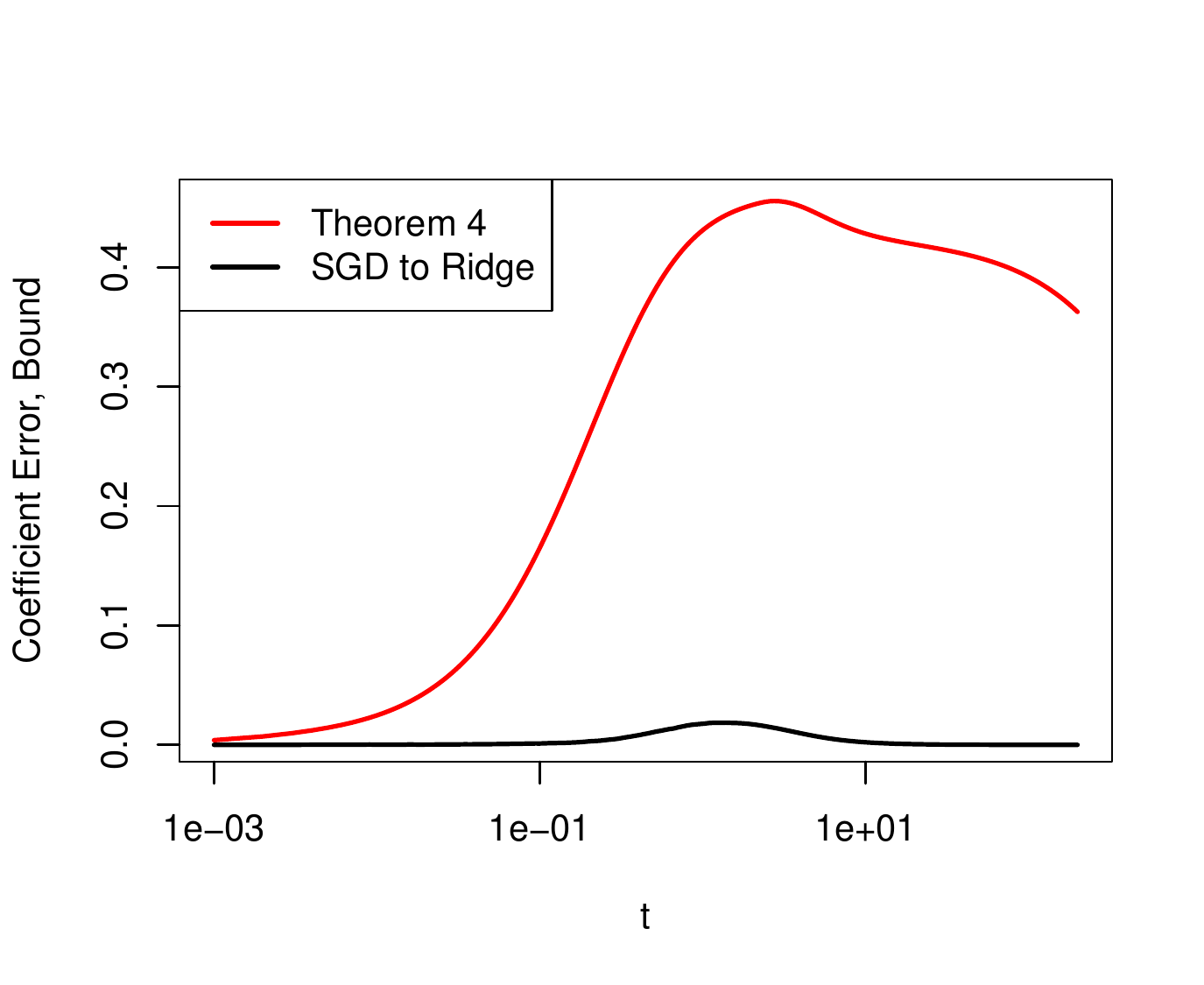}
}
\vskip -0.2in
\caption{\textit{Comparison between Theorem \ref{thm:coeffs}, and the actual distance between the coefficients of SGD and ridge.}}
\label{fig:coeff}
\end{center}
\vskip -0.3in
\end{figure}

\section{Discussion}
\label{sec:disc}

We studied the implicit regularization of stochastic gradient flow, giving theoretical and empirical support for the claim that the method is closely related to $\ell_2$ regularization.  There are a number of important directions for future work, \eg, establishing that stochastic gradient flow and SGD are in fact close, in a precise sense; considering general convex losses; and analyzing adaptive stochastic gradient methods.

\section{Acknowledgements}
We thank a number of people for helpful discussions, including Misha Belkin, Quanquan Gu, J.~Zico Kolter, Jason Lee, Yi-An Ma, Jascha Sohl-Dickstein, Daniel Soudry, and Matus Telgarsky.  ED was supported in part by NSF BIGDATA grant IIS 1837992 and NSF TRIPODS award 1934960.  Part of this work was completed while ED was visiting the Simons Institute.

\bibliography{alnur}
\bibliographystyle{icml2020}

\appendix
\renewcommand\thesection{S.\arabic{section}}  
\renewcommand\theequation{S.\arabic{equation}}
\renewcommand\thelemma{S.\arabic{lemma}}
\renewcommand\thetable{S.\arabic{table}}
\renewcommand\thefigure{S.\arabic{figure}}
\renewcommand\thealgorithm{S.\arabic{algorithm}}
\setcounter{figure}{0}
\setcounter{section}{0}
\setcounter{equation}{0}
\setcounter{lemma}{0}
\setcounter{algorithm}{0}
\onecolumn
\section*{Supplementary Material}

\section{Proof of Lemma \ref{mom-match}}
For simplicity, below we will omit the source of the randomness for the various estimators. Implicitly, the randomness is from minibatching in SGD, and from the normal random increments in the discretization of SGD (which we wil can dSGF).

By taking expectations in the SGD iteration, we find
\begin{equation*}
\E\beta^{(k)} = \E\beta^{(k-1)} + \epsilon \cdot \E\frac{1}{n} X^T (y - X \beta^{(k-1)}).
\end{equation*}
This identity only uses that the stochastic gradients are unbiased estimators for the true gradients. Thus, it is true even more generally for any loss function, not just for quadratic loss. However, for quadratic loss, we have a very special property, namely that the \emph{gradient is linear} in the parameter. Using this, we can move the expectation inside, and we find
\begin{equation*}
\E\beta^{(k)} = \E\beta^{(k-1)} + \epsilon \cdot \frac{1}{n} X^T (y - X \E\beta^{(k-1)}).
\end{equation*}
From this, it follows by a direct induction argument that $\E\beta^{(k)}=\beta_{gd}^{(k)}$, where $\beta_{gd}^{(k)}$ is the gradient descent iteration with learning rate $\ep$ started from 0. Indeed, for $k=0$, we have $\E\beta^{(k)}=\beta_{gd}^{(k)}=0$. Next, the two sequences satisfy the same recurrence. Hence the induction finishes the argument.

A similar reasoning holds for dSGF. This shows that $\E_{\tilde Z} \tilde \beta^{(k)} = \E_{\mathcal I_1,\ldots,\mathcal I_k} \beta^{(k)}$.

By taking the covariance of the SGD iteration, conditionally on the previous iterate $\beta^{(k-1)}$, we find
\begin{align*}
\Cov[\beta^{(k)}|\beta^{(k-1)}] & =
\epsilon^2 \cdot \Cov \Bigg[ \frac{1}{m} X_{\mathcal I_k}^T (y_{\mathcal I_k} - X_{\mathcal I_k} \beta^{(k-1)}) - \frac{1}{n} X^T (y - X \beta^{(k-1)}) \Bigg]\\
 &=
\epsilon \cdot  Q_\epsilon(\beta^{(k-1)}).
\end{align*}
Note that the definition of $Q$ already includes an $\ep$. This shows that the conditional covariance of the SGD iterate, conditoned on the previous iterate, is the same as for dSGF. As before, this observation holds not just for quadratic objectives, but also for general objectives. However, noting that $Q$ is a \emph{quadratic function} of the parameter, it follows from an inductive argument that the covariance matrix of the iterates $\beta^{(k)}$ and $\tilde \beta^{(k)}$ equals at every iteration. Indeed, the reason is that the covariance at each iteration only depends on second order statistics of the previous iteration (including the mean and the covariance), and so the induction step will hold. This shows that $\Cov_{\tilde Z} \tilde \beta^{(k)} = \Cov_{\mathcal I_1,\ldots,\mathcal I_k} \beta^{(k)}$, finishing the proof.

We can also find the explicit form of the recursion. While this is not required in the statement of the lemma, it is used in our numerical examples.

\begin{align*}
\Cov[\beta^{(k)}]  &= \E\Cov[\beta^{(k)}|\beta^{(k-1)}] & \\
&=
\frac{\ep^2}{mn} \sum_{i=1}^n \E (y_i-x_i^T \beta^{(k-1)})^2 x_ix_i^T
-
\frac{\ep^2}{mn^2} 
\E(\sum_{i=1}^n \E (y_i-x_i^T \beta^{(k-1)})x_i)^{\otimes 2}
\end{align*}
For the first term, we can write
\begin{align*}
&=
\frac{\ep^2}{mn} \sum_{i=1}^n
\E (y_i-x_i^T \E\beta^{(k-1)}+x_i^T \E\beta^{(k-1)}-x_i^T \beta^{(k-1)})^2 x_ix_i^T\\
&=
\frac{\ep^2}{mn} \sum_{i=1}^n
(y_i-x_i^T \beta_{gd}^{(k-1)})^2x_ix_i^T
+
\frac{\ep^2}{mn} \sum_{i=1}^n
[x_i^T (\E\beta^{(k-1)}- \beta^{(k-1)})]^2  x_ix_i^T\\
&=
\frac{\ep^2}{mn} \sum_{i=1}^n
(y_i-x_i^T \beta_{gd}^{(k-1)})^2x_ix_i^T
+
\frac{\ep^2}{mn} \sum_{i=1}^n
x_i^T \Cov[\beta^{(k-1)}]  x_i\cdot  x_ix_i^T.
\end{align*}
For the first term, we can write

\begin{align*}
\sum_{i=1}^n \E(y_i-x_i^T \beta^{(k-1)})x_i
&=
\sum_{i=1}^n  (y_i-x_i^T \E\beta^{(k-1)}) x_i\\
&=
\sum_{i=1}^n  (y_i-x_i^T \beta_{gd}^{(k-1)}) x_i
+
\sum_{i=1}^n  x_i^T [\beta_{gd}^{(k-1)}-\E\beta^{(k-1)}] x_i\\
&=
\sum_{i=1}^n  (y_i-x_i^T \beta_{gd}^{(k-1)}) x_i
+
\sum_{i=1}^n  n \hSigma [\beta_{gd}^{(k-1)}-\E\beta^{(k-1)}]
\end{align*}
so
\begin{align*}
&\frac{\ep^2}{mn^2} 
\E(\sum_{i=1}^n \E (y_i-x_i^T \beta^{(k-1)})x_i)^{\otimes 2}\\
&=
\frac{\ep^2}{mn^2} 
[\sum_{i=1}^n  (y_i-x_i^T \beta_{gd}^{(k-1)}) x_i]^{\otimes 2}
+
\frac{\ep^2}{m} 
\E[\hSigma [\beta_{gd}^{(k-1)}-\E\beta^{(k-1)}]]^{\otimes 2}\\
&=
\frac{\ep^2}{mn^2} 
[\sum_{i=1}^n  (y_i-x_i^T \beta_{gd}^{(k-1)}) x_i]^{\otimes 2}
+
\frac{\ep^2}{m} 
\hSigma \Cov[\beta^{(k-1)}] \hSigma.
\end{align*}

This gives an explicit linear recursion for the covariance matrices. The first term can be viewed as a covariance matrix of the gradients evaluated at the mean value of the process (i.e., at the value of the GD iteration). The second term depends on the covariance of the previous iteration.


\section{Proof of Lemma \ref{lem:soln}}
As the diffusion coefficient $Q_\epsilon(\beta(t))^{1/2}$ is Lipschitz continuous and positive semidefinite, standard results from numerical analysis (\eg, \citet{oksendal2003stochastic}) show that the solution to the differential equation \eqref{eq:sgf:nonconstcov} exists and is unique.

Now consider the process $\tilde \beta(t) = \exp ( t  \hSigma ) \beta(t)$.  By Ito's lemma,
\[
d \tilde \beta(t) = \hSigma \exp ( t  \hSigma ) \beta(t) dt + \exp ( t  \hSigma ) d \beta(t).
\]
Plugging in the expression for $d \beta(t)$ from \eqref{eq:sgf:nonconstcov} and simplifying, we see that
\[
d \tilde \beta(t) = \exp ( t  \hSigma ) \Big( \frac{1}{n} X^T y \Big) dt + \exp ( t  \hSigma ) Q_\epsilon(\beta(t))^{1/2} d W(t),
\]
or, equivalently,
\[
\tilde \beta(t) = \int_0^t \exp ( \tau  \hSigma ) \Big( \frac{1}{n} X^T y \Big) d\tau + \int_0^t \exp \Big( \tau  \hSigma \Big) Q_\epsilon(\beta(\tau))^{1/2} d W(\tau).
\]
Changing variables back yields
\begin{align*}
\beta(t) = \exp ( - t  \hSigma ) \cdot \int_0^t \exp ( \tau  \hSigma ) \Big( \frac{1}{n} X^T y \Big) d\tau + \exp ( - t  \hSigma ) \cdot \int_0^t \exp ( \tau  \hSigma ) Q_\epsilon(\beta(\tau))^{1/2} d W(\tau).
\end{align*} 

Considering only the first integral above, by arguments similar to those given in Lemma 1 of \citet{ali2018continuous}, we obtain
\[
\int_0^t \exp ( \tau  \hSigma ) \Big( \frac{1}{n} X^T y \Big) d\tau = \big( \exp ( t  \hSigma ) - I \big) (X^T X)^+ X^T y,
\]
and so
\begin{align*}
& \exp (-t  \hSigma ) \cdot \int_0^t \exp ( \tau  \hSigma ) \Big( \frac{1}{n} X^T y \Big) d\tau
= (X^T X)^+ \big( I - \exp ( -t  \hSigma ) \big) X^T y
 = \hat \beta^\gf(t),
\end{align*}
which gives the result.  \hfill\qed

\section{Proof of Lemma \ref{lem:nonconst_vs_const}}
To keep things simple, we prove the result using the uncentered covariance matrix of the stochastic gradients, \ie, we let $Q_\epsilon(\hat \beta^\sgf(t)) = (1/(n m)) X^T F(\hat \beta^\sgf(t)) X$, where $h(\beta) = ( y_1 - x_1^T \beta, \ldots, y_n - x_n^T \beta )$ are the residuals at $\beta$, and $F(\beta) = \diag( h(\beta) )^2$.  A similar result holds for the actual covariance matrix, but it is a little difficult to interpret.

Calculations similar to those given in Lemma \ref{thm:coeffs:second_term} (appearing below) show
\begin{align*}
\E_Z \| \hat \beta^\sgf(t) - \tilde \beta^\sgf(t) \|_2^2 & = \E \int_0^t \tr \Bigg[ \exp ( (\tau- t)  \hSigma )
\Bigg( Q_\epsilon(\hat \beta^\sgf(\tau))^{1/2} - \Big( \frac{\epsilon}{m} \cdot \hSigma \Big)^{1/2} \Bigg)^2 \exp ( (\tau- t)  \hSigma ) \Bigg] d \tau.
\end{align*}

Continuing on, and writing $L = \lambda_{\max}(\hat \Sigma)$, we have
\begin{align*}
\E_Z \| \hat \beta^\sgf(t) - \tilde \beta^\sgf(t) \|_2^2 & = \E_Z \int_0^t \tr \Bigg[ \Bigg( Q_\epsilon(\hat \beta^\sgf(\tau))^{1/2} - \Big( \frac{\epsilon}{m} \cdot \hSigma \Big)^{1/2} \Bigg)^2 \exp(2 (\tau - t) \hat \Sigma) \Bigg] d \tau \\
& = \E_Z \int_0^t \tr \Bigg[ V^T \Bigg( Q_\epsilon(\hat \beta^\sgf(\tau))^{1/2} - \Big( \frac{\epsilon}{m} \cdot \hSigma \Big)^{1/2} \Bigg)^2 V \exp(2 (\tau - t) S) \Bigg] d \tau \\
& \leq \E_Z \int_0^t \tr \Bigg[ V^T \Bigg( Q_\epsilon(\hat \beta^\sgf(\tau))^{1/2} - \Big( \frac{\epsilon}{m} \cdot \hSigma \Big)^{1/2} \Bigg)^2 V \Bigg] \tr \big[ \exp(2 (\tau - t) S) \big] d \tau \\
& \leq \frac{4 L p^2 \epsilon}{m} \cdot \int_0^t \sum_{i=1}^n \E_Z \Big[ \big| (y_i - x_i^T \hat \beta^\sgf(\tau))^2 - 1 \big| \Big] \tr \big[ \exp(2 (\tau - t) \hat \Sigma) \big] d \tau \\
& \leq \frac{4 L p^3 \epsilon}{m} \cdot \int_0^t \sum_{i=1}^n \E_Z \Big[ \big| (y_i - x_i^T \hat \beta^\sgf(\tau))^2 - 1 \big| \Big] d \tau,
\end{align*}
where we used the eigendecomposition $\hat \Sigma = V S V^T$ on the second line, the helper Lemma \ref{thm:err:helper} (appearing below) on the third, the helper Lemma \ref{thm:err:helper2} (appearing below) on the fourth, and the fact that the map $A \mapsto \tr \exp(A)$ is operator monotone on the fifth.  This proves the result.  \hfill\qed

\begin{lemma} \label{thm:err:helper}
Let $A \in \R^{n \times n}$ be a nonnegative diagonal matrix, and $B \in \R^{n \times n}$ be a positive semidefinite matrix.  Then $\tr(AB) \leq \tr(A) \tr(B)$.
\end{lemma}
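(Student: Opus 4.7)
The plan is to reduce the matrix inequality to an elementary scalar inequality by exploiting the fact that $A$ is diagonal, so that the product $AB$ has a particularly clean trace expression. Since only the diagonal entries of $B$ enter $\tr(AB)$, the problem collapses to comparing a weighted sum of diagonal entries of $B$ against the product of two unweighted sums.

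First, I would write $A = \diag(a_1,\dots,a_n)$ with $a_i \geq 0$ by hypothesis, and observe that since $B$ is positive semidefinite, its diagonal entries $b_{ii} = e_i^T B e_i$ are nonnegative. Next, I would expand
\[
\tr(AB) \;=\; \sum_{i=1}^n a_i\, b_{ii}, \qquad \tr(A)\,\tr(B) \;=\; \Bigl(\sum_{i=1}^n a_i\Bigr)\Bigl(\sum_{j=1}^n b_{jj}\Bigr) \;=\; \sum_{i,j=1}^n a_i\, b_{jj}.
\]
The inequality $\tr(AB) \leq \tr(A)\tr(B)$ then follows immediately, because the left-hand side is exactly the diagonal ($i = j$) portion of the double sum on the right, and every term $a_i b_{jj}$ in that double sum is nonnegative; dropping the off-diagonal terms only decreases the value.

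There is essentially no main obstacle here: the only substantive fact needed beyond algebra is that the diagonal of a PSD matrix is nonnegative, which is a one-line consequence of $e_i^T B e_i \geq 0$. I would note in passing that the same argument shows the result can fail without the sign/PSD hypotheses (e.g.\ an indefinite $B$ with a large negative off-diagonal contribution to $\tr(A)\tr(B)$ is not excluded, since $\tr(AB)$ only sees the diagonal), which is why both assumptions are used in an essential way.
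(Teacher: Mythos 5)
Your proof is correct and takes essentially the same route as the paper's: both reduce the claim to $\tr(AB)=\sum_i A_{ii}B_{ii}$ using that $A$ is diagonal, and both use that the diagonal entries of a positive semidefinite matrix are nonnegative. The only difference is in the final step, which is cosmetic: you observe that $\sum_i A_{ii}B_{ii}$ is the diagonal part of the nonnegative double sum $\sum_{i,j} A_{ii}B_{jj}=\tr(A)\tr(B)$, whereas the paper gets there via Cauchy--Schwarz followed by $\|x\|_2\le\|x\|_1$; your closing step is, if anything, slightly more direct.
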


\begin{proof}
Write $\tr(AB) = \sum_{i=1}^n A_{ii} B_{ii}$.  Cauchy-Schwarz shows that
\begin{align*}
\sum_{i=1}^n A_{ii} B_{ii} \leq \Big( \sum_{i=1}^n A_{ii}^2 \Big)^{1/2} \Big( \sum_{i=1}^n B_{ii}^2 \Big)^{1/2}.
\end{align*}
Using the simple fact that $\| x \|_2 \leq \| x \|_1$, along with the fact that $A,B$ have nonnegative diagonal entries, now yields the result.
\end{proof}

\begin{lemma} \label{thm:err:helper2}
Fix $y$, $X$ and $\beta$.  Let $h(\beta) = ( y_1 - x_1^T \beta, \ldots, y_n - x_n^T \beta )$ denote the residuals at $\beta$, and $F(\beta) = \diag( h(\beta) )^2$.  Then,
\begin{align*}
\tr \Bigg[ \Bigg( Q_\epsilon(\beta)^{1/2} - \Big( \frac{\epsilon}{m} \cdot \hSigma \Big)^{1/2} \Bigg)^2 \Bigg] \leq  \frac{4 L p^2\epsilon}{m}  \cdot \tr \big[ | F(\beta) - I | \big],
\end{align*}
where the absolute value is to be interpreted elementwise.
\end{lemma}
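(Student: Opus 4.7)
My plan is to reduce the trace of the square of the square-root difference to a trace inequality involving $F(\beta) - I$ itself, then collapse the outer $X^T \cdot X$ sandwich down to a weighted sum over the rows $x_i$ of $X$, and finally bound $\|x_i\|_2^2 \leq nL$. The only non-routine ingredient is the first step. The key identity is $Q_\epsilon(\beta) - (\epsilon/m)\hSigma = (\epsilon/(nm)) X^{T}(F(\beta) - I) X$, which follows since $(\epsilon/m)\hSigma = (\epsilon/(nm)) X^{T} X$. For PSD $A, B$ the Powers--Stormer inequality states $\tr[(A^{1/2} - B^{1/2})^2] \leq \tr|A - B|$ (where $|M|$ denotes the matrix absolute value for Hermitian $M$), which applied with $A = Q_\epsilon(\beta)$, $B = (\epsilon/m)\hSigma$ yields
\[
\tr\bigl[(Q_\epsilon(\beta)^{1/2} - ((\epsilon/m)\hSigma)^{1/2})^2\bigr] \leq \frac{\epsilon}{nm} \tr|X^{T}(F(\beta) - I) X|.
\]

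For the remaining steps, I would decompose the diagonal matrix $F(\beta) - I$ into its positive and negative parts $F - I = D_+ - D_-$ with $D_\pm \succeq 0$. The triangle inequality for the nuclear norm (applied to Hermitian matrices) gives $\tr|X^{T}(F - I) X| \leq \tr|X^{T} D_+ X| + \tr|X^{T} D_- X|$; since each $X^{T} D_\pm X$ is PSD, the absolute-value signs drop, leaving $\tr[X^{T} |F(\beta) - I| X] = \sum_i |F_{ii}(\beta) - 1| \|x_i\|_2^2$. Then the rank-one PSD ordering $x_i x_i^{T} \preceq \sum_j x_j x_j^{T} = X^{T} X = n\hSigma$ implies $\|x_i\|_2^2 = \lambda_{\max}(x_i x_i^{T}) \leq nL$. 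Chaining everything together yields $\tr[(Q_\epsilon(\beta)^{1/2} - ((\epsilon/m)\hSigma)^{1/2})^2] \leq (L\epsilon/m)\tr|F(\beta) - I|$, which is (much) tighter than the stated constant $4Lp^2\epsilon/m$ and so certainly implies the lemma.

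The main obstacle is the first step: obtaining a usable trace inequality between the square-root difference and the original difference. Powers--Stormer is the cleanest tool. If a more elementary path is desired, one can combine the operator-H\"older bound $\|A^{1/2} - B^{1/2}\|_{\mathrm{op}}^2 \leq \|A - B\|_{\mathrm{op}}$ with a crude $\|M\|_F^2 \leq p\|M\|_{\mathrm{op}}^2$, which introduces a dimension factor but still delivers the stated bound; a slightly wasteful variant of this reasoning is almost certainly what produces the $p^2$ in the lemma's constant. Everything downstream --- peeling the sandwich via the nuclear-norm triangle inequality and bounding row norms via the PSD ordering with $\hSigma$ --- is routine linear algebra that leans on the diagonal structure of $F$.
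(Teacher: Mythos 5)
Your proof is correct, and it takes a genuinely different route from the paper's at the one non-routine step. The paper bounds $\tr[(Q_\epsilon(\beta)^{1/2} - ((\epsilon/m)\hSigma)^{1/2})^2]$ by first passing from the Frobenius norm to the operator norm (costing a factor $p$), then invoking a matrix square-root perturbation inequality in operator norm, $\|A^{1/2}-B^{1/2}\|_2^2 \leq 4p\,\|A-B\|_2$ (Lemma A.2 of Nguyen et al., costing the factor $4p$), and finally using $\|n^{-1}X^T(F(\beta)-I)X\|_2 \leq L\,\|F(\beta)-I\|_2 \leq L\,\tr|F(\beta)-I|$; this is exactly the ``slightly wasteful variant'' you guessed was behind the $4p^2$. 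You instead apply the Powers--St\o rmer inequality directly at the trace level, $\tr[(A^{1/2}-B^{1/2})^2]\leq \tr|A-B|$, which incurs no dimension factor at all, and then handle the sandwich $X^T(F(\beta)-I)X$ more carefully via the positive/negative-part decomposition and the row-norm bound $\|x_i\|_2^2 \leq nL$. Both arguments rest on the same identity $Q_\epsilon(\beta) - (\epsilon/m)\hSigma = (\epsilon/(nm))\,X^T(F(\beta)-I)X$, which, as in the paper, presumes the uncentered (second-moment) form of $Q_\epsilon$ adopted for this part of the supplement. The payoff of your route is a constant of $L\epsilon/m$ in place of $4Lp^2\epsilon/m$ --- a strict improvement by $4p^2$ that would propagate to sharpen Lemma \ref{lem:nonconst_vs_const} from $4Lp^3\epsilon/m$ to $Lp\epsilon/m$ --- at the cost of invoking Powers--St\o rmer rather than the elementary norm comparisons the paper uses.
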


\begin{proof}
Using the matrix perturbation inequality given in Lemma A.2 of \citet{nguyen2019bridging}, we see that
\begin{align}
\tr \Bigg[ \Bigg( Q_\epsilon(\beta)^{1/2} - \Big( \frac{\epsilon}{m} \cdot \hSigma \Big)^{1/2} \Bigg)^2 \Bigg] & = \Bigg\| Q_\epsilon(\beta)^{1/2} - \Big( \frac{\epsilon}{m} \cdot \hSigma \Big)^{1/2} \Bigg\|_F^2 \notag \\
& \leq p \cdot \Bigg\| Q_\epsilon(\beta)^{1/2} - \Big( \frac{\epsilon}{m} \cdot \hSigma \Big)^{1/2} \Bigg\|_2^2 \notag \\
& \leq 4 p^2 \cdot \Big\| Q_\epsilon(\beta) - \frac{\epsilon}{m} \cdot \hSigma \Big\|_2. \label{eq:20}
\end{align}

Noting the expression for the covariance matrix of the stochastic gradients given in \eqref{eq:Q:3}, we obtain for \eqref{eq:20} that
\begin{align*}
4 p^2 \cdot \Big\| Q_\epsilon(\beta) - \frac{\epsilon}{m} \cdot \hSigma \Big\|_2 & = \frac{4 p^2 \epsilon}{m} \cdot \Bigg\| \frac{1}{n} X^T \big( F(\beta) - I \big) X \Bigg\|_2 \\
& \leq \frac{4 L p^2 \epsilon}{m} \cdot \| F(\beta) - I \|_2 \\
& \leq \frac{4 L p^2 \epsilon}{m} \cdot \tr \big[ | F(\beta) - I | \big].
\end{align*}
Here, we let $L = \lambda_{\max}(\hat \Sigma)$, $h(\beta) = ( y_1 - x_1^T \beta, \ldots, y_n - x_n^T \beta )$ denote the residuals at $\beta$, and $F(\beta) = \diag( h(\beta) )^2$.  This shows the result.
\end{proof}

\section{Proof of Lemma \ref{thm:coeffs:second_term}}
As $\hat \beta^\gf(t)$ is constant and the Brownian motion term in \eqref{eq:sgf:nonconstcov} has mean zero, we have
\begin{align*}
& \tr \Cov_Z (\hat \beta^\sgf(t)) = \tr \E_Z \Bigg[ \exp ( - t  \hSigma ) \Bigg( \int_0^t \exp ( \tau  \hSigma ) Q_\epsilon(\hat \beta^\sgf(\tau))^{1/2} d W(\tau) \Bigg) \\
& \hspace{3.5in} \Bigg( \int_0^t \exp ( \tau  \hSigma ) Q_\epsilon(\hat \beta^\sgf(\tau))^{1/2} d W(\tau) \Bigg)^T \exp ( - t  \hSigma ) \Bigg].
\end{align*}
Using Ito's isometry along with the linearity of the trace, we obtain
\begin{align}
& \tr \Cov_Z (\hat \beta^\sgf(t))
 = \E_Z \int_0^t \tr \Bigg[
\exp ( (\tau- t) \hSigma ) Q_\epsilon(\hat \beta^\sgf(\tau))
\exp ( (\tau- t) \hSigma ) \Bigg] d \tau. \label{eq:1}
\end{align}

For the squared error loss, the covariance matrix of the stochastic gradients at $\beta$ sampled with replacement has a relatively well-known simplified form (\cf~\citet{hoffer2017train,zhang2017determinantal,hu2017diffusion}).  Let $h(\beta) = ( y_1 - x_1^T \beta, \ldots, y_n - x_n^T \beta )$ denote the residuals at $\beta$, $F(\beta) = \diag( h(\beta) )^2$, and $\tilde F(\beta) = n^{-1} h(\beta) h(\beta)^T$.  Then,
\begin{align}
Q_\epsilon(\beta) & = \Cov_{\mathcal I} \Bigg( \frac{1}{m} X_{\mathcal I}^T(y_{\mathcal I} - X_{\mathcal I} \beta) \Bigg) \notag \\
& = \frac{1}{nm} X^T (F(\beta) - \tilde F(\beta)) X \notag \\ 
& \preceq \frac{1}{nm} X^T F(\beta) X. \label{eq:Q:3}
\end{align}

Letting $A = \exp( (\tau- t) \hSigma)$, the trace appearing in \eqref{eq:1} may be expressed as
\[
\frac{\epsilon}{mn} \cdot \tr \big( A  X^T F(\hat \beta^\sgf(\tau)) X  A \big) = \frac{\epsilon}{mn} \cdot \tr \big( F(\hat \beta^\sgf(\tau)) X  A^2  X^T \big).
\]
Since $X  A^2  X^T$ is positive semidefinite, the matrix $F(\hat \beta^\sgf(\tau)) X  A^2  X^T$ is the product of a nonnegative diagonal matrix and a positive semidefinite matrix; this satisfies the conditions of Lemma \ref{thm:err:helper}, which yields the bound
\begin{align*}
 \tr \big( F(\hat \beta^\sgf(\tau)) X  A^2  X^T \big)
& \leq \tr \big( F(\hat \beta^\sgf(\tau)) \big) \tr ( X  A^2  X^T ).
\end{align*}
By straightforward manipulations, we see $(1/n) \tr ( X  A^2  X^T ) = \tr \big( \hat \Sigma \exp(2 (\tau - t) \hat \Sigma) \big)$.  Therefore, $\tr \Cov_Z (\hat \beta^\sgf(t))$, as in \eqref{eq:1}, may be bounded as
\begin{align*}
\tr \Cov_Z(\hat \beta^\sgf(t)) & \leq \frac{\epsilon}{m} \cdot \E_Z \int_0^t \tr \big( F(\hat \beta^\sgf(\tau)) \big) \tr \big( \hat \Sigma \exp(2 (\tau - t) \hat \Sigma) \big) d \tau \notag \\
& = \frac{\epsilon}{m} \cdot \int_0^t \E_Z \big[ \tr \big( F(\hat \beta^\sgf(\tau)) \big) \big] \tr \big( \hat \Sigma \exp(2 (\tau - t) \hat \Sigma) \big) d \tau. \notag 
\end{align*}
The equality followed by Fubini's theorem (which applies here, since the product of the trace of a nonnegative diagonal matrix and the trace of a positive semidefinite matrix, is nonnegative).  As $\E_Z \big[ \tr \big( F(\hat \beta^\sgf(\tau)) \big) \big] = 2n f(\hat \beta^\sgf(\tau))$, this shows the result.  \hfill\qed

\section{Proof of Lemma \ref{dyna:exp:loss}}
In this lemma, it will be helpful to start slightly more generally, with the SDE for SGF on a general loss function $g$. The specific proofs of this lemma are in Sections \ref{op} and \ref{up}.

To approximate discrete time SGD with learning rate $\ep$ and batch size $m$, it is not hard to see that the same logic we have used throughout the paper leads to the SDE  
$$d\bt=-\nabla g(\bt)dt + \eta \sigma(\bt)dW(t)$$
where $\sigma(\bt)\sigma(\bt)^T$ is the covariance of the gradients at parameter value $\bt$, and $\eta = \ep/\sqrt{m}$. 

We derive the SDE for the behavior of the loss function itself, for a general loss. For gradient flow on a loss function $g$, i.e., the dynamics $d\bt=-\nabla g(\bt)dt$, it is well known that the dynamics induced on the loss function is:
\begin{align*}
dg(\bt) =-|\nabla g(\bt)|^2dt.
\end{align*}
This shows that the loss function is always non-increasing, i.e., that gradient flow is a descent method. In contrast, we will find that the loss for stochastic gradient flow is \emph{not} always non-increasing. We mention that a related calculation has been performed in \cite{zhu2018anisotropic}, under different assumptions (starting from a local min, integrating over time), and for a different purpose (to understand dynamics escaping local minima).

\begin{proposition}[Dynamics of the loss for SGF] For SGF on a loss function $g$, the value of the loss function $g$ evolves according to the following SDE:
\begin{align*}
dg(\bt)
& =\left(-|\nabla g(\bt)|^2 + \frac{\eta^2}2 \tr[ \nabla^2 g(\bt) \cdot \sigma(\bt)\sigma(\bt)^T]\right)dt
- \eta \nabla g(\bt)^T \sigma(\bt) dW(t),
\end{align*}
where $\Sigma(\beta) := \sigma(\beta)\sigma(\beta)^T$ is the covariance of the stochastic gradients at parameter value $\beta$. Also $\eta = \ep/\sqrt{m}$, where SGF approximates discrete time SGD with learning rate $\ep$ and batch size $m$. This can be written in a distributionally equivalent way as
\begin{align*}
dg(\bt)
& =\left(-|\nabla g(\bt)|^2 + \frac{\eta^2}2 \tr[\Sigma(\bt) \cdot \nabla^2 g(\bt)]\right)dt
- \eta \sqrt{\nabla g(\bt)^T \Sigma(\bt) \nabla g(\bt)}dZ(t),
\end{align*}
where $Z=\{Z(t)\}_{t\ge 0}$ is a 1-dimensional Brownian motion.

For the special case of least squares, let $r(t) = Y- X\beta(t)$ be the residual and define
$M(t) = [XX^T \diag[r(t)^2] XX^T]/n^2.$ Then we find that the equation for SGF with second moment matrix of stochastic gradients is 
\begin{align*}
dg(\bt)
& =\left(-\frac{\|X^T r(t)\|^2}{n^2}+ \frac{\ep^2\cdot \tr [M(t)]}{m}\right)dt
+\frac{\ep\cdot\|M(t)^{1/2} r(t)\|}{m^{1/2} n^{1/2}} dZ(t).
\end{align*}
where $Z=\{Z(t)\}_{t\ge 0}$ is a 1-dimensional Brownian motion.
\end{proposition}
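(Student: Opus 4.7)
The proof is essentially an application of It\^o's formula to the scalar function $g$ composed with the vector-valued diffusion $\beta(t)$ that solves the SGF SDE, so the plan is to do that computation carefully and then specialize to the quadratic loss. The only ingredients are the standard multidimensional It\^o expansion, L\'evy's characterization (or the Dambis-Dubins-Schwarz time-change) to pass to a single scalar Brownian motion, and the gradient/Hessian of the least-squares loss.

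First, I would apply It\^o's formula to $g(\beta(t))$ with $d\beta(t) = -\nabla g(\beta(t))\,dt + \eta\,\sigma(\beta(t))\,dW(t)$. The first-order term is $\nabla g(\beta(t))^\top d\beta(t) = -|\nabla g(\beta(t))|^2\,dt + \eta\,\nabla g(\beta(t))^\top \sigma(\beta(t))\,dW(t)$, and the second-order correction is $\tfrac{1}{2}\tr[\nabla^2 g(\beta(t)) \cdot (\eta\sigma(\beta(t)))(\eta\sigma(\beta(t)))^\top]\,dt = \tfrac{\eta^2}{2}\tr[\nabla^2 g(\beta(t))\,\Sigma(\beta(t))]\,dt$. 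Collecting these three pieces yields the first display, with the sign in front of the $dW$ term being immaterial since $-W$ is also a Brownian motion. Second, for the distributionally equivalent one-dimensional form, I would observe that the martingale part $M_t := \eta\int_0^t \nabla g(\beta(s))^\top \sigma(\beta(s))\,dW(s)$ is a continuous real-valued local martingale whose quadratic variation is $\langle M\rangle_t = \eta^2\int_0^t \nabla g(\beta(s))^\top \Sigma(\beta(s))\,\nabla g(\beta(s))\,ds$; by Dambis-Dubins-Schwarz, $M_t$ has the same law as $\int_0^t \eta\sqrt{\nabla g(\beta(s))^\top \Sigma(\beta(s))\,\nabla g(\beta(s))}\,dZ(s)$ for some 1-dimensional Brownian motion $Z$, which is exactly the second display.

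Third, for the least-squares specialization with $g(\beta) = (2n)^{-1}\|y - X\beta\|_2^2$, I would substitute $\nabla g(\beta) = -n^{-1} X^\top r(t)$, $\nabla^2 g(\beta) = n^{-1} X^\top X$, and (using the second-moment simplification for the per-sample stochastic gradient) $\Sigma(\beta) \propto n^{-1} X^\top \diag(r(t)^2) X$. Then $|\nabla g|^2 = n^{-2}\|X^\top r(t)\|^2$, and cyclic invariance of the trace gives $\tr[\nabla^2 g \cdot \Sigma] \propto n^{-2}\tr[X X^\top \diag(r(t)^2) X X^\top] = \tr[M(t)]$. For the diffusion coefficient, $\nabla g^\top \Sigma \nabla g \propto n^{-3} r(t)^\top X X^\top \diag(r(t)^2) X X^\top r(t) = n^{-1}\|M(t)^{1/2} r(t)\|_2^2$. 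Absorbing the $\eta = \epsilon/\sqrt{m}$ scaling produces the stated least-squares form.

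The main obstacle is not analytical but organizational: the computation is essentially bookkeeping, and the one genuine subtlety is the passage to the scalar Brownian motion in Step 2, which strictly requires enlarging the probability space if $\nabla g^\top \Sigma \nabla g$ vanishes on a set of positive measure. The other place to be careful is the normalization convention for $\Sigma(\beta)$ (per-sample second moment versus centered covariance versus batch covariance) and how it interacts with $\eta = \epsilon/\sqrt m$ and the $1/n$ factors in $\nabla g$ and $\nabla^2 g$, so that the prefactors in the final least-squares SDE match the stated expression.
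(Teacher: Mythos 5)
Your proposal is correct and follows essentially the same route as the paper's proof: It\^o's formula applied to $g(\beta(t))$ gives the drift and martingale terms, the scalar-Brownian-motion form follows by replacing the martingale part with a distributionally equivalent one-dimensional process, and the least-squares case is obtained by substituting $\nabla g = -X^\top r/n$, $\nabla^2 g = X^\top X/n$, and the second-moment matrix $\propto X^\top\diag(r^2)X/n$. If anything you are more careful than the paper on the two points you flag — the Dambis--Dubins--Schwarz/L\'evy justification (and the need to enlarge the probability space where the quadratic variation degenerates) and the normalization of $\Sigma(\beta)$ versus $\eta=\epsilon/\sqrt{m}$, where the paper's own prefactors are not entirely consistent between the proof and the stated least-squares display.
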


\begin{proof}
We start with the SDE for SGF
$$d\bt=-\nabla g(\bt)dt + \eta \sigma(\bt)dW(t)$$
where $\sigma(\bt)\sigma(\bt)^T$ is the covariance of the gradients at parameter value $\bt$. Then, Ito's rule leads to
\begin{align*}
dg(\bt) &=\nabla g(\bt)^T d\bt + \frac12 d\bt^T \cdot H_xg\cdot d\bt \\
& =\left(-|\nabla g(\bt)|^2 + \frac{\eta^2}2 \tr[\sigma(\bt)^T \cdot \nabla^2 g(\bt) \cdot \sigma(\bt)]\right)dt
- \eta \nabla g(\bt)^T \sigma(\bt) dW(t).
\end{align*}

For the special case of least squares loss, we have the following. We have already calculated most terms, and we have in addition that the second moment matrix of the gradients is
$$\sigma(\bt)\sigma(\bt)^T = \frac{1}{mn}  X^T \diag[r(t)^2] X$$
where $r(t) = Y- X\beta(t)$ is the residual. Plugging in the terms for least squares,
\begin{align*}
dg(\bt)
& =\left(-\frac{\|X^T r(t)\|^2}{n^2}+ \frac{\ep^2\cdot \tr [XX^T \diag[r(t)^2] XX^T]}{mn^2}\right)dt
+\frac{\ep\cdot\|\diag[r(t)] XX^T r(t)\|}{m^{1/2} n^{3/2}} dZ(t).
\end{align*}
Here $Z=\{Z(t)\}_{t\ge 0}$ is a 1-dimensional Brownian motion, which is obtained by transforming the original diffusion term, which is a linear combination of the entries of $dW(t)$, into a distributionally equivalent 1-dimensional process.
Letting
$$M(t) = \frac{XX^T \diag[r(t)^2] XX^T}{n^2}$$
we can simplify the above as
\begin{align*}
dg(\bt)
& =\left(-\frac{\|X^T r(t)\|^2}{n^2}+ \frac{\ep^2\cdot \tr [M(t)]}{m}\right)dt
+\frac{\ep\cdot\|M(t)^{1/2} r(t)\|}{m^{1/2} n^{1/2}} dZ(t).
\end{align*}
\end{proof}
Comparing this with the noiseless case, i.e., when $\eta=0$, we note that \emph{both} the drift and the diffusion terms have changed. The drift term is reduced by a term that is proportional to $\eta^2$. The diffusion term is new altogether. This shows that for sufficiently large $\eta$, the drift will not be positive, and hence the process will not converge to a point mass limit distribution.

Let us start with studying the diffusion with the second moment matrix first. We will show a geometric contraction of the loss. We can bound the terms in the drift term as follows.
We have (using $\odot$ for elementwise product of two conformable vectors or matrices)
\begin{align*}
r^2 \odot vec[\diag[(XX^T)^2]]&\le \|r\|^2 \max_i [(X^T X)^2]_{ii}\\
\|X^T r(t)\|^2 &\ge \sigma_{\min}(X^T)^2 \|r(t)\|^2
\end{align*}
The second inequality holds with $\sigma_{\min}(X^T)$ being the smallest nonzero singular value of $X^T$.  Why?  Because it is easy to see that we always have the decomposition
\begin{equation}
\frac{1}{2n} \| y - X \beta \|_2^2 = \frac{1}{2n} \| P_{\col(X)} y - X \beta \|_2^2 + \frac{1}{2n} \| P_{\nul(X^T)} y \|_2^2, \label{eq:13}
\end{equation}
\ie, we may think of the residual $r(t)$ above (and in the remainder) as $P_{\col(X)}(y - X \beta)$.  It follows that $\| X^T r(t) \|_2^2 \geq s \cdot \| r(t) \|_2^2$, where $s$ denotes the smallest nonzero singular value of $X^T$.  It is also clear that the expressions for the gradient flow and stochastic gradient flow solutions do not change, if we use the decomposition in \eqref{eq:13} as the loss.  Hence, in the remainder, we always write $\sigma_{\min}$ to mean the smallest nonzero singular value.  Below, we consider the situation when $p > n$ separately from the case $n \geq p$.

\subsection{Overparametrized case}
\label{op}

We find, with $M(t) = [XX^T \diag[r(t)^2] XX^T]/n^2,$
\begin{align*}
\left(-\frac{\|X^T r(t)\|^2}{n^2}+ \frac{\ep^2\cdot \tr [M(t)]}{m}\right)&\le - \alpha \|r(t)\|^2\\
\alpha= \sigma_{\min}(X)^2/n^2&-\ep^2 \max_i [(X^T X)^2]_{ii}/m.
\end{align*}
Then by taking expectations in the SDE for the loss, we find $l'(t)\le - \alpha l(t)$.
Hence $l(t)\le \exp(-\alpha t) l_0$. For the diffusion with the covariance matrix of the gradients as a diffusion term, $\Sigma(\bt) \prec \E \bt\bt^T$, hence $\tr[\Sigma(\bt) \cdot \nabla^2 g(\bt)] \le \tr[\E \bt\bt^T \cdot \nabla^2 g(\bt)]$. Thus, the drift term in this case is at most as large as the one in the second moment case, and so the contraction happens at least as fast. This proves the claim for the covariance matrix of the gradients as a diffusion term. The same argument will also apply to this case when $p<n$.

\subsection{Underparametrized case}
\label{up}

Now, if $p<n$, then in this case, in general the loss cannot converge to zero, because the number of equations is larger than the number of constraints. Instead, the loss converges close to the loss of the OLS estimator:
$$l^* = \|Y-X\hbeta^{ols}\|^2/(2n) = \|P_X^\perp Y\|^2/(2n).$$
Then we can write
$$l(t)-l^*=\|X(\beta(t)-\hbeta^{ols})\|^2/(2n)$$
Moreover, $X^T r(t)= - X^T X(\beta(t)-\hbeta^{ols})$. Also, letting $b=P_X^\perp Y$, $r(t) = b+X(\hbeta^{ols}-\beta(t))$, and hence

$$\|r(t)\|^2 \le 2(\|b\|^2+\|X(\hbeta^{ols}-\beta(t))\|^2)$$
so that
\begin{align*}
r^2 \odot \diag[(XX^T)^2]&\le \|r\|^2 \max_i [(X^T X)^2]_{ii} \le v + 2\|X(\hbeta^{ols}-\beta(t))\|^2 \max_i [(X^T X)^2]_{ii}
\end{align*}
where $v= 2 \|b\|^2 \max_i [(X^T X)^2]_{ii}$. Let $q(t)=X(\beta(t)-\hbeta^{ols})$.
Then
\begin{align*}
\left(-\frac{\|X^T r(t)\|^2}{n^2}+ \frac{\ep^2\cdot \tr [M(t)]}{m}\right)&\le - c_0 \|q(t)\|^2+C_0\\
c_0= \sigma_{\min}(X)^2/n^2&- 2\ep^2\max_i [(X^T X)^2]_{ii}/m\\
C_0=  2\ep^2\|b\|^2& \max_i [(X^T X)^2]_{ii}/m
\end{align*}
By taking expectations in the SDE for the loss, we find
$$l'(t)\le - c_0[l(t)-l^*]+C_0.$$
Or also
$$l'(t)\le - c_0[l(t)-(l^*+C_0/c_0)].$$

This shows that $l$ converges geometrically to the level $l^*+C_0/c_0$, which is higher than the minimum OLS loss. In this case, the additional fluctuations occur because of the inherent noise in the algorithm.

\section{Calculations for the In-Sample Risks, for Theorem \ref{thm:risk}}
For in-sample risk, we have the bias-variance decomposition
\begin{align*}
\Risk(\hat \beta^\sgf(t); \beta_0) & = \| \E_{\eta, Z} (\hat \beta^\sgf(t)) - \beta_0 \|_{\hat \Sigma}^2 + \tr \big[ \Cov_{\eta,Z} (\hat \beta^\sgf(t)) \hat \Sigma \big] \\
& = \| \E_{\eta, Z} (\hat \beta^\sgf(t)) - \beta_0 \|_{\hat \Sigma}^2 + \tr \big[ \Cov_\eta( \hat \beta^\gf(t) ) \hat \Sigma \big] + \E_\eta \tr \big[ \Cov_Z( \hat \beta^\sgf(t) \,|\, \eta ) \hat \Sigma \big],
\end{align*}
where we write $\| x \|_A^2 = x^T A x$.

Following the same logic as in the proof of Theorem \ref{thm:risk}, we see that
\begin{align*}
\| \E_{\eta, Z} (\hat \beta^\sgf(t)) - \beta_0 \|_{\hat \Sigma}^2 & = \Big( \Bias^\inn(\hat \beta^\sgf(t); \beta_0) \Big)^2 = \Big( \Bias^\inn(\hat \beta^\gf(t); \beta_0) \Big)^2 \\
\tr \big[ \Cov_\eta( \hat \beta^\gf(t) ) \hat \Sigma \big] & = \Var^\inn ( \hat \beta^\gf(t) ) \\
\E_\eta \tr \big[ \Cov_Z( \hat \beta^\sgf(t) \,|\, \eta ) \hat \Sigma \big] & \leq \epsilon \cdot \frac{n}{m} \sum_{i=1}^p \Big( \frac{\tilde w s_i}{s_i - u/2} \big( \exp(-u t) - \exp(-2 t s_i) \big) + \tilde v \big( 1 - \exp(-2 t s_i) \big) \Big) s_i,
\end{align*}
where (\cf~Lemma 5 in \citet{ali2018continuous})
\begin{align*}
\Big( \Bias^\inn(\hat \beta^\gf(t); \beta_0) \Big)^2 & = \sum_{i=1}^p (v_i^T \beta_0)^2 s_i \exp(-2 t s_i) \\
\Var^\inn(\hat \beta^\gf(t)) & = \frac{\sigma^2}{n} \sum_{i=1}^{p} (1 - \exp(-t s_i)).^2
\end{align*}

\section{Proof of Lemma \ref{thm:relative_risk}}
Looking back at \eqref{aa}, we have
\begin{align}
& \Risk(\hat \beta^\sgf(t); \beta_0) \label{eq:11} \\
& \leq \Bias^2(\hat \beta^\gf(t); \beta_0) + \Var_\eta(\hat \beta^\gf(t)) + \epsilon \cdot \frac{n}{m} \sum_{i=1}^p \Big( \frac{\tilde w s_i}{s_i - u/2} \big( \exp(-u t) - \exp(-2 t s_i) \big) + \tilde v \big( 1 - \exp(-2 t s_i) \big) \Big) \notag \\
& = \Bias^2(\hat \beta^\gf(t); \beta_0) + \Var_\eta(\hat \beta^\gf(t)) + T + \epsilon \cdot \frac{n}{m} \sum_{i=1}^p \tilde v \big( 1 - \exp(-2 t s_i) \big), \notag
\end{align}
where we let
\[
T = \epsilon \cdot \frac{n}{m} \sum_{i=1}^p \Big( \frac{\tilde w s_i}{s_i - u/2} \big( \exp(-u t) - \exp(-2 t s_i) \big) \Big).
\]
Focusing on just $T$ for now, and noting that $s_i > u/2$ for $i=1,\ldots,p$, we see
\[
T \leq p \tilde w \epsilon \cdot \frac{n}{m} \frac{\mu}{\mu - u/2} \exp(-u t),
\]
implying that
\[
(T / \alpha)^{2 L / u} \leq \exp(-2 L t) \leq \exp(-2 s_i t),
\]
for $i=1,\ldots,p$, which means that
\[
\| \beta_0 \|_2^2 \cdot (T / \alpha)^{2 L / u} = \| V \beta_0 \|_2^2 \cdot (T / \alpha)^{2 L / u} \leq \sum_{i=1}^p (v_i^T \beta_0)^2 \exp(-2 s_i t) = \Bias^2(\hat \beta^\gf(t); \beta_0).
\]
Here, we used the eigendecomposition $\hat \Sigma = V S V^T$, and Lemma 5 in \citet{ali2018continuous}.  Hence,
\begin{align}
T & = \epsilon \cdot \frac{n}{m} \sum_{i=1}^p \Big( \frac{\tilde w s_i}{s_i - u/2} \big( \exp(-u t) - \exp(-2 t s_i) \big) \Big) \notag \\
& \leq \alpha \cdot \Bigg( \frac{ | \Bias(\hat \beta^\gf(t); \beta_0) | }{ \| \beta_0 \|_2 } \Bigg)^{\mu/L} \notag \\
& = \delta \cdot | \Bias(\hat \beta^\gf(t); \beta_0) |^{1/\kappa}. \label{eq:12}
\end{align}

Therefore, putting \eqref{eq:11} and \eqref{eq:12} together, along with Lemma 5 in \citet{ali2018continuous}, we obtain
\begin{align*}
& \Risk(\hat \beta^\sgf(t); \beta_0) \\
& \leq \Bias^2(\hat \beta^\gf(t); \beta_0) + \delta \cdot | \Bias(\hat \beta^\gf(t); \beta_0) |^{1/\kappa} + \underbrace{\frac{\sigma^2}{n} \sum_{i : s_i > 0} \frac{(1-\exp(-t s_i))^2}{s_i}}_{A} + \underbrace{\tilde v \epsilon \cdot \frac{n}{m} \sum_{i : s_i > 0} (1 - \exp(-2 t s_i))}_{B}.
\end{align*}

Now for convenience, write $A = \frac{\sigma^2}{n} \sum_{i : s_i > 0} a_i$ and $B = \frac{\epsilon}{2m} \cdot \sum_{i : s_i > 0} b_i$.  Let $f(x)$ be the continuous extension of $\tilde f(x)$, where $\tilde f(x) = x \frac{1+\exp(-x)}{1-\exp(-x)}$ (\ie, $f(x) = \tilde f(x)$ when $x > 0$, but $f(x) = 2$ when $x = 0$).  It can be checked that $f(x)$ is nondecreasing, so that $\sup_{x \in [0,L]} f(tx) = f(tL)$.

As $f(t s_i) \leq f(tL)$, we have for each $i$ such that $s_i > 0$,
\[
t s_i \frac{1+\exp(-t s_i)}{1-\exp(-t s_i)} \leq f(tL).
\]
Multiplying both sides by $(1-\exp(-t s_i))^2$ and rearranging yields
\[
(1+\exp(-t s_i)) (1-\exp(-t s_i)) \leq \frac{f(tL)}{t} \frac{(1-\exp(-t s_i))^2}{s_i},
\]
\ie,
\[
b_i \leq \frac{f(tL)}{t} a_i.
\]
Therefore,
\[
A + B \leq \Bigg( 1 + \frac{\tilde v \epsilon \cdot n^2 f(tL)}{m t \sigma^2} \Bigg) A.
\]
Now, note that $f(x)$ is increasing on $x > 0$, and $f(x)/x$ is decreasing on $x > 0$.  Also, note that $f(x) \leq 2.164$ when $x \leq 1$, and $f(x)/x \leq 2.164$ when $x > 1$.  Thus, $f(x)/x \leq \max \{ 2.164/x, 2.164 \}$.  So,
\[
A + B \leq \Bigg( 1 + 2.164 \epsilon \cdot \frac{\tilde v n^2 \max ( 1/t, L )}{m \sigma^2} \Bigg) A.
\]

Putting together the pieces, we obtain
\[
\Risk(\hat \beta^\sgf(t); \beta_0) \leq \Bias^2(\hat \beta^\gf(t); \beta_0) + \delta \cdot | \Bias(\hat \beta^\gf(t); \beta_0) |^{1/\kappa} + \gamma(t) \cdot \Var(\hat \beta^\gf(t)),
\]
which shows the claim for gradient flow.  Applying Theorem \ref{thm:gf} shows the result for ridge.  Turning to in-sample risk, the exact same bounds actually follow by similar arguments, just as discussed before.  \hfill\qed



\section{Calculations for Remark \ref{rem:const}}
Using Lemma \ref{lem:soln}, we may denote the solution to the time-homogeneous process \eqref{eq:sgf:constcov} as $\tilde \beta^\sgf(t)$.  Then, following the same logic as in the proof of Theorem \ref{thm:risk}, we obtain
\[
\Risk(\tilde \beta^\sgf(t); \beta_0) = \Bias^2(\hat \beta^\gf(t); \beta_0) + \Var_\eta(\hat \beta^\gf(t)) + \tr \E_\eta \big[ \Cov_Z(\tilde \beta^\sgf(t) \,|\, \eta) \big].
\]
Ito's isometry shows that
\begin{align}
\tr \E_\eta \big[ \Cov_Z(\tilde \beta^\sgf(t) \,|\, \eta) \big] & = \frac{\epsilon}{m} \cdot \tr \E_\eta \Bigg[ \Bigg( \exp(-t \hat \Sigma) \int_0^t \exp(\tau \hat \Sigma) \hat \Sigma^{1/2} dW(\tau) \Bigg) \Bigg( \exp(-t \hat \Sigma) \int_0^t \exp(\tau \hat \Sigma) \hat \Sigma^{1/2} dW(\tau) \Bigg)^T \Bigg] \notag \\
& = \frac{\epsilon}{m} \cdot \tr \E_\eta \Bigg[ \Bigg( \int_0^t \exp(-t \hat \Sigma) \exp(\tau \hat \Sigma) \hat \Sigma \exp(\tau \hat \Sigma) \exp(-t \hat \Sigma) d \tau \Bigg] \notag \\
& = \frac{\epsilon}{m} \cdot \tr \Bigg( \hat \Sigma \int_0^t \exp(2 (\tau - t) \hat \Sigma) d \tau \Bigg) \label{eq:8} \\
& = \frac{\epsilon}{2m} \cdot \tr \Big( \hat \Sigma \hat \Sigma^+ (I - \exp(-2 t \hat \Sigma)) \Big). \notag
\end{align}
Finally, expanding $\exp(-2 t \hat \Sigma)$ into its power series representation and using the eigendecomposition $\hat \Sigma = V S V^T$ shows that $\hat \Sigma \hat \Sigma^+ (I - \exp(-2 t \hat \Sigma)) = I - \exp(-2 t \hat \Sigma)$, which gives
\begin{equation}
\Risk(\tilde \beta^\sgf(t); \beta_0) = \Risk(\hat \beta^\gf(t); \beta_0) + \frac{\epsilon}{2m} \cdot \tr \big( I - \exp(-2 t \hat \Sigma) \big). \label{eq:14}
\end{equation}
From \eqref{eq:14}, it is straightforward to derive expressions analogous to those appearing in \eqref{eq:10}, \eqref{eq:10b}, \eqref{eq:excess}, for the process $\tilde \beta^\sgf(t)$.  It is also possible to follow the same logic as in the proof of Lemma \ref{thm:relative_risk} to arrive at a similar expression for $\tilde \beta^\sgf(t)$ (\ie, with $\gamma(t)$ suitably redefined).

Comparing the preceding calculations with those given in the proof of Theorem \ref{thm:risk}, we see that a key simplification occurs in \eqref{eq:8}, above.  Here, the (relatively) complicated expression appearing in \eqref{eq:4c}, 
\[
\frac{2 n \epsilon}{m} \cdot \int_0^t \E_\eta [ f(\hat \beta^\sgf(\tau)) ] d \tau,
\]
is replaced with the comparatively simpler expression $(\epsilon / m) \cdot \hat \Sigma$ in \eqref{eq:8}.  This simplification allows the risk expression in \eqref{eq:14} to hold with equality, though it is evidently less refined than the bound appearing in, \eg, \eqref{eq:10}.

\section{Proof of Lemma \ref{thm:coeffs:first_term3}}
Letting $X = n^{1/2} U S^{1/2} V^T$ be a singular value decomposition, we may express
\begin{align*}
\hat \beta^\gf(t) = n^{-1/2} V S^+ (I - \exp(-t S)) S^{1/2} U^T y = n^{-1/2} t^{1/2} V (t S)^+ (I - \exp(-t S)) (t S)^{1/2} U^T y,
\end{align*}
and
\begin{align*}
\hat \beta^\ridge(1/t) = n^{-1/2} V \Big( S + \frac{1}{t} \cdot I \Big)^{-1} S^{1/2} U^T y = n^{-1/2} t^{1/2} V ( t S + I )^{-1} (t S)^{1/2} U^T y.
\end{align*}
Therefore,
\begin{equation}
\| \hat \beta^\gf(t) - \hat \beta^\ridge(1/t) \|_2 = \Big\| n^{-1/2} t^{1/2} \Big( (t S)^+ (I - \exp(-t S)) - ( t S + I )^{-1} \Big) (t S)^{1/2} U^T y \Big\|_2. \label{eq:23}
\end{equation}

Now define $f(x) = (1-\exp(-x))(1+x)/x$ with domain $x \geq 0$ (let $f(0) = 0$).  Lemma 7 in \citet{ali2018continuous} shows that $f$ attains its unique maximum at $x^* = 1.7933$, where $f(x^*) = 1.2985$.  Moreover, it can be checked that $f$ is unimodal.  This means that, for $i=1,\ldots,p$ and $t \leq 1.7933/L$,
\[
\frac{(1-\exp(-s_i t)) (1 + s_i t)}{s_i t} \leq \frac{(1-\exp(-L t)) (1 + L t)}{L t},
\]
\ie,
\[
\frac{1-\exp(-s_i t)}{s_i t} - \frac{1}{1 + s_i t} \leq (g(t) - 1) \cdot \frac{1}{1 + s_i t}.
\]
Similar reasoning shows that, for $t \geq 1.7933/\mu$,
\[
\frac{1-\exp(-s_i t)}{s_i t} - \frac{1}{1 + s_i t} \leq (g(t) - 1) \cdot \frac{1}{1 + s_i t}.
\]
When $1.7933/L < t < 1.7933/\mu$, we may simply take
\[
\frac{1-\exp(-s_i t)}{s_i t} - \frac{1}{1 + s_i t} \leq (1.2985 - 1) \cdot \frac{1}{1 + s_i t} = (g(t) - 1) \cdot \frac{1}{1 + s_i t}.
\]

Returning to \eqref{eq:23}, we have shown that
\[
(t S)^+ (I - \exp(-t S)) - ( t S + I )^{-1} \preceq (g(t) - 1) \cdot ( t S + I )^{-1}.
\]
Thus,
\begin{align*}
\| \hat \beta^\gf(t) - \hat \beta^\ridge(1/t) \|_2 & \leq \| (g(t) - 1) \cdot n^{-1/2} t^{1/2} ( t S + I )^{-1} (t S)^{1/2} U^T y \|_2 \\
& = (g(t) - 1) \cdot \| \hat \beta^\ridge(1/t) \|_2,
\end{align*}
as claimed.  \hfill\qed

\section{Additional Numerical Simulations}

\begin{figure*}[h!]
\begin{center}
\centerline{
\includegraphics[width=0.49\textwidth]{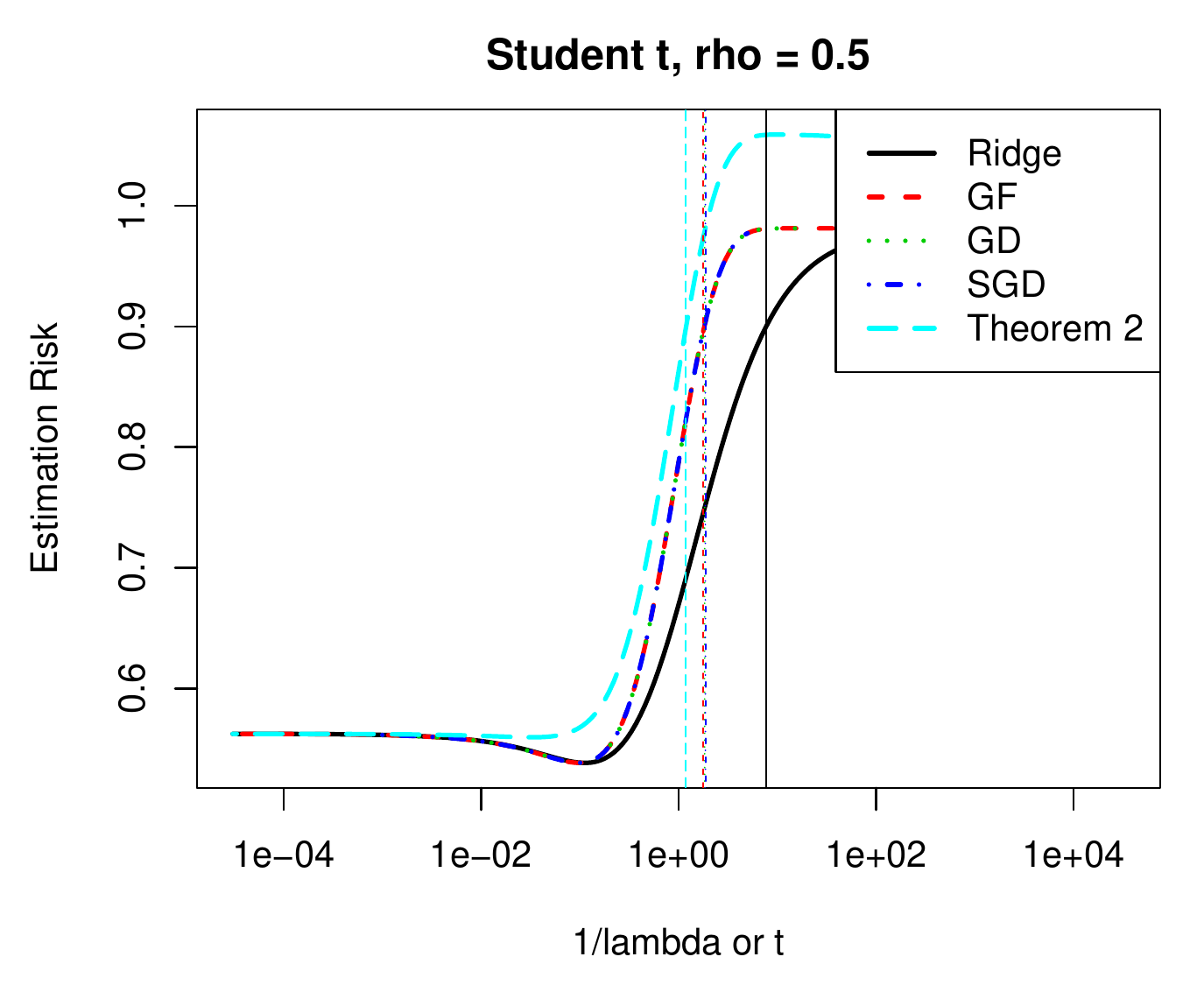} \hfill
\includegraphics[width=0.49\textwidth]{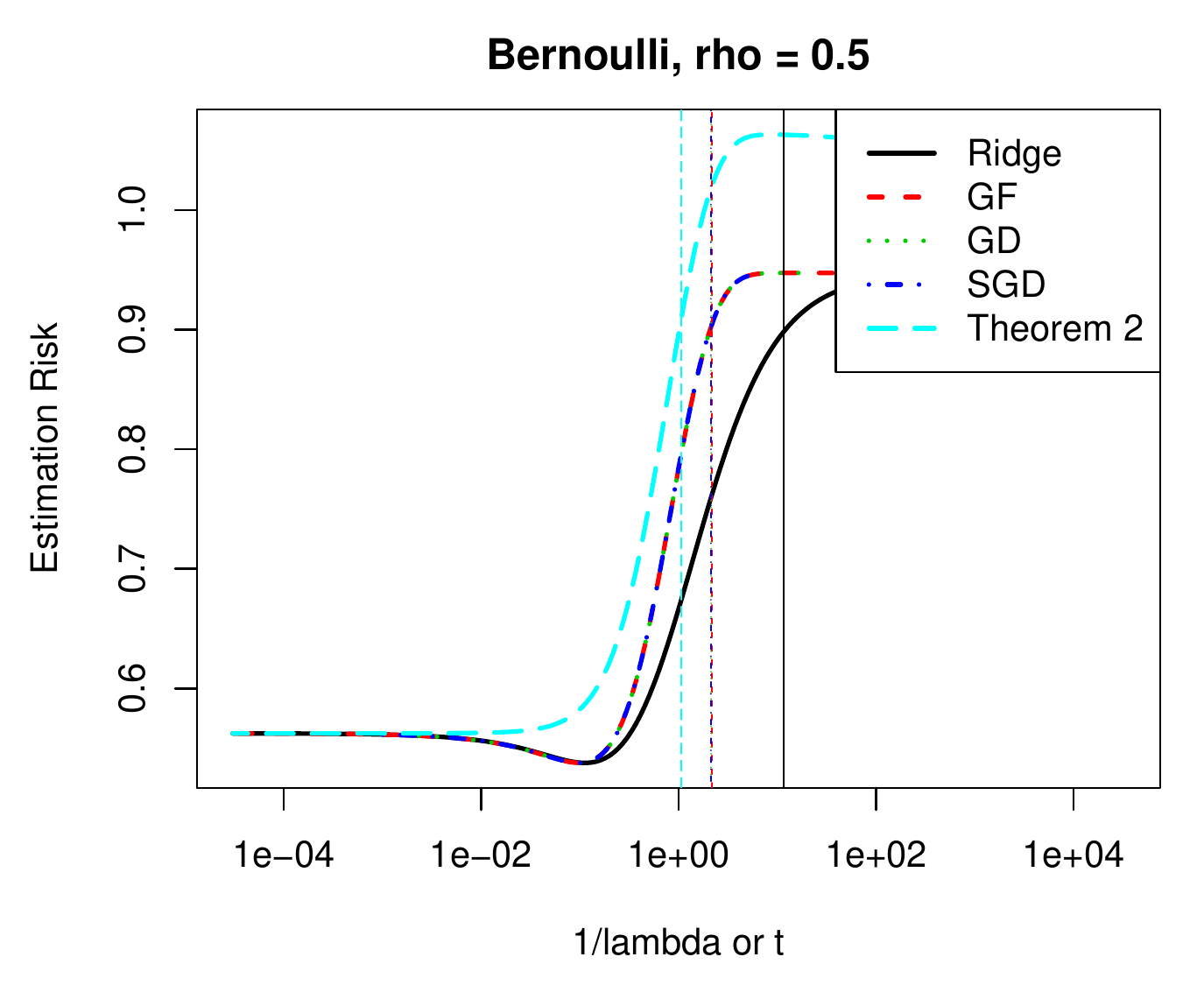}
}
\caption{\textit{Risks for ridge regression, discrete-time SGD, stochastic gradient flow (as in Theorem \ref{thm:risk}), discrete-time gradient descent, and gradient flow on Student-t and Bernoulli data, where $n=100$, $p=500$, $m=20$, and $\epsilon$ was set following Lemma \ref{dyna:exp:loss}.  The excess risk of stochastic gradient flow over ridge is given by the distance between the cyan and black curves.  The vertical lines show the stopping times that balance bias and variance.}}
\label{fig:risk:low}
\end{center}
\end{figure*}

\begin{figure*}[h!]
\begin{center}
\centerline{
\includegraphics[width=0.33\textwidth]{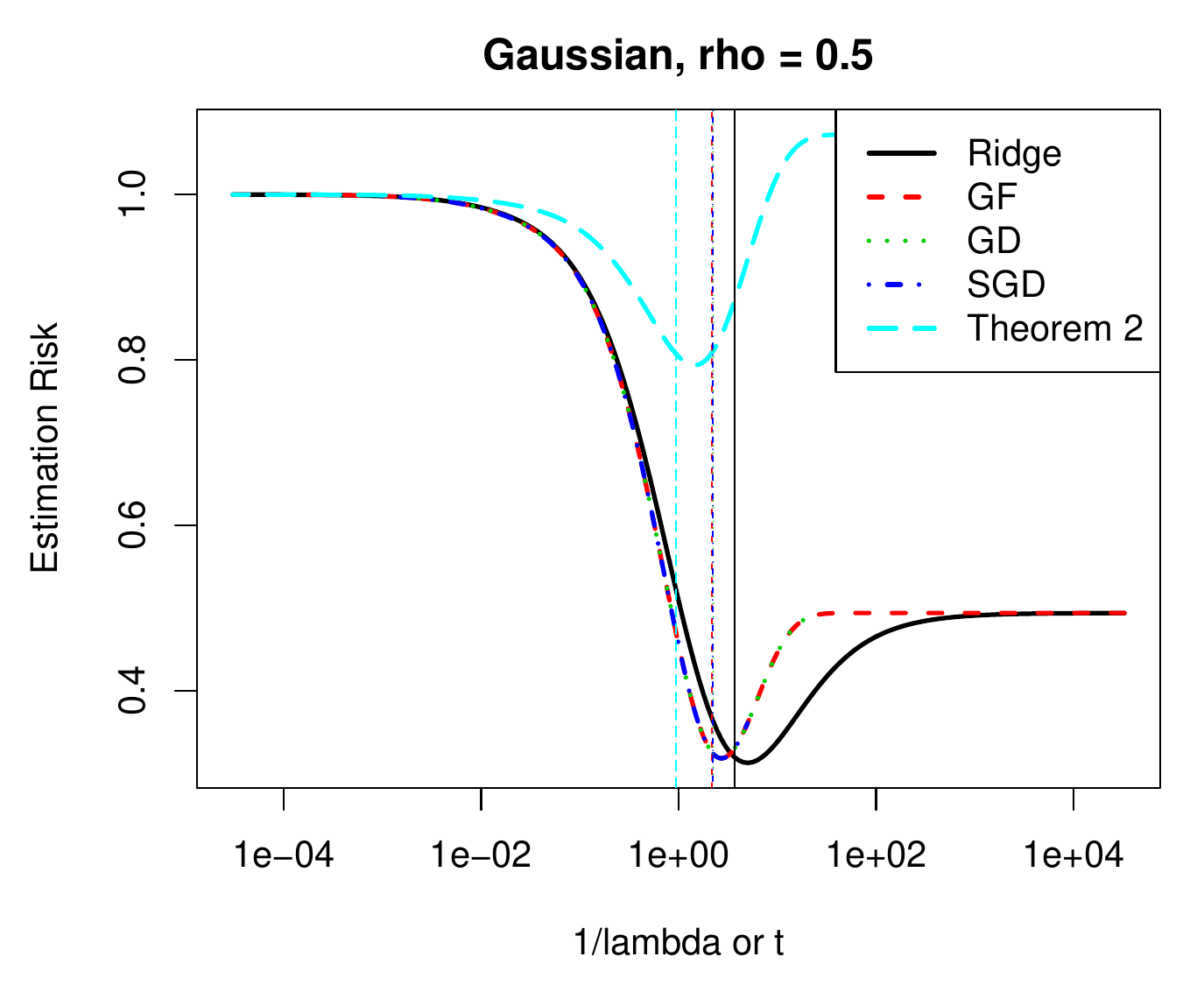} \hfill
\includegraphics[width=0.33\textwidth]{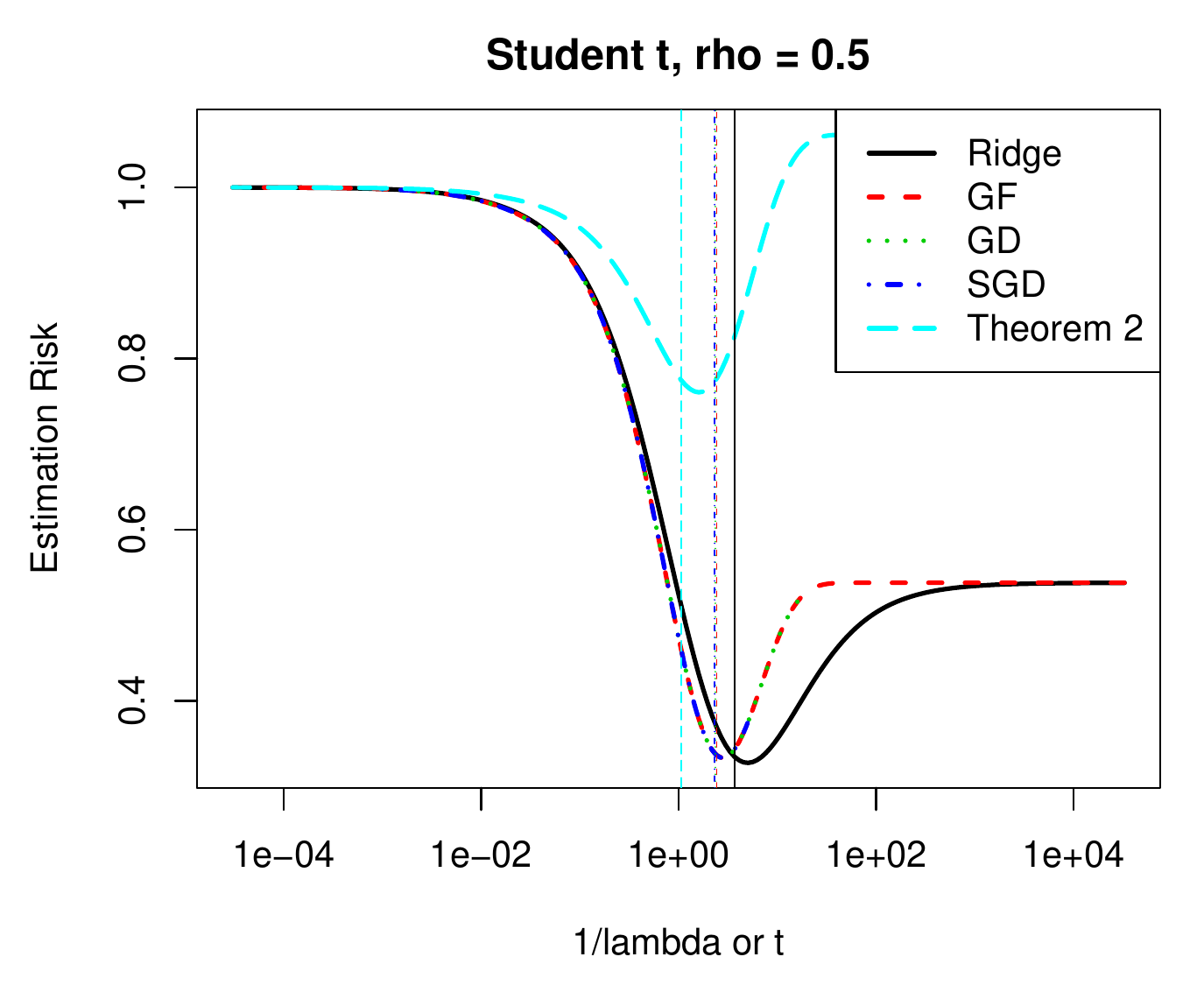} \hfill
\includegraphics[width=0.33\textwidth]{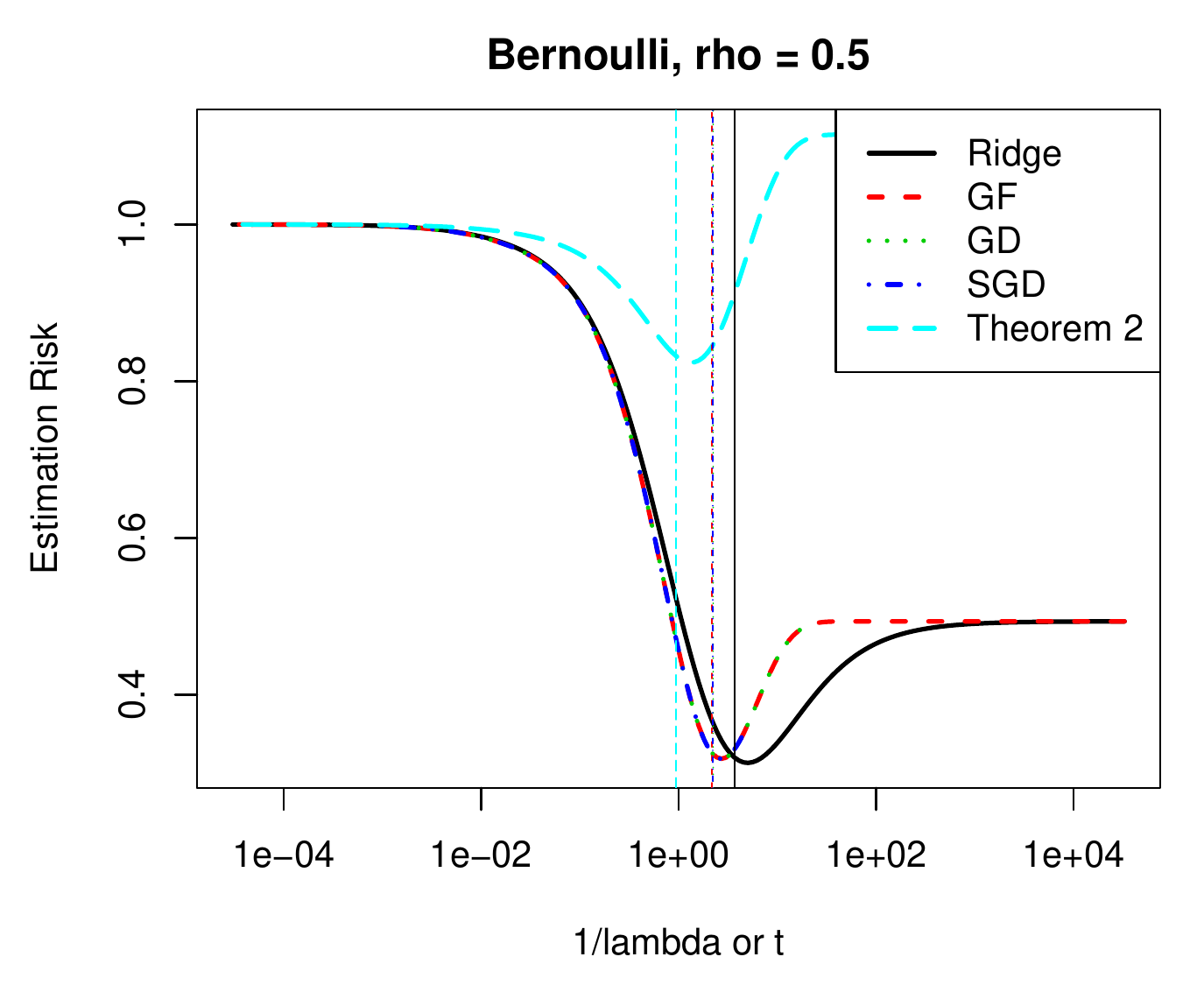}
}
\centerline{
\includegraphics[width=0.33\textwidth]{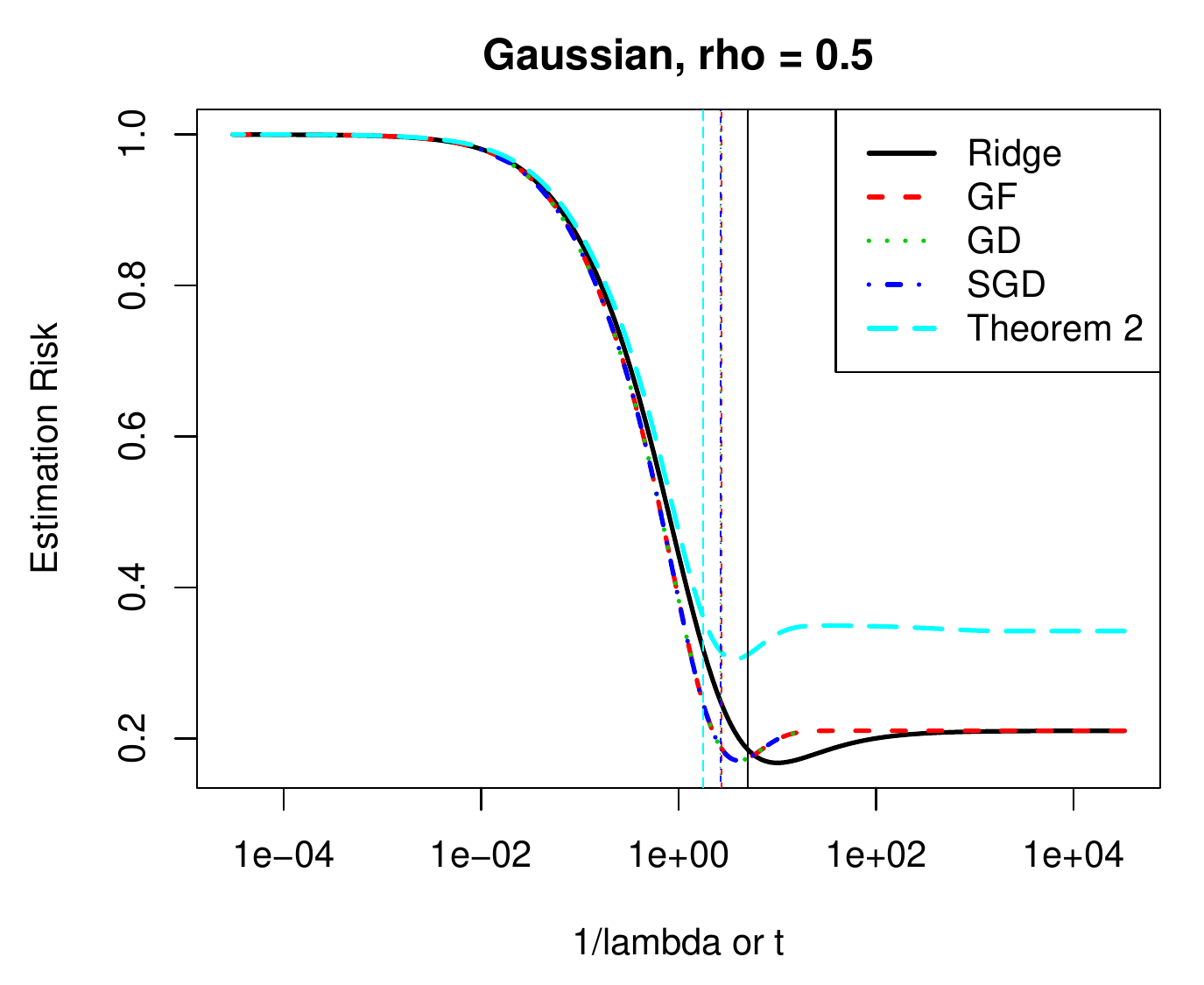} \hfill
\includegraphics[width=0.333\textwidth]{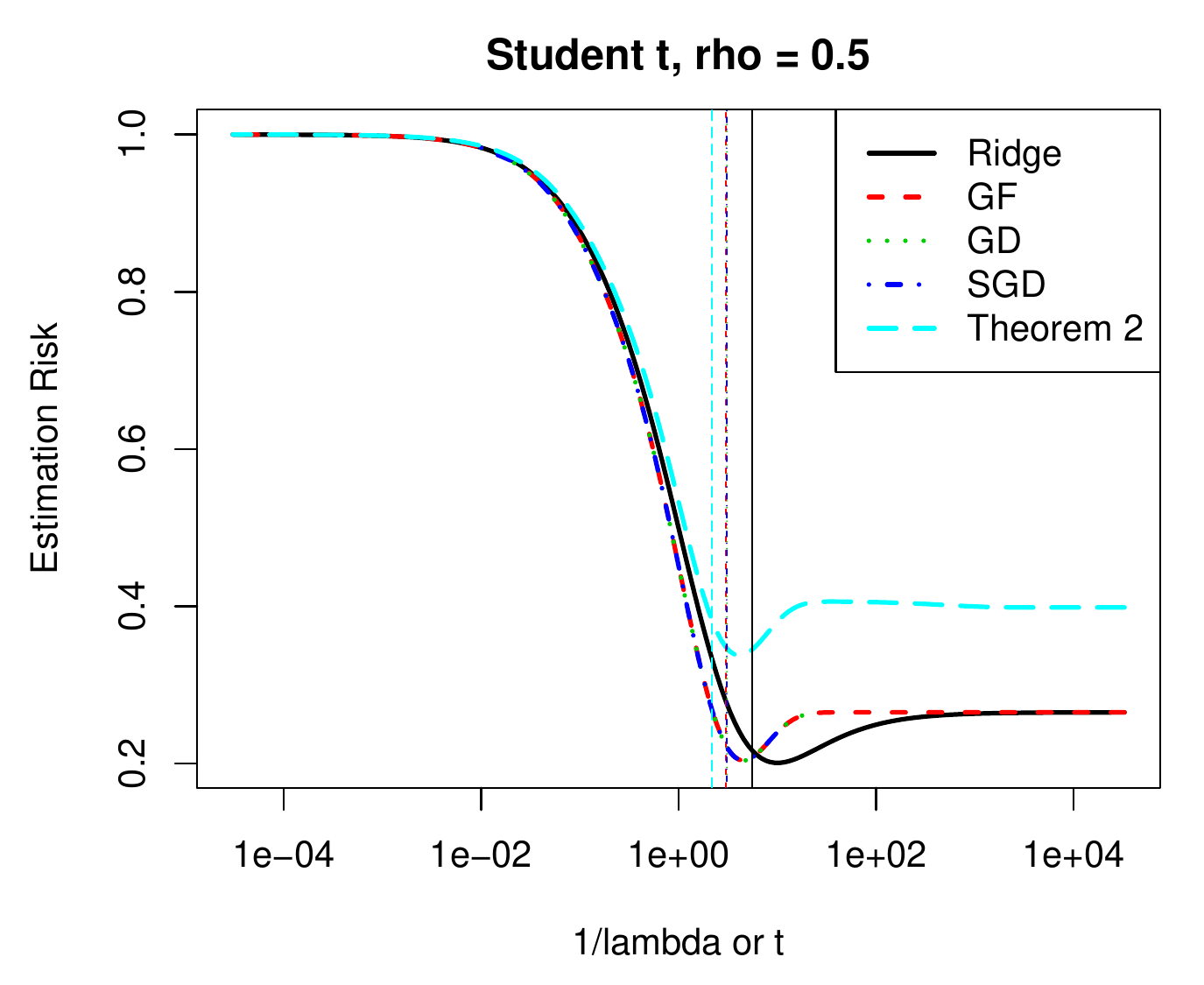}
\hfill
\includegraphics[width=0.33\textwidth]{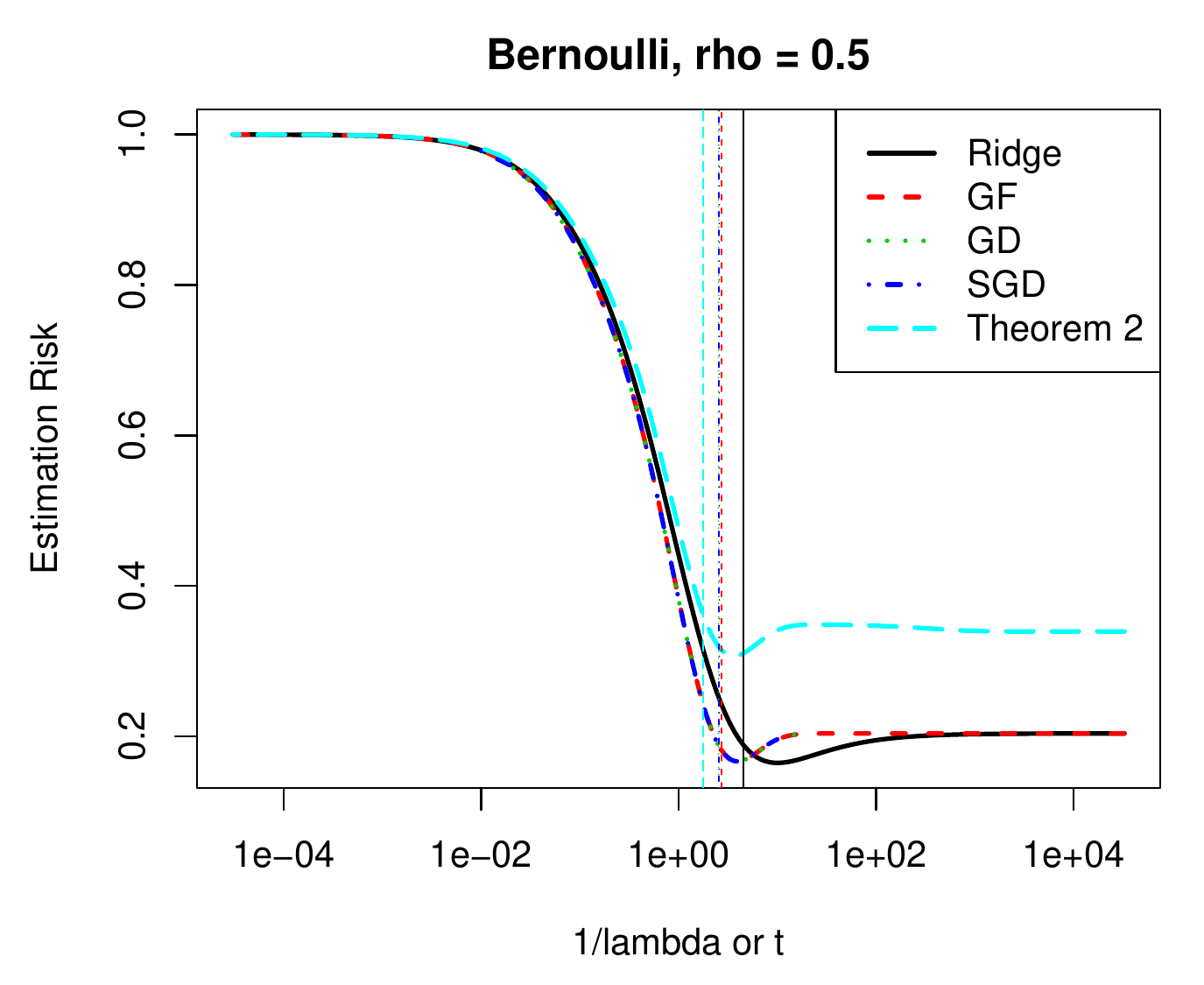}
}
\caption{\textit{Risks for ridge regression, discrete-time SGD, stochastic gradient flow (as in Theorem \ref{thm:risk}), discrete-time gradient descent, and gradient flow on Gaussian, Student-t, and Bernoulli data.  The excess risk of stochastic gradient flow over ridge is given by the distance between the cyan and black curves.  The vertical lines show the stopping times that balance bias and variance.  In the first row, we set $n=500$, $p=100$, $m=20$, and $\epsilon$ following Lemma \ref{dyna:exp:loss}.  In the second, we set $n=100$, $p=10$, $m=10$, and $\epsilon$ following Lemma \ref{dyna:exp:loss}.}}
\label{fig:risk:others}
\end{center}
\end{figure*}

\begin{figure*}[h!]
\begin{center}
\centerline{
\includegraphics[width=0.33\textwidth]{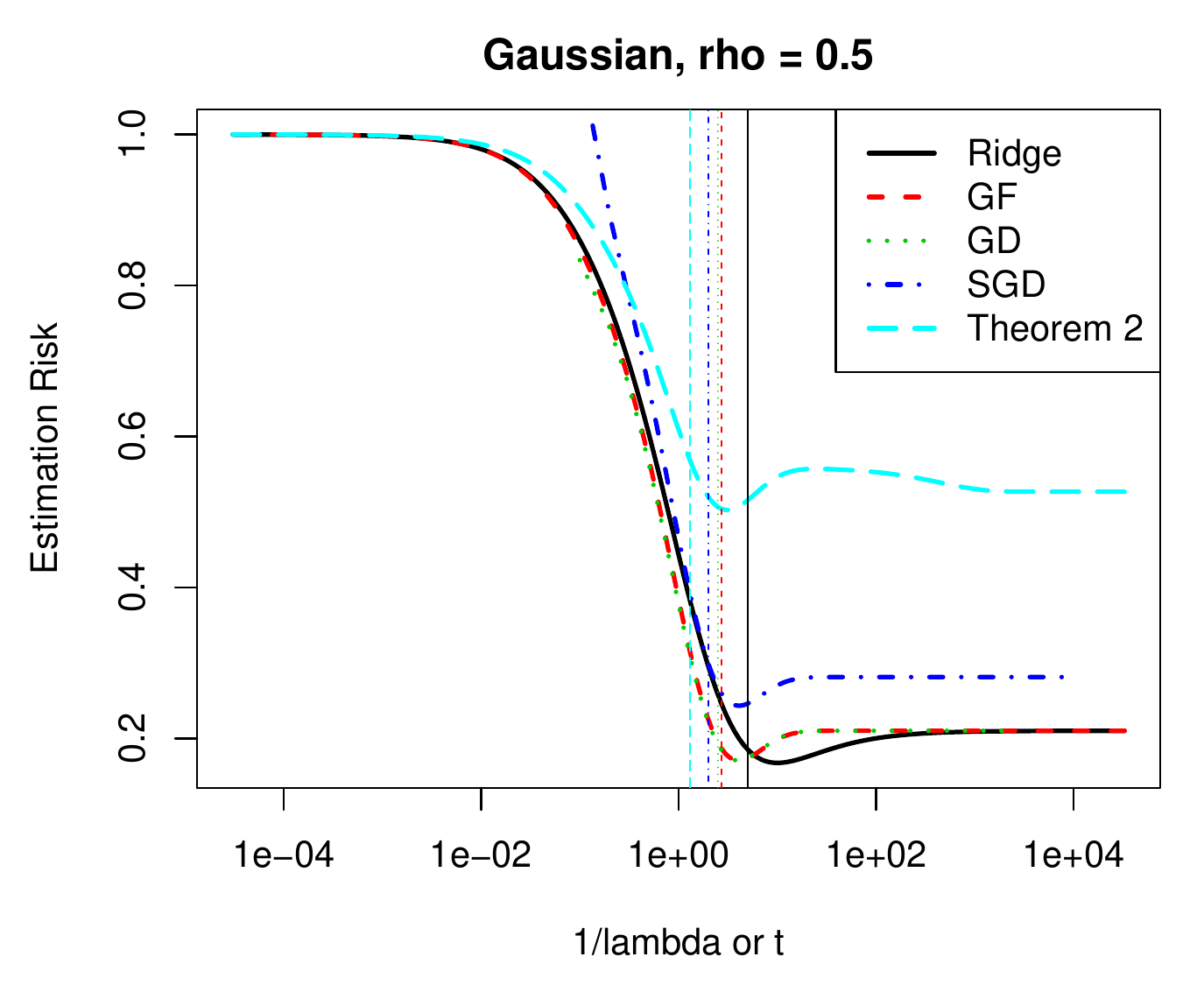} \hfill
\includegraphics[width=0.33\textwidth]{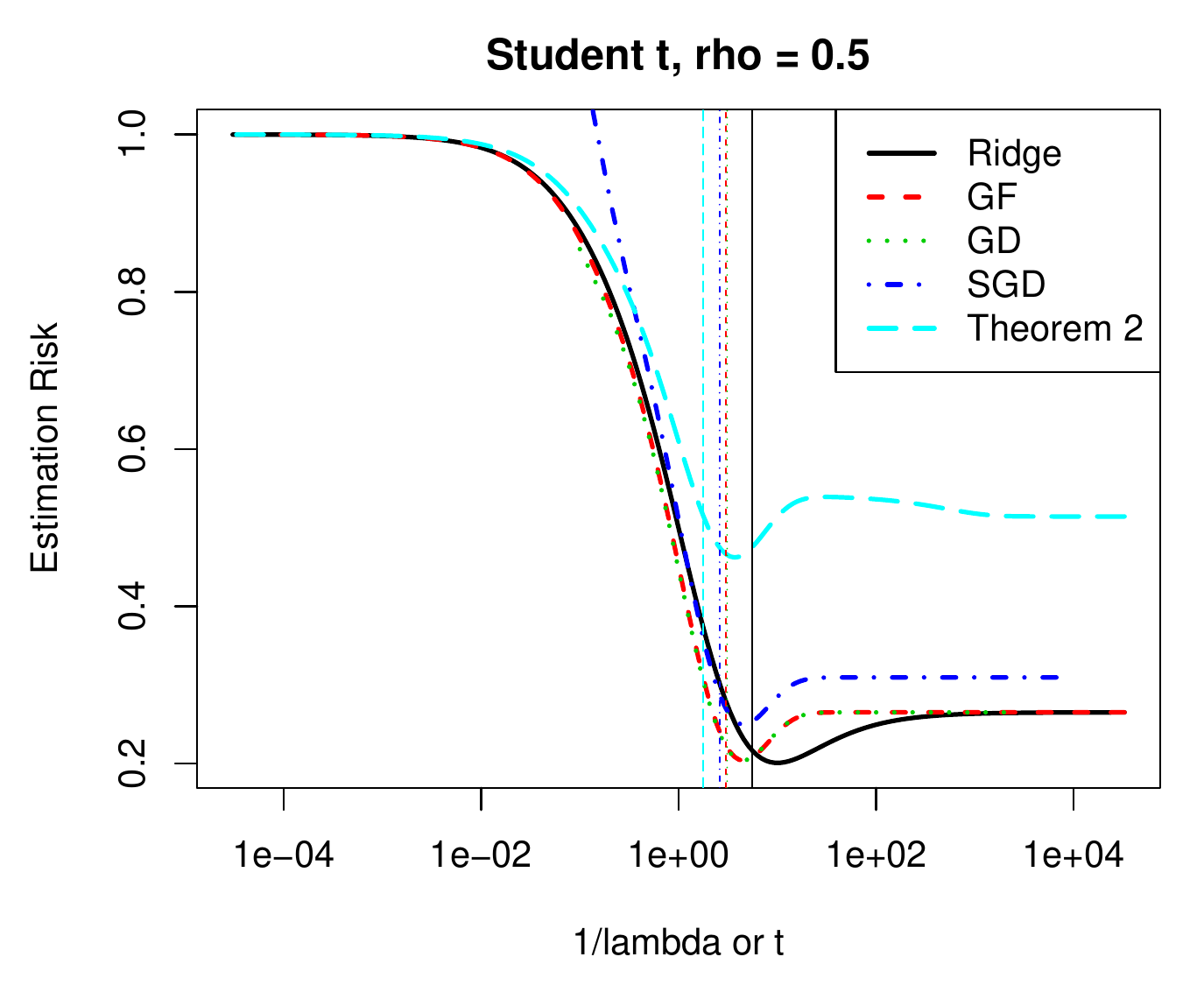} \hfill
\includegraphics[width=0.33\textwidth]{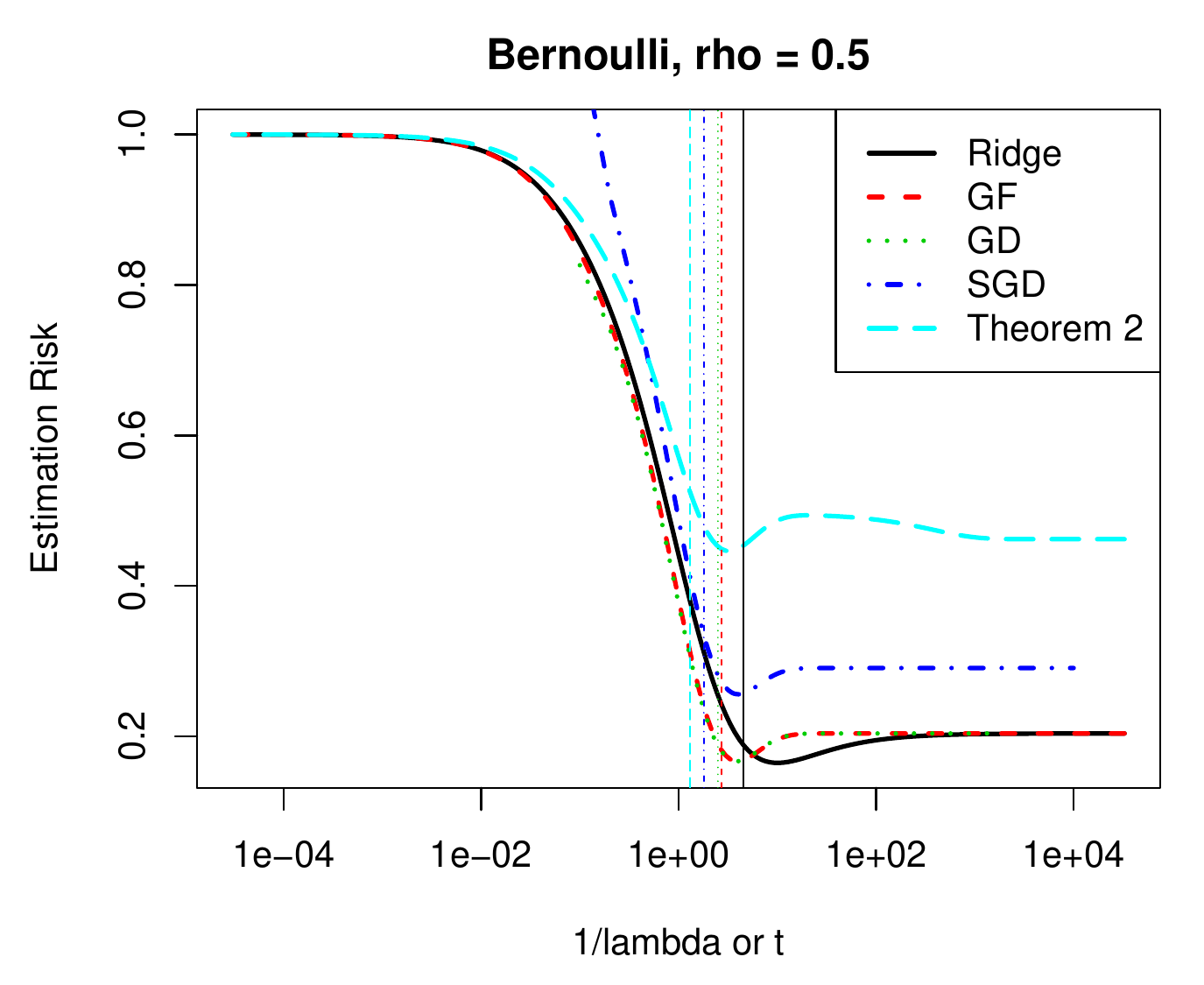}
}
\caption{\textit{Risks for ridge regression, discrete-time SGD, stochastic gradient flow (as in Theorem \ref{thm:risk}), discrete-time gradient descent, and gradient flow on Gaussian, Student-t, and Bernoulli data, where $n=100$, $p=10$, $m=2$, and $\epsilon=0.1$.  The excess risk of stochastic gradient flow over ridge is given by the distance between the cyan and black curves.  The vertical lines show the stopping times that balance bias and variance.}}
\label{fig:risk:others:eps}
\end{center}
\end{figure*}

\end{document}